\newcommand*\samethanks[1][\value{footnote}]{\footnotemark[#1]}
\title{Class-Weighted Classification: Trade-offs and Robust Approaches}
\author{Ziyu Xu\thanks{Machine Learning Department, Carnegie Mellon University, Pittsburgh, PA 15213.}
	\and 
	Chen Dan\samethanks
	\and 
	Justin Khim\samethanks
	\and
	Pradeep Ravikumar\samethanks
}
\begin{document}
	\maketitle
\begin{abstract}
	We address imbalanced classification, the problem in which a label may have low marginal probability relative to other labels, by weighting losses according to the correct class. 
	First, we examine the convergence rates of the expected excess weighted risk of plug-in classifiers where the weighting for the plug-in classifier and the risk may be different.
	This leads to irreducible errors that do not converge to the weighted Bayes risk, which motivates our consideration of robust risks.
	We define a robust risk that minimizes risk over a set of weightings and show excess risk bounds for this problem. 
	Finally, we show that particular choices of the weighting set leads to a special instance of conditional value at risk (CVaR) from stochastic programming, which we call label conditional value at risk (LCVaR).
	Additionally, we generalize this weighting to derive a new robust risk problem that we call label heterogeneous conditional value at risk (LHCVaR).
	Finally, we empirically demonstrate the efficacy of LCVaR and LHCVaR on improving class conditional risks.
\end{abstract}

\section{Introduction}
\label{sec:Intro}

Classification is a fundamental problem in statistics and machine learning, including scientific problems such as cancer diagnosis and satellite image processing as well as engineering applications such as credit card fraud detection, handwritten digit recognition, and text processing \citep{khan2001classification, lee2004cloud}, but modern applications have brought new challenges.
In online retailing, websites such as Amazon have hundreds of thousands or millions of products to taxonomize \citep{lin2018overview}.
 In text data, the distribution of words in documents has been observed to follow a power law in that there are many labels with few instances \citep{zipf1936psycho, feldman2019does}.
 Similarly, image data also a long tail of many classes with few examples \citep{salakhutdinov2011learning, zhu2014capturing}.
 In such settings, the classes with smaller probabilities are generally classified incorrectly more often, and this is undesirable when the smaller classes are important, such as rare forms of cancer, fraudulent credit card transactions, and expensive online purchases.
 Thus, we need modern classification methods that work well when there are a large number of classes and when the class-wise probabilities are imbalanced.

When faced with such class imbalance a popular approach in practice is to choose a metric other than zero-one accuracy, such as precision, recall, \(F_{\beta}\)-measure~ \citep{van1974foundation, van1979information}, which explicitly take class conditional risks into account, and train classifiers to optimize this metric. A difficulty with this approach however is that the right metric for imbalanced classification is often not clear. A related class of approaches keep the zero-one accuracy metric but modifies the samples instead. The popular algorithm SMOTE \citep{chawla2002smote} performs a type of data augmentation for a minority class, i.e., a class with lower probability, and sub-samples the large classes. This has led to variants with different forms of data augmentation \citep{zhou2006training, mariani2018bagan}, but from a theoretical perspective, these methods remain poorly understood.

A much simpler approach, which is also related to the approaches above, is class-weighting, in which different costs are incurred for mis-classifying samples of different labels.
Practically, this is a natural approach because it is often possible to assign different costs to different classes.
For example, the average fraudulent credit card transaction may cost hundreds of dollars, or in online retailing, failing to show a customer the correct item causes the company to lose out on the profit of selling that item.
Thus, a good classifier should be fairly sensitive to possibly fraudulent transactions, and online retailers should prioritize displaying high-profit products.
As a result, class-weighting has been studied in a variety of settings, including modifying black-box classifiers, SVMs, and neural networks \citep{domingos1999metacost, lin2002support, scott2012calibrated, zhou2006training}.
Additionally, class-weighting has been observed to be useful for estimating class probabilities, since class-weighting amounts to adjusting decision thresholds \citep{wang2008probability, wu2010robust, wang2019multiclass}.

A crucial caveat with cost-weighting however is the right choice of costs is often not clear, and with any one choice of costs, the performance of the corresponding classifier might suffer for some other, perhaps more suitable, choices of costs.

In this paper, we use cost-weighting for imbalanced classification in three ways. We start by examining a weighted sum of  class-conditional risks, i.e., the risks conditional on the class \(Y\) taking some specific value \(i\). This allows us to upweight a minority class to achieve better performance on the minority examples. We then provide an illuminating analysis of the fundamental tradeoffs that occur with any single choice of costs.

Since we may not understand precisely which weighting \(q\) to pick, we examine a robust risk that is a supremum of the weighted risks over an uncertainty set \(Q\) of possible weights. This objective can be interpreted as a class-wise distributionally robust optimization problem where we ask for robustness over the marginal distribution of \(Y\).
This leads to a minimax problem, for which we provide generalization guarantees. We also note that a standard gradient descent-ascent algorithm may solve the optimization problem when the risk is convex in the classifier parameters.

Finally, we show that for a natural class of uncertainty sets, the robust risk reduces to what call label conditional value at risk (LCVaR). We highlight a connection to conditional value at risk (CVaR), which is a well-studied quantity in portfolio optimization and stochastic programming parametrized by an \(\alpha\) in \((0, 1)\) \citep{rockafellar2000optimization, shapiro2009lectures}. 
Further, we propose a generalization that we call label heterogeneous conditional value at risk (LHCVaR) that allows for different parameters \(\alpha_{i}\) for each class \(i\).
To the best of our knowledge, this has not been examined previously, and it could possibly be used more broadly. 
To give an example in portfolio optimization, we may wish to treat risks arising from different types of assets, e.g., large-cap stocks versus small-cap stocks or domestic debt versus international debt, differently.
Next, we show that the dual form for LHCVaR is similar to that for LCVaR as long as the heterogeneity is finite-dimensional, and this leads to an unconstrained optimization problem.
Finally, we examine the efficacy of  LCVaR, and LHCVaR on real and synthetic data.

The rest of the paper is outlined as follows.
In Section~\ref{sec:Setup}, we discuss our problem setup.
In Section~\ref{sec:PluginClassification}, we examine weighting in plug-in classification. In particular, we elucidate the fundamental trade-off in weighted classification and its methodological implications.
In Section~\ref{sec:RobustProblem}, we examine a robust version of the weighted risk problem, including generalization guarantees and connections to stochastic programming.
In Section~\ref{sec:NumericalResults}, we provide numerical results, and we conclude with a discussion in Section~\ref{sec:Discussion}.
Additional proofs and results in related settings are deferred to the appendices.

\subsection{Further Related Work}

We briefly review other research related to imbalanced classification, but for a far more exhaustive treatment, see a survey of the area \citep{he2009learning, fernandez2018learning}.
First, two other methods may be employed to solve imbalanced classification problems.
The first is class-based margin adjustment \citep{lin2002support, scott2012calibrated, cao2019learning}, in which the margin parameter for the margin loss function may vary by class.
Broadly, margin adjustment and weighting may both be considered loss modification procedures.
The second method is Neyman-Pearson classification, in which one attempts to minimize the error on one class given a constraint on the worst permissible error on the other class \citep{rigollet2011neyman, tong2013plug, tong2016survey}. 

An important topic related to our paper but that has not been well-connected to imbalanced classification is robust optimization.
Robust optimization is a well-studied topic
\citep{ben1999robust, ben2003robust, ben2004adjustable, ben2009}.
A variant that has gained traction more recently is distributionally robust optimization \citep{ben2013, bertsimas2014, namkoong2017variance}.
Unsurprisingly, CVaR, as a coherent risk measure, has been previously connected to distributionally robust optimization \citep{goh2010distributionally}.
Distributionally robust optimization generally and CVaR specifically have also previously been used in machine learning to deal with imbalance \citep{duchi2018mixture, duchi2018learning}, but in these works, the imbalance was considered to exist in the covariates, whether known to the algorithm or not.
These are motivated by the recent push toward fairness in machine learning, in particular so that ethnic minorities  do not suffer discrimination in high-stakes situations such as loan applications, medical diagnoses, or parole decisions, due to biases in the data.

\section{Preliminaries}
\label{sec:Setup}

\subsection{Classification with Imbalanced Classes}
In this section, we briefly go over the problem setup.
First, we draw samples from the space \(\zspace = \xspace \times \yspace\).
For our purposes, we are interested in \(\yspace = \{0, 1\}\) or  \(\yspace = \{1, \ldots, k\}\).
Note there are two slightly different mechanisms for the data-generating process that are considered in imbalanced classification and Neyman-Pearson classification.
In the first, we are given \(n\) i.i.d.\ samples \((X_{1}, Y_{1}), \ldots, (X_{n}, Y_{n})\) from a distribution \(P_{X,Y}\).
Here, we let \(p_{i} = \prob\left(Y = i\right)\) be the probability of class \(i\).
Additionally, we sometimes refer to the vector of class probabilities as \(p\).
This is our framework of interest, since it corresponds to standard assumptions in nonparametric statistics and learning theory.
In the alternative framework, we are given \(n_{i}\) samples \((X_{1}, i), \ldots, (X_{n_{i}}, i)\) from each marginal distribution \(P_{X|Y = i}\). 
The probability of class \(i\) in this case is then known: \(p_{i} = \hat{p}_{i} = n_{i} / n\).
For the most part, these two mechanisms yield similar results, but the analyses differ slightly.
To streamline the presentation, we only consider the first case in the main paper, although we give a result for the alternative framework in the appendix that illustrates the difference.

\subsection{Class Conditioned Risk}

We are interested in finding a good classifier \(f: \xspace \to \mathcal{D} \supseteq \yspace\) in some function space \(\functions\), such as linear classifiers or neural networks.
In this section, we establish our risk measures of interest.
In general, we want to minimize the expectation of some loss function \(\ell: \functions \times \zspace \to [0, 1]\), which we call risk and denote \(\risk(f) = \expect[\ell(f, Z)].\)
Analogously, we define the class-conditioned risk for class \(i\) to be
\begin{equation*}
\risk_{\ell, i}(f)
=
\expect\left[\ell(f, Z) | Y = i\right].
\label{eqn:ClassConditionedRisk}
\end{equation*}
At this point, we make some observations for plug-in classification and empirical risk minimization.
In the plug-in classification results, we consider the zero-one loss \(\lzo(f, z) = \ind\{f(x) \neq y\}\), and
for our results on empirical risk minimization, we are primarily interested in convex surrogate losses. For simplicity, when \(\ell\) is clear from context, or a statement is made for a generic \(\ell\), we will denote this as \(R_{i}\).

Now, we can work toward defining weighted risks. We defined
Observe that we can relate the risk to the class-conditioned risk by
\(
\risk(f)
=
\expect 
\left[R_{Y}(f)\right]
=
\sum_{i \in \yspace} p_{i} \risk_{i}(f).
\)
An important part of our paper is an examination of class-weighted risk.
\begin{definition}
Let \(q = (q_{1}, \ldots, q_{|\yspace|})\) be a vector such that \(q_{i} \geq 0\) for all \(i\) and \(\expect[q_{Y}] = \sum_{i \in \yspace} q_{i} p_{i} = 1\).
Then, the \(q\)-weighted risk is
\[
R_{q}(f) 
= 
\expect 
\left[q_{Y} R_{Y}(f)\right]
=
\sum_{i \in \yspace} q_{i} p_{i} R_{i}(f).
\]
\label{def:ClassWeightedRisk}
\end{definition}
Note that the usual risk is recovered by setting \(q = (1, \ldots, 1)\).

\subsection{Plug-in Classification}

In this section, we discuss weighted plug-in classification.
For plug-in, we restrict our attention to the binary classification case of \(\yspace = \{0, 1\}\), and the primary quantity of interest is usually the one-zero risk \(\risk_{01}(f)\) i.e the risk under \(\ell_{0, 1}\).
In general, the risk for the best classifier is nonzero because for a given \(x\) in \(\xspace\), there is some probability it may take the value \(0\) or \(1\).

As a result, we need a way to discuss the convergence of our estimator to the best possible estimator.
We define the regression function \(\eta\) by 
\(
\eta(x) = \prob\left(Y = 1| X = x\right).
\)
Now, the Bayes optimal classifier is the classifier that minimizes the risk, and it is defined by
\(
f^{*}(x) 
= 
\ind\left\{\eta(x) > 1/2\right\}.
\)
The minimum possible risk is called the Bayes risk and denoted by \(\risk^{*} = \risk(f^{*})\), and generally we focus on minimizing the excess risk \(\excess(f) = \risk(f) - \risk^{*}\).

Following the form of the Bayes classifier, a plug-in estimator \(\hat{f}\) attempts to estimate the regression function \(\eta\) by some $\hat{\eta}$ and then ``plugs in'' the result to a threshold function.
Thus, \(\hat{f}\) has the form
\(
\hat{f}(x) = \ind\left\{\hat{\eta}(x) > 1/2\right\},
\)
which is analogous to the form of the Bayes classifier.
For additional background on plug-in estimation, see, e.g., \cite{devroye1996probabilistic}.

At this point, we wish to define the weighted versions of Bayes classifier, Bayes risk, plug-in classifier, and excess risk.
For brevity, define the threshold \(t_{q} = q_{0} / (q_{0} + q_{1})\).
First, we consider the Bayes classifier.
\begin{lemma}
Let \(q = (q_{0}, q_{1})\) be a weighting.
The Bayes optimal classifier for $q$-weighted risk is
\(
f_{q}^{*}(x) 
= 
\ind\left\{\eta(x) > t_{q}\right\}.
\)
\label{lemma:WeightedBayesClassifier}
\end{lemma}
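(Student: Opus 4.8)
The plan is to reduce the minimization of the $q$-weighted risk to a pointwise minimization over each input $x$, mirroring the classical derivation of the unweighted Bayes classifier. First I would rewrite the risk in a convenient form. Using Definition~\ref{def:ClassWeightedRisk} together with the tower property, the $q$-weighted zero-one risk becomes
\[
R_{q}(f) = \sum_{i \in \yspace} q_{i} p_{i} R_{i}(f) = \expect\left[q_{Y}\, \lzo(f, Z)\right] = \expect\left[q_{Y}\, \ind\{f(X) \neq Y\}\right],
\]
where $q_{Y}$ denotes the (random) weight attached to the realized label. The key observation is that the integrand depends on $f$ only through the value $f(X) \in \{0, 1\}$, so the minimization decouples across $x$ and the overall normalization $\expect[q_{Y}] = 1$ plays no role in the minimizer.

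Next I would condition on $X = x$. Since $\prob(Y = 1 \mid X = x) = \eta(x)$ and $\prob(Y = 0 \mid X = x) = 1 - \eta(x)$, the conditional contribution to the risk is
\[
\expect\left[q_{Y}\, \ind\{f(x) \neq Y\} \mid X = x\right] = q_{0}\left(1 - \eta(x)\right)\ind\{f(x) = 1\} + q_{1}\, \eta(x)\, \ind\{f(x) = 0\}.
\]
In words, predicting $f(x) = 1$ incurs the cost $q_{0}(1 - \eta(x))$, the penalty for the true label being $0$, while predicting $f(x) = 0$ incurs the cost $q_{1}\, \eta(x)$. Because $R_{q}(f)$ is the $X$-average of this conditional cost and each $x$ may be optimized independently over measurable classifiers, it suffices to pick, for each $x$, whichever of the two values is smaller.

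Finally I would compare the two costs: choosing $f(x) = 1$ is (weakly) preferable exactly when $q_{0}(1 - \eta(x)) \leq q_{1}\, \eta(x)$, which rearranges to $q_{0} \leq (q_{0} + q_{1})\eta(x)$, i.e.\ $\eta(x) \geq t_{q}$ with $t_{q} = q_{0}/(q_{0} + q_{1})$. Adopting the strict-inequality tie-breaking convention then yields $f_{q}^{*}(x) = \ind\{\eta(x) > t_{q}\}$, as claimed. I expect the only genuine subtlety to be the justification of the decoupled pointwise minimization, namely arguing that a function achieving the pointwise minimum for almost every $x$ is itself measurable and hence a legitimate classifier, together with the harmless choice of tie-breaking when $\eta(x) = t_{q}$, a set on which the conditional risk is the same for both predictions. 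The threshold computation itself is routine algebra.
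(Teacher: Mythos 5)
Your proposal is correct and follows essentially the same route as the paper's proof: both reduce the $q$-weighted zero-one risk via the tower property to the pointwise comparison of $q_{0}(1-\eta(x))$ against $q_{1}\,\eta(x)$, note that ties at $\eta(x) = t_{q}$ do not affect the risk, and rearrange to the threshold form. Your added remarks on measurability of the pointwise minimizer and the tie-breaking convention are sound refinements the paper leaves implicit.
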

The proof, along with proofs of other subsequent results on plug-in classification, appears in the appendix.
In this case, we denote the Bayes risk by \(\risk_{q}^{*} = \risk_{q}(f_q^{*})\).
Lemma~\ref{lemma:WeightedBayesClassifier} reveals that the Bayes classifier is a plug-in rule, and analogously, we see that a plug-in estimator in the weighted case takes the form
\(
\hat{f}_{q}(x) = \ind\left\{\hat{\eta}(x) > t_q\right\}.
\)
Consequently, we define excess \(q\)-risk for an empirical classifier \(\hat{f}\).
	The excess \(q\)-risk for an empirical classifier is
	\(
	\excess_q(\hat{f}) = \risk_q(\hat{f}) - R_q^{*},
	\)
and note that we are interested in bounding the expected excess \(q\)-risk for plug-in estimators.
\subsection{Empirical Risk Minimization}

In this section, we define empirical quantities that we need for empirical risk minimization, particularly the weighted and robust risks.
We consider \(\yspace = \{1, \ldots, k\}\).
We define the empirical class-conditioned risk by \(\hat{\risk}_{i} = (1 / N_{i}) \sum_{j = 1}^{n} \ell(f, z_{j}) \ind\left\{y_{i} = i\right\}\) where \(N_{i} = \sum_{j = 1}^{n} \ind\left\{y_{j} = {i}\right\} \).
Let \(\hat{p}_{i} = N_{i} / n\) denote the empirical proportion of observations of class \(i\), and let \(q\) be a weight vector.
The empirical \(q\)-weighted risk is
\begin{equation*}
\hat{\risk}_{q}(f)
=
\sum_{i = 1}^{k} q_{i} \hat{p}_{i} \hat{R}_{i}(f).
\label{eqn:EmpiricalqWeightedRisk}
\end{equation*}
The empirical \(Q\)-weighted risk is defined analogously by \(\hat{\risk}_{Q} = \sup_{q \in Q} \hat{\risk}_{q}(f).\)
This problem is convex in \(f\) when the loss \(\ell\) is convex and concave in \(q\) due to linearity; so one may solve the resulting saddle-point problem with standard techniques such as gradient descent-ascent, which we give in the appendix.

Often in empirical risk minimization, generalization bounds are provided, i.e., a bound on the true risk of a classifier \(f\) in \(\functions\) in terms of its empirical risk and a variance term.
To bring our results closer to those of plug-in estimation, we also consider a form of excess risk.
To distinguish the two, define the excess \((\functions, Q)\)-weighted risk to be \(\excess_{Q}(\functions) = \risk_{Q}(\hat{f}) - \risk_{Q}(f^{*}_{Q})\) where here \(\hat{f}\) is the \(Q\)-weighted empirical risk minimizer in \(\functions\) and \(f^{*}_{Q}\) is the population \(Q\)-weighted risk minimizer in \(\functions\).
Beyond the robust formulation, the key difference between excess \(q\)-weighted risk and excess \((\functions, Q)\)-weighted risk is that in the former we compete with the true regression function, and in the latter we compete with the best classifier in \(\functions\).

One additional tool we need for empirical risk minimization is a measure of function class complexity, and a typical measure of the expressiveness of a function class is Rademacher complexity.
The empirical Rademacher complexity given a sample \((X_{1}, Y_{1}), \ldots, (X_{n}, Y_{n})\) is
\[
\hat{\rademacher}_{n}(\functions)
=
\expect_{\sigma}
\sup_{f \in \functions} \sum_{i = 1}^{n} \sigma_{i} f(X_{i}),
\]
where the expectation is taken with respect to the  \(\sigma_{i}\), which are Rademacher random variables.
The Rademacher complexity is \(\rademacher_{n}(\functions) = \expect \hat{\rademacher}_{n}(\functions)
\), where the expectation is with respect to the \(X_{i}\) random variables.
 
Finally, we make one note about the loss for our empirical risk minimization results.
For binary classification, one can obtain bounds for any bounded loss function that is Lipschitz continuous in \(f(x)\).
Since we present multiclass results, we use the multiclass margin loss, which is a bounded version of the multiclass hinge loss \citep{mohri2012}.
Here, it is assumed that for each \(i\) in \(\yspace\), the function \(f\) outputs a score \(f_{i}(x)\), and the chosen class is \(\argmax_{i \in \yspace} f_{i}(x)\).
The multiclass margin loss is defined as 
\(
\marginloss(f, z)
=
\Phi\left(
f_{y}(x) - \max_{y' \neq y} f_{y'}(x)
\right)
\)
where \(\Phi(a) = \ind\left\{a \leq 0\right\} + (1 - a)\ind\left\{0 < a \leq 1\right\}\).
For simplicity, we ignore the margin parameter, usually denoted by \(\rho\), and treat it as \(1\) in our results.
Finally, we define
the projection set 
\(
\Pi_{1}(\functions)
=
\left\{x \mapsto f_{y}(x): y \in \yspace, f \in \functions\right\}.
\)


\section{Tradeoffs with Class Weighted Risk}
\label{sec:PluginClassification}

In this section, we examine weighted plug-in classification, and we have two main results. 
First, we show that weighted plug-in classification enjoys essentially the same rate of convergence as unweighted plug-in classification, although there is dependence on the chosen weights.
Second, there is a fundamental trade-off in that optimizing for one set of weights \(q\) may lead to suboptimal performance for another set of weights \(q'\).


\subsection{Excess Risk Bounds}

We start with the excess risk bound for plug-in estimators when the weighting is well-specified.

\begin{proposition}
Suppose the regression function $\eta$ is $\beta$-H\"older.
Then, the $q$-weighted excess risk of $\hat{f}_q$ satisfies
\label{theorem:WeightedRiskConvergence}
\begin{align*}
    \expect\excess_q(\hat{f}_q) \leq O\left((q_0 + q_1) n^{-\frac{\beta}{2\beta + d }}\right).
\end{align*}
\label{prop:ExcessqRisk}
\end{proposition}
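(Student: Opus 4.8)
The plan is to reduce the weighted excess risk to the estimation error of the regression function $\hat{\eta}$ and then invoke the standard nonparametric regression rate for $\beta$-H\"older functions; the factor $(q_0 + q_1)$ will emerge from the weighted threshold.

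First I would derive the weighted analogue of the classical pointwise excess-risk identity. Conditioning on $X = x$, the $q$-weighted cost of predicting label $1$ is $q_0(1 - \eta(x))$ and of predicting $0$ is $q_1 \eta(x)$, so the weighted Bayes rule $f_q^*$ of Lemma~\ref{lemma:WeightedBayesClassifier} predicts $1$ exactly when $\eta(x) > t_q$. Subtracting the cost of $f_q^*$ from that of an arbitrary $f$, one finds that on any disagreement event the excess cost equals $(q_0 + q_1)\,|\eta(x) - t_q|$, since $q_1\eta(x) - q_0(1 - \eta(x)) = (q_0 + q_1)(\eta(x) - t_q)$. Integrating gives
\[
\excess_q(f) = (q_0 + q_1)\,\expect\left[\,|\eta(X) - t_q|\,\ind\{f(X) \neq f_q^{*}(X)\}\,\right],
\]
which specializes to the familiar $\expect\!\left[|2\eta(X) - 1|\,\ind\{f(X) \neq f^{*}(X)\}\right]$ when $q = (1, 1)$ and $t_q = 1/2$. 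This identity isolates the $(q_0 + q_1)$ factor appearing in the stated bound.

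Next I would exploit the plug-in structure. On the event $\hat{f}_q(X) \neq f_q^{*}(X)$, the estimate $\hat{\eta}(X)$ and the truth $\eta(X)$ lie on opposite sides of the threshold $t_q$, so $|\eta(X) - t_q| \leq |\hat{\eta}(X) - \eta(X)|$. Substituting this and dropping the indicator yields
\[
\excess_q(\hat{f}_q) \leq (q_0 + q_1)\,\expect\left[\,|\hat{\eta}(X) - \eta(X)|\,\right].
\]
Taking the expectation over the training sample and applying Jensen's inequality bounds the right-hand side by $(q_0 + q_1)\big(\expect\,|\hat{\eta}(X) - \eta(X)|^2\big)^{1/2}$. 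Finally, invoking the minimax rate for nonparametric regression over $\beta$-H\"older classes—achieved by a standard estimator $\hat{\eta}$ such as a local polynomial estimator—gives $\expect\,|\hat{\eta}(X) - \eta(X)|^2 = O\!\left(n^{-2\beta/(2\beta + d)}\right)$, and taking the square root produces the claimed $O\!\left((q_0 + q_1)\,n^{-\beta/(2\beta + d)}\right)$.

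I do not expect a serious obstacle here: once the weighted excess-risk identity is in hand, the argument is an assembly of standard pieces. The only real bookkeeping is tracking the $(q_0 + q_1)$ factor correctly through the threshold computation, and noting that the rate obtained is the ``slow'' rate consistent with assuming only H\"older smoothness (no Tsybakov margin condition), which matches the unweighted baseline up to the weight-dependent constant. I would make explicit in the proof which regression estimator attains the cited $L^2$ rate so that the invocation of the nonparametric minimax bound is self-contained.
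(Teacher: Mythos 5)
Your proposal is correct and follows essentially the same route as the paper: the same weighted pointwise excess-risk identity with the $(q_0+q_1)\,|\eta(x)-t_q|$ factor, the same key observation that $|\eta(x)-t_q|\leq|\hat{\eta}(x)-\eta(x)|$ on the disagreement event, and the same reduction to $\expect\left[\norm{\hat{\eta}-\eta}_{L_1(P_X)}\right]$. The only (immaterial) difference is at the last step, where you pass through the $L_2$ minimax rate via Jensen's inequality, whereas the paper invokes the $L_1(P_X)$ minimax rate for $\beta$-H\"older classes directly (Lemma~\ref{lemma:YangMinimax}, from \citealt{yang1999minimax}).
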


Here, we see that the upper bound depends linearly on \(q_{0}\) and \(q_{1}\).
This implies that when we increase the weight for a class with few examples, then our bound on the excess risk increases. While previous cost weighting setups have normalized the sum of weights \cite{scott2012calibrated}, our normalization scheme is computed with respect to prior probabilities on each class as well, and consequently we explicitly include \(q_0, q_1\) in our bound. Our choice of domain for weights is defined in \cref{sec:RobustProblem}.

Now, we turn to our second task: examining the weighted excess risk of the $\hat{f}_q$ under a different weighting $q'$.
Observe that we can decompose the excess risk as
\begin{align}
    \expect \excess_{q'}(\hat{f}_q) 
    &= 
    \underset{\text{estimation error}}{\underbrace{\expect R_{q'}(\hat{f}_q) - R_{q'}(f^*_q)}} + \underset{\text{irreducible error}}{\underbrace{R_{q'}(f^*_q) - R_{q'}(f^*_{q'})}} \nonumber \\
    &=: 
    \estimation + \irreducible.
    \label{eqn:AltWeightDecompositon}
\end{align}
\begin{figure}[t]
    \centering
    \includegraphics[width=\columnwidth	]{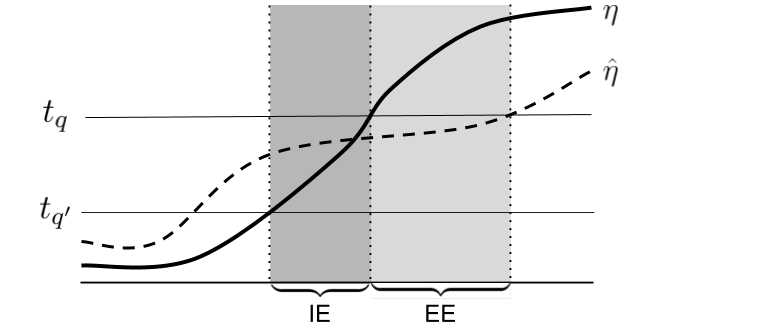}
    \caption{The irreducible error (IE) and estimation error (EE). 
    The irreducible error is the measure of the set of \(x\) where $\eta(x)$ is between thresholds of $q'$ and $q$, which does not depend on $\hat{\eta}$. 
    The estimation error is the measure of the \(x\) for which $\hat{\eta}(x)$ and $\eta(x)$ lead to different plug-in estimates.}
    \label{fig:ExcessqprimeRiskDiagram}
\end{figure}

Unsurprisingly, we see that an error term that is constant, or ''irreducible'' appears in equation~\eqref{eqn:AltWeightDecompositon}. 
Then, we see the irreducible error is given by the measure of the subset of \(\xspace\) where \(\eta(x)\) lies between \(t_{q}\) and \(t_{q'}\). Given that we know the Bayes optimal classifier for any weighting, we observe that the irreducible error can be upper bounded by a term proportional to the the product of the measure of \(P_X\) in the region between \(t_q\) and \(t_{q'}\), and the difference between the thresholds themselves.
We state this formally in the following proposition.

\begin{proposition}
Let \(\underline{t}_{q, q'} = \min\{t_{q}, t_{q'}\}\) and \(\overline{t}_{q, q'} = \max\{t_{q}, t_{q'}\}\). 
The irreducible error satisfies the bound
\begin{align*}
    \irreducible
	\leq (q'_0 + q_1')\left|t_q - t_{q'}\right|\prob\left(\underline{t}_{q, q'} \leq \eta(X) \leq \overline{t}_{q, q'} \right)
\end{align*}
\label{prop:IrreducibleError}
\end{proposition}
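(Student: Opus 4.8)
The plan is to reduce the irreducible error to a pointwise computation by conditioning on $X$ and exploiting the explicit threshold form of the weighted Bayes classifiers supplied by Lemma~\ref{lemma:WeightedBayesClassifier}.

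First I would write the $q'$-weighted risk under the zero-one loss in pointwise form. Conditioning on $X = x$ and using $\eta(x) = \prob(Y = 1 \mid X = x)$, the conditional $q'$-risk of a classifier $f$ is
\[
r_{q'}(f, x) = q'_0(1 - \eta(x))\ind\{f(x) = 1\} + q'_1 \eta(x)\ind\{f(x) = 0\},
\]
so that $R_{q'}(f) = \expect_X[r_{q'}(f, X)]$ and hence $\irreducible = \expect_X[r_{q'}(f^*_q, X) - r_{q'}(f^*_{q'}, X)]$. By Lemma~\ref{lemma:WeightedBayesClassifier}, both classifiers are threshold rules, $f^*_q(x) = \ind\{\eta(x) > t_q\}$ and $f^*_{q'}(x) = \ind\{\eta(x) > t_{q'}\}$. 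Since $f^*_{q'}$ minimizes $r_{q'}(\cdot, x)$ pointwise, the integrand $r_{q'}(f^*_q, x) - r_{q'}(f^*_{q'}, x)$ is nonnegative everywhere, and it vanishes wherever $f^*_q(x) = f^*_{q'}(x)$; the two rules disagree precisely when $\eta(x)$ lies between $t_q$ and $t_{q'}$, so the integrand is supported on $\{x : \underline{t}_{q,q'} \leq \eta(X) \leq \overline{t}_{q,q'}\}$ (up to the threshold boundary).

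On that support I would compute the pointwise difference explicitly in the two symmetric threshold orderings. When $t_q < t_{q'}$ and $t_q < \eta(x) \leq t_{q'}$, we have $f^*_q(x) = 1$, $f^*_{q'}(x) = 0$, and
\[
r_{q'}(f^*_q, x) - r_{q'}(f^*_{q'}, x) = q'_0(1 - \eta(x)) - q'_1\eta(x) = (q'_0 + q'_1)(t_{q'} - \eta(x)).
\]
The analogous computation when $t_{q'} < t_q$ gives $(q'_0 + q'_1)(\eta(x) - t_{q'})$ on $t_{q'} < \eta(x) \leq t_q$. I would then bound these uniformly: $t_q < \eta(x)$ forces $t_{q'} - \eta(x) < t_{q'} - t_q = |t_q - t_{q'}|$ in the first case, and $\eta(x) \leq t_q$ forces $\eta(x) - t_{q'} \leq t_q - t_{q'} = |t_q - t_{q'}|$ in the second, so in both cases the integrand is at most $(q'_0 + q'_1)|t_q - t_{q'}|$ on the region between the thresholds and zero elsewhere.

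Integrating over $X$ and using that the support is contained in the closed interval then yields
\[
\irreducible \leq (q'_0 + q'_1)|t_q - t_{q'}|\,\prob\left(\underline{t}_{q,q'} \leq \eta(X) \leq \overline{t}_{q,q'}\right),
\]
which is the claimed bound. The only bookkeeping is handling the two threshold orderings symmetrically and verifying the elementary inequalities $t_{q'} - \eta(x) \leq |t_q - t_{q'}|$ (and its mirror image); since everything reduces to a direct pointwise comparison between two threshold rules, I do not anticipate a substantive obstacle beyond keeping the case analysis clean.
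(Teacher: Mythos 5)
Your proposal is correct and follows essentially the same route as the paper's proof: both reduce $\irreducible$ to a pointwise comparison of the two threshold rules, observe that the disagreement set is exactly where $\eta(x)$ lies between $t_q$ and $t_{q'}$, and use the identity $r_{q'}(f^*_q, x) - r_{q'}(f^*_{q'}, x) = (q'_0 + q'_1)\left|\eta(x) - t_{q'}\right|$ together with $\left|\eta(x) - t_{q'}\right| \leq \left|t_q - t_{q'}\right|$ on that set. The only cosmetic difference is that you carry out the two threshold orderings as explicit cases, whereas the paper folds them into a single absolute-value expression.
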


A visualization is given in Figure~\ref{fig:ExcessqprimeRiskDiagram}.
Now, we turn to analyze the estimation error.
The result is in many ways similar to Proposition~\ref{prop:ExcessqRisk}, but an additional term appears due to the decision threshold \(t_{q}\) for \(\hat{\eta}\) differing from that of the risk measurement \(t_{q'}\).
\begin{proposition} 
For any density estimator $\hat{\eta}$, the estimation error satisfies
	\ificml{
	\begin{align*}
	\estimation
	\leq &
	(q_0' + q_1') \expect\left[\norm{\eta - \hat{\eta}}_{L_1(P_X)}\right] + \\
	&(q_0' + q_1') \left|t_{q'} - t_q\right| 
	\expect\left[\prob\left(\hat{f}_q(x) \neq f^*_q(x)\right)\right]
	\end{align*}
	}{
	\begin{align*}
	\estimation
	\leq 
	(q_0' + q_1')\left(O\left(n^{-\frac{\beta}{2\beta + d}}\right) + \left|t_{q'} - t_q\right| 
	\expect\left[\prob\left(\hat{f}_q(x) \neq f^*_q(x)\right)\right]\right)
	\end{align*}
	}
\label{prop:ExcessqprimeRisk}
\end{proposition}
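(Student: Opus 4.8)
The plan is to reduce the estimation error to a single integral against $P_X$ and then split the deviation of $\eta$ from the measurement threshold $t_{q'}$ into a part controlled by the quality of $\hat\eta$ and a part controlled by the threshold gap. Writing the zero-one weighted risk as $R_{q'}(f) = \expect[q'_Y \ind\{f(X) \neq Y\}]$ and conditioning on $X = x$, the pointwise $q'$-conditional risk equals $q'_1\eta(x)$ when $f(x) = 0$ and $q'_0(1 - \eta(x))$ when $f(x) = 1$. Since both $\hat f_q$ and $f^*_q$ threshold at $t_q$---the former applied to $\hat\eta$, the latter to $\eta$---their $q'$-risks agree off the disagreement set $D = \{x : \hat f_q(x) \neq f^*_q(x)\}$. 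A direct computation shows that on $D$ the pointwise $q'$-risk difference is $(q'_0 + q'_1)(t_{q'} - \eta(x))$ when $\hat f_q(x) = 1$ and $(q'_0 + q'_1)(\eta(x) - t_{q'})$ when $\hat f_q(x) = 0$, so that
\[ \estimation = (q'_0+q'_1)\,\expect\left[\int_D \pm\left(\eta(x) - t_{q'}\right) dP_X(x)\right], \]
where the sign is determined by the direction of the disagreement.

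Next I would insert the identity $\eta(x) - t_{q'} = (\eta(x) - t_q) + (t_q - t_{q'})$ and split the integral into two pieces. On $D$, $\eta(x)$ lies on the side of $t_q$ opposite to $\hat\eta(x)$, and checking both cases shows the sign always renders the first piece as $+\lvert\eta(x) - t_q\rvert$; this produces a term $(q'_0+q'_1)\,\expect[\int_D \lvert\eta(x)-t_q\rvert\,dP_X]$. The second piece carries the threshold gap with a sign that flips between the two cases, so I would bound it in absolute value by $(q'_0+q'_1)\lvert t_{q'}-t_q\rvert\,P_X(D)$; taking expectations over the sample gives exactly $(q'_0+q'_1)\lvert t_{q'}-t_q\rvert\,\expect[\prob(\hat f_q(X) \neq f^*_q(X))]$, the second summand of the claim.

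The crux is the first term. On $D$ the threshold $t_q$ lies between $\eta(x)$ and $\hat\eta(x)$, so $\lvert\eta(x)-t_q\rvert \le \lvert\eta(x)-\hat\eta(x)\rvert$ pointwise; integrating over $D \subseteq \xspace$ yields $\expect[\int_D \lvert\eta(x)-t_q\rvert\,dP_X] \le \expect[\norm{\eta - \hat\eta}_{L_1(P_X)}]$, which is the bound in its $L_1$ form (the statement appearing in the conference version). Invoking the $\beta$-H\"older assumption on $\eta$ together with the standard $L_1(P_X)$ risk of a plug-in regression estimate then gives $\expect[\norm{\eta-\hat\eta}_{L_1(P_X)}] = O(n^{-\beta/(2\beta+d)})$, producing the first summand and completing the bound. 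I expect the only real difficulty to be careful bookkeeping of the disagreement sign, so that the $\lvert\eta - t_q\rvert$ contribution stays nonnegative while the threshold-gap contribution is correctly controlled by the mismatch probability; the substantive inequality $\lvert\eta - t_q\rvert \le \lvert\eta - \hat\eta\rvert$ on $D$ and the appeal to the known nonparametric $L_1$ rate are then routine.
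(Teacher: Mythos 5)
Your proposal is correct and follows essentially the same route as the paper's proof: reduce the estimation error to an integral over the disagreement set, split $\eta(x)-t_{q'}$ via the threshold $t_q$ (the paper passes through $\lvert\eta(x)-t_{q'}\rvert$ and the triangle inequality, while you track the signs directly, which is a cosmetic difference yielding the identical two terms), bound $\lvert\eta(x)-t_q\rvert \le \lvert\eta(x)-\hat\eta(x)\rvert$ on the disagreement set, and invoke the standard nonparametric $L_1$ rate (Lemma~\ref{lemma:YangMinimax}) for the H\"older class. Your observation that the generic $L_1$ bound holds for any estimator while the $O(n^{-\beta/(2\beta+d)})$ rate requires the H\"older assumption matches the paper's own split between the proposition and Corollary~\ref{corollary:ExcessqprimeRiskHolder}.
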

\begin{corollary}\label{corollary:ExcessqprimeRiskHolder}
When $\eta$ is $\beta$-H\"older, using local polynomial estimator \cite{yang1999minimax} for $\hat{\eta}$ gives
	\ificml{
	\begin{align*}
        \estimation
        \leq &
        (q_0' + q_1') O\left(n^{-\frac{\beta}{2\beta + d}}\right) + \\
        &(q_0' + q_1') \left|t_{q'} - t_q\right| 
        \expect\left[\prob\left(\hat{f}_q(x) \neq f^*_q(x)\right)\right]
    \end{align*}
	}{
	\begin{align*}
	\estimation
	\leq &
	(q_0' + q_1') O\left(n^{-\frac{\beta}{2\beta + d}}\right) + (q_0' + q_1') \left|t_{q'} - t_q\right| 
	\expect\left[\prob\left(\hat{f}_q(x) \neq f^*_q(x)\right)\right]
	\end{align*}
	}
\end{corollary}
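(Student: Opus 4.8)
The plan is to obtain Corollary~\ref{corollary:ExcessqprimeRiskHolder} as an immediate specialization of Proposition~\ref{prop:ExcessqprimeRisk}: the only additional work is to replace the abstract approximation error $\expect[\norm{\eta - \hat{\eta}}_{L_1(P_X)}]$ that controls the first summand by a concrete convergence rate for the particular estimator $\hat\eta$ being chosen. Concretely, I would start from the general form of the bound, valid for any estimator $\hat\eta$,
\[
\estimation \leq (q_0' + q_1') \expect\left[\norm{\eta - \hat{\eta}}_{L_1(P_X)}\right] + (q_0' + q_1') \left|t_{q'} - t_q\right| \expect\left[\prob\left(\hat{f}_q(x) \neq f^*_q(x)\right)\right],
\]
and focus entirely on bounding the first term; the second term is carried through unchanged.

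The second step is to invoke the classical minimax theory for local polynomial estimation. Under the $\beta$-H\"older assumption on $\eta$, choosing the local polynomial estimator of \cite{yang1999minimax} with polynomial degree at least $\lfloor \beta \rfloor$ and bandwidth $h \asymp n^{-1/(2\beta + d)}$ attains the minimax integrated squared-error rate
\[
\expect\left[\norm{\eta - \hat{\eta}}_{L_2(P_X)}^2\right] = O\left(n^{-\frac{2\beta}{2\beta + d}}\right).
\]
To convert this into a bound on the $L_1$ error appearing in the proposition, I would apply Cauchy--Schwarz followed by Jensen's inequality: since $P_X$ is a probability measure, $\norm{\eta - \hat\eta}_{L_1(P_X)} \leq \norm{\eta - \hat\eta}_{L_2(P_X)}$ for each realization of $\hat\eta$, and then $\expect[\norm{\eta - \hat\eta}_{L_2(P_X)}] \leq (\expect[\norm{\eta-\hat\eta}_{L_2(P_X)}^2])^{1/2}$. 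Chaining these gives $\expect[\norm{\eta - \hat\eta}_{L_1(P_X)}] = O(n^{-\frac{\beta}{2\beta+d}})$, and substituting into the displayed bound produces exactly the claimed inequality after absorbing the factor $(q_0'+q_1')$.

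The main obstacle is not any single computation but rather ensuring that the regularity conditions under which the rate of \cite{yang1999minimax} holds are compatible with the present setting and that the rate is stated in the correct norm. In particular, the $L_2(P_X)$ statement above requires, as is standard, that $P_X$ admit a density bounded above on a compact support, so that the $P_X$-weighted integrated error is comparable to the Lebesgue-integrated error for which the minimax rate is customarily established; one also uses that $\eta$ takes values in $[0,1]$, being a conditional probability, to keep the constants controlled. Apart from these bookkeeping conditions, the norm comparison and the substitution are routine, so the corollary follows directly from Proposition~\ref{prop:ExcessqprimeRisk}.
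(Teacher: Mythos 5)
Your proposal is correct and reaches the stated bound, but by a genuinely different route than the paper. The paper's proof is a single line appended to the proof of Proposition~\ref{prop:ExcessqprimeRisk}: having established
$\estimation \le (q_0'+q_1')\bigl(\expect\bigl[\norm{\eta-\hat{\eta}}_{L_1(P_X)}\bigr] + \left|t_{q'}-t_q\right|\expect\bigl[\prob\bigl(\hat{f}_q(x)\neq f_q^*(x)\bigr)\bigr]\bigr)$,
it applies Lemma~\ref{lemma:YangMinimax} (Theorem~1 of Yang, 1999), a metric-entropy-based minimax upper bound stated directly for the $L_1(P_X)$ error, with the H\"older entropy exponent yielding $n^{-1/(2+\rho)}=n^{-\beta/(2\beta+d)}$ (note the exponent must be $\rho = d/\beta$ for this to come out right; the paper's remark that $\rho = \beta/d$ is a typo). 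You instead invoke the classical $L_2(P_X)$ minimax rate for local polynomial regression and downgrade it via $\norm{\eta-\hat{\eta}}_{L_1(P_X)} \le \norm{\eta-\hat{\eta}}_{L_2(P_X)}$ followed by Jensen. Both arguments are sound and give the same rate; the difference lies in hypotheses and self-containedness. The entropy route needs only the entropy condition on the $\beta$-H\"older class and delivers the bound in exactly the norm Proposition~\ref{prop:ExcessqprimeRisk} produces, but it is a minimax statement, so one must separately know that the chosen local polynomial estimator attains it (the paper defers this point to its appendix preamble, citing Stone, 1982, and related work). Your route makes the estimator-specific claim explicit, at the cost of the extra design conditions you correctly flag --- a bounded density for $P_X$ on compact support, polynomial degree at least $\lfloor\beta\rfloor$, and bandwidth $h \asymp n^{-1/(2\beta+d)}$ --- which the entropy-based lemma, as stated, does not require. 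In short: your proof buys a concrete, constructive rate for the specific estimator; the paper's buys brevity and weaker explicit assumptions by quoting the $L_1(P_X)$ minimax result directly.
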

Consequently, we can upper bound the expected excess \(q'\)-risk.
The probability in the bound of the estimation error has been considered in the context of nearest neighbors \citep{chaudhuri2014rates}, but in general, additional assumptions are required to provide an explicit rate. We consider one such assumption in the appendix.

\section{Robust Class Weighted Risk}
\label{sec:RobustProblem}
Based the results in the previous section, we know that the performance degradation need not be graceful when we don't know how to choose the weights. This motivates us to study a more robust version of class weighted risk.
\begin{definition}
	Let \(Q \subseteq \reals^{|\yspace|}\) be a compact convex set such that \(q_{i} \geq 0\) for each \(i\) and  \(\expect[q_{Y}] =  1\) for each \(q\) in \(Q\).
	Then, the \(Q\)-weighted risk is
	\begin{align*}
	&\begin{aligned}
	\risk_{Q}(f)
	&=
	\sup_{q \in Q}
	\expect 
	\left[q_{Y} R_{Y}(f)\right]
	=
	\sup_{q \in Q}
	\sum_{i \in \yspace}
	q_{i}p_{i} \risk_{i}(f).
	\end{aligned}
	\end{align*} 
	Additionally, we refer to the set \(Q\) as the uncertainty set.
	\label{def:RobustRisk}
\end{definition}

In this section, we have two goals: (1) to provide excess \(\functions\)-risk bounds and generalization bounds for robust weighted risk via uniform convergence and (2) to make connections to stochastic optimization via special choices of uncertainty set.
We start with generalization; the proofs are given in the appendix. 

\begin{theorem}
Let \(\ell = \marginloss\) be the multiclass margin loss.
Recall that \(N_{i} = \sum_{j = 1}^{n} \ind\left\{y_{j} = i\right\}\).
With probability at least \(1 - \delta\), we have the generalization bound
\ificml{
\begin{align*}
& \begin{aligned}
\risk_{Q}(f)
&\leq 
\sup_{q \in Q}
\left\{
\hat{\risk}_{q}(f)
+
\sum_{i = 1}^{k} q_{i} p_{i}
\right. \\ & \qquad \left. \times
\left(
4k \expect\left[\frac{N_{i}}{p_{i}n} \hat{\rademacher}_{N_{i}}(\Pi_{1}(\functions))
\right]
+
\sqrt{\frac{\log \frac{k}{\delta}}{2 p_{i}^{2}n}}
\right)
\right\}
\end{aligned}
\end{align*}}{
\begin{align*}
& \begin{aligned}
\risk_{Q}(f)
&\leq 
\sup_{q \in Q}
\left\{
\hat{\risk}_{q}(f)
+
\sum_{i = 1}^{k} q_{i} p_{i} \times
\left(
4k \expect\left[\frac{N_{i}}{p_{i}n} \hat{\rademacher}_{N_{i}}(\Pi_{1}(\functions))
\right]
+
\sqrt{\frac{\log \frac{k}{\delta}}{2 p_{i}^{2}n}}
\right)
\right\}
\end{aligned}
\end{align*}
}
for every \(f\) in \(\functions\) and the excess risk bound
\ificml{
\begin{align*}
\excess_{Q}(\functions)
&\leq 
 2
\sup_{q \in Q}
\sum_{i = 1}^{k} q_{i} p_{i}
\\ & \qquad \times
\left(
8k \expect\left[\frac{N_{i}}{p_{i}n} \hat{\rademacher}_{N_{i}}(\Pi_{1}(\functions))
\right]
+
\sqrt{\frac{\log \frac{k}{\delta}}{2p_{i}^{2} n}}
\right). 
\end{align*}
}{
\begin{align*}
\excess_{Q}(\functions)
&\leq 
2
\sup_{q \in Q}
\sum_{i = 1}^{k} q_{i} p_{i}
 \times
\left(
8k \expect\left[\frac{N_{i}}{p_{i}n} \hat{\rademacher}_{N_{i}}(\Pi_{1}(\functions))
\right]
+
\sqrt{\frac{\log \frac{k}{\delta}}{2p_{i}^{2} n}}
\right). 
\end{align*}
}

\label{theorem:MainStandardSampling}
\end{theorem}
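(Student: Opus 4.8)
The plan is to establish a one-sided uniform-deviation bound $R_Q(f) - \hat{R}_Q(f) \le (\cdots)$ holding simultaneously over all $f \in \functions$ with probability $1-\delta$, and then to read off the excess-risk statement from it by the usual empirical-risk-minimization comparison. First, since a supremum is subadditive, $R_Q(f) - \hat{R}_Q(f) = \sup_{q\in Q} R_q(f) - \sup_{q\in Q}\hat{R}_q(f) \le \sup_{q\in Q}\bigl(R_q(f) - \hat{R}_q(f)\bigr) = \sup_{q\in Q}\sum_{i=1}^k q_i\bigl(p_i R_i(f) - \hat{p}_i \hat{R}_i(f)\bigr)$, so it suffices to control the per-class deviation $p_i R_i(f) - \hat{p}_i \hat{R}_i(f)$ uniformly in $f$ for each $i$, and then reassemble with the weights $q_i$ and take the supremum over $q$.

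Second, I would rewrite each per-class term through the masked loss $g^f_i(z) = \marginloss(f,z)\,\ind\{y=i\}$, which takes values in $[0,1]$. Since $p_i R_i(f) = \expect[g^f_i(Z)]$ and $\hat{p}_i \hat{R}_i(f) = \tfrac1n\sum_{j=1}^n g^f_i(z_j)$, the term is exactly the deviation of an empirical mean from its expectation over the $n$ i.i.d.\ samples. Applying McDiarmid's inequality to $\sup_f\bigl(\expect[g^f_i(Z)] - \tfrac1n\sum_j g^f_i(z_j)\bigr)$ --- whose bounded-difference constant is $1/n$ because $g^f_i\in[0,1]$ --- and taking a union bound over the $k$ classes produces the concentration term $\sqrt{\tfrac{\log(k/\delta)}{2n}}$. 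The remaining expected supremum is handled by symmetrization. The key observation is that the Rademacher sum for the masked class collapses onto the class-$i$ examples: $\expect_\sigma \sup_f \sum_{j=1}^n \sigma_j g^f_i(z_j) = \expect_\sigma\sup_f\sum_{j:\, y_j=i}\sigma_j \marginloss(f,z_j)$, the empirical Rademacher complexity of the margin-loss class evaluated on the $N_i$ class-$i$ points. This is the origin of $\hat{\rademacher}_{N_i}$ and of the outer expectation over the random count $N_i$.

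Third, I would peel the margin loss off this class-$i$ complexity: $\Phi$ is $1$-Lipschitz and the $\max$ over the $k-1$ competing coordinates can be decomposed, so the standard multiclass-margin contraction lemma bounds the margin-loss Rademacher complexity by $2k$ times that of $\Pi_{1}(\functions)$ on the same points. Combining the symmetrization factor $2$ with this $2k$ gives the coefficient $4k$, so the expected supremum is at most $4k\,\expect\bigl[\tfrac{N_i}{n}\hat{\rademacher}_{N_i}(\Pi_{1}(\functions))\bigr]$. Reassembling, with probability $1-\delta$ we get $R_q(f)-\hat{R}_q(f) \le \sum_{i=1}^k q_i\bigl(4k\,\expect[\tfrac{N_i}{n}\hat{\rademacher}_{N_i}(\Pi_{1}(\functions))] + \sqrt{\tfrac{\log(k/\delta)}{2n}}\bigr)$ for all $f$ and all $q$ simultaneously; factoring a $p_i$ out of each summand so it pairs with the weight as $q_i p_i$ turns $\tfrac{N_i}{n}$ into $\tfrac{N_i}{p_i n}$ and $\sqrt{\tfrac{\log(k/\delta)}{2n}}$ into $\sqrt{\tfrac{\log(k/\delta)}{2p_i^2 n}}$. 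Adding $\hat{R}_q(f)$ and taking $\sup_{q\in Q}$ yields the generalization bound; the $p_i^2$ and $\tfrac{N_i}{p_i n}$ are thus cosmetic consequences of matching the $q_i p_i$ form, not extra slack.

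Finally, for the excess risk I would use the decomposition $\excess_Q(\functions) = R_Q(\hat f) - R_Q(f^*_Q) \le [R_Q(\hat f) - \hat{R}_Q(\hat f)] + [\hat{R}_Q(f^*_Q) - R_Q(f^*_Q)]$, having dropped the term $\hat{R}_Q(\hat f) - \hat{R}_Q(f^*_Q) \le 0$ since $\hat f$ minimizes $\hat{R}_Q$. Bounding both remaining brackets by the two-sided uniform deviation gives $\excess_Q(\functions) \le 2\sup_f|R_Q(f)-\hat{R}_Q(f)|$, and since controlling both signs of the deviation doubles the symmetrization cost (the $4k$ of the one-sided bound becomes $8k$), this produces the claimed bound. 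I expect the main obstacle to be precisely the margin-loss step: extracting the correct $k$-dependence from the $\max$ in the multiclass margin loss and tracking constants cleanly enough that the one-sided and two-sided versions land on $4k$ and $8k$ respectively. The concentration and reduction-to-class-$i$ steps are routine once the masked-loss viewpoint is adopted.
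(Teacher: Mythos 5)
Your proposal is correct and follows essentially the same route as the paper's proof: the masked per-class loss \(\marginloss(f,z)\ind\{y=i\}\) with McDiarmid plus a union bound over classes, symmetrization with conditioning on the label vector so the Rademacher sum collapses onto the \(N_i\) class-\(i\) points, the \(2k\) multiclass-margin contraction (the paper's Lemma~\ref{lemma:KuznetsovRademacher}) yielding the \(4k\) coefficient, and subadditivity of the supremum over \(q\). Your excess-risk argument also matches the paper's, which uses the two-sided deviation bound (Lemma~\ref{lemma:ModifiedRademacher}, giving \(8k\)) applied to both \(\hat{f}_Q\) and \(f^*_Q\) together with the ERM inequality; your observation that \(\bigl|\sup_q A_q - \sup_q B_q\bigr| \leq \sup_q |A_q - B_q|\) is exactly the role played by the paper's Lemma~\ref{lemma:SimpleInfimum}.
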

A few remarks are in order.
First, note that we only use the multiclass margin loss because it leads to simple multiclass bounds.
In a binary classification setting, standard results would imply generalization for other Lipschitz losses.
Second, in many cases, we can simplify the Rademacher complexity term.
The following result applies to commonly-used function classes such as linear functions and neural networks \citep{bartlett2017, golowich2018, mohri2012}. 

\begin{corollary}
Let \(\ell = \marginloss\) be the multiclass margin loss.
Let \(\functions\) be a function class satisfying
\(
\hat{\rademacher}_{n}(\Pi_{1}(\functions) )
\leq 
C(\functions) n^{-1/2}
\)
for some constant \(C(\functions)\) that does not depend on \(n\).
Then with probability at least \(1 - \delta\), we have the generalization bound
\ificml{
\begin{align*}
&\begin{aligned}
\risk_{Q}(f)
&\leq \sup_{q \in Q}
\left\{
\hat{\risk}_{q}(f)
+
\sum_{i = 1}^{k} 
q_{i} p_{i}
\right. \\ & \qquad \times \left.
\left(
\frac{4k C(\functions)}{\sqrt{p_{i} n}}
+
\sqrt{\frac{\log \frac{k}{\delta}}{2p_{i}^{2} n}}
\right)
\right\} 
\end{aligned}
\end{align*}
}{
\begin{align*}
&\begin{aligned}
\risk_{Q}(f)
&\leq \sup_{q \in Q}
\left\{
\hat{\risk}_{q}(f)
+
\sum_{i = 1}^{k} 
q_{i} p_{i} \times \left(
\frac{4k C(\functions)}{\sqrt{p_{i} n}}
+
\sqrt{\frac{\log \frac{k}{\delta}}{2p_{i}^{2} n}}
\right)
\right\} 
\end{aligned}
\end{align*}
}
and the excess \((\functions, q)\)-risk bound
\[
\excess_{Q}(\functions)
\leq 
2
\sup_{q \in Q}
\sum_{i = 1}^{k} q_{i} p_{i}
\left(
\frac{8k C(\functions)}{\sqrt{p_{i} n}}
+
\sqrt{\frac{\log \frac{k}{\delta}}{2p_{i}^{2} n}}
\right).
\]
\label{cor:MainStandardSampling}
\end{corollary}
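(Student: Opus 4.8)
The plan is to obtain the corollary as a direct specialization of Theorem~\ref{theorem:MainStandardSampling}: the only place where the complexity of \(\functions\) enters is through the term \(4k\,\expect\bigl[\frac{N_{i}}{p_{i}n}\hat{\rademacher}_{N_{i}}(\Pi_{1}(\functions))\bigr]\) in the generalization bound (and the analogous term with factor \(8k\) in the excess risk bound). It therefore suffices to show that under the hypothesis \(\hat{\rademacher}_{m}(\Pi_{1}(\functions)) \leq C(\functions)\, m^{-1/2}\), this expectation is at most \(\frac{4k\,C(\functions)}{\sqrt{p_{i}n}}\); substituting this estimate into both bounds of the theorem then yields the stated inequalities with no further work.

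First I would apply the hypothesis with \(m = N_{i}\). Since the assumed bound holds for every sample size, it holds in particular for the realized value of \(N_{i}\) conditionally on the draw, giving \(\frac{N_{i}}{p_{i}n}\hat{\rademacher}_{N_{i}}(\Pi_{1}(\functions)) \leq \frac{C(\functions)}{p_{i}n}\sqrt{N_{i}}\). Next I would take the outer expectation and control \(\expect[\sqrt{N_{i}}]\). Because the sample is drawn i.i.d.\ from \(P_{X,Y}\), the count \(N_{i} = \sum_{j=1}^{n}\ind\{Y_{j}=i\}\) is Binomial with mean \(\expect[N_{i}] = p_{i}n\), and concavity of the square root together with Jensen's inequality gives \(\expect[\sqrt{N_{i}}] \leq \sqrt{\expect[N_{i}]} = \sqrt{p_{i}n}\). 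Combining the two displays yields \(\expect\bigl[\frac{N_{i}}{p_{i}n}\hat{\rademacher}_{N_{i}}(\Pi_{1}(\functions))\bigr] \leq \frac{C(\functions)}{p_{i}n}\sqrt{p_{i}n} = \frac{C(\functions)}{\sqrt{p_{i}n}}\), which is exactly the replacement needed.

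The step that warrants the most care is interchanging the complexity bound with the expectation, since \(\hat{\rademacher}_{N_{i}}\) is itself a random quantity depending on the realized class-\(i\) subsample and on the random count \(N_{i}\). One must argue that the deterministic bound \(C(\functions)N_{i}^{-1/2}\) holds pointwise before integrating, and treat the event \(\{N_{i}=0\}\) separately: there the summand vanishes because of the \(N_{i}\) prefactor, so the inequality \(0 \leq 0\) holds trivially and no division by zero occurs. Once this is handled, the remainder is the single invocation of Jensen's inequality above and routine substitution into the generalization and excess risk bounds of Theorem~\ref{theorem:MainStandardSampling}, where the \(8k\) case is identical.
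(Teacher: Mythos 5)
Your proposal is correct and matches the paper's own proof: both reduce the corollary to bounding the Rademacher term in Theorem~\ref{theorem:MainStandardSampling} by applying the hypothesis pointwise with \(m = N_{i}\) and then using Jensen's inequality to get \(\expect[\sqrt{N_{i}}] \leq \sqrt{\expect[N_{i}]} = \sqrt{p_{i}n}\), after which substitution gives both stated bounds. Your extra care about the event \(\{N_{i} = 0\}\) and the pointwise validity of the complexity bound before integrating is a minor refinement the paper leaves implicit, not a different argument.
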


\subsection{Connections to Stochastic Programming}

In this section, we make concrete connections to stochastic programming \citep{shapiro2009lectures}.
First, we introduce label conditional value at risk, and then we describe the generalization, label heterogeneous conditional value at risk.

\subsubsection{Label CVaR}
\label{subsubsec:ConnectionCVaR}

We start with the definition.

\begin{definition}
Let \(\alpha\) in \((0, 1)\) be given.
Define the set 
\(
Q_{\alpha} 
= 
\left\{
q :  \expect[q_Y] = 1,  q_i \in \left[0, \alpha^{-1}\right] \text{ for }i \in 1, \dots, k
\right\}.
\)
The label conditional value at risk (LCVaR) is
\(
\lcvar_{\alpha}(f)
=
\risk_{Q_{\alpha}}(f).
\)
\label{def:CVaR}
\end{definition}

Now, we describe the connection to CVaR.
Letting \(Z\) be a random variable, the CVaR of \(Z\) at level \(\alpha\) is 
\(\cvar_{\alpha}(Z) = \sup_{Q \in Q_{\alpha}^{*}} \expect_{Q}[Z] =  \sup_{Q \in Q_{\alpha}^{*}} \expect[(dQ/dP) Z], \) where \(Q_{\alpha}^{*}\) is the set of all probability measures that are absolutely continuous with respect to the underlying measure \(P\) such that \(dQ/dP \leq \alpha^{-1}\).
If \(Z\) takes values on a finite discrete probability space with probability mass function \(p\), then the CVaR may be written as
\(\cvar_{\alpha}(Z) = \sup_{q \in Q_\alpha} \sum_{i = 1}^{k} q_{i}p_{i} Z.\)
Thus, LCVaR is a specialization of CVaR to the variables \(R_{Y}(f)\), which take values on the finite discrete space \(\yspace\).
Notably, this is in contrast to other uses of CVaR in machine learning where, as noted previously, CVaR is used with respect to samples directly, in order to provide robustness or fairness.
As with CVaR, LCVaR is a straightforward way to provide robustness.
Intuitively, it moves weight to the worst losses, where all weightings are bounded by the same constant \(\alpha^{-1}\).
Now, we consider the dual form.

\begin{proposition}[LCVaR dual form]
LCVaR permits the dual formulation
\begin{align*}
&\begin{aligned}
\lcvar_{\alpha}(f)
&=
\inf_{\lambda \in \reals} 
\left\{
\frac{1}{\alpha}
\expect[(R_{Y}(f) - \lambda)_{+}]
+
\lambda
\right\}.
\end{aligned}
\end{align*}
Moreover, if \(\functions\) is compact in the supremum norm on \(\xspace\) and \(\ell\) is continuous, then the dual form holds for all \(f\) in \(\functions\).
\label{prop:CVaRDual}
\end{proposition}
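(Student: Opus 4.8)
The plan is to recognize the LCVaR primal as a finite-dimensional linear program and apply strong LP duality for each fixed \(f\), which is the discrete-measure instance of the classical Rockafellar--Uryasev representation of CVaR \citep{rockafellar2000optimization}. Fix \(f \in \functions\) and substitute \(w_i = q_i p_i\). Because \(\expect[q_Y] = \sum_i q_i p_i = 1\) and \(0 \le q_i \le \alpha^{-1}\), the constraint set \(Q_{\alpha}\) is carried to \(\{w : \sum_{i} w_i = 1,\ 0 \le w_i \le p_i/\alpha\}\), so that
\[
\lcvar_\alpha(f) = \max_{w}\left\{ \sum_{i=1}^k R_i(f)\, w_i \ :\ \sum_{i=1}^k w_i = 1,\ 0 \le w_i \le p_i/\alpha \right\}.
\]
This feasible region is nonempty---\(w = p\) is admissible since \(\alpha < 1\) forces \(p_i \le p_i/\alpha\)---and compact, so the maximum is attained and, being a finite LP with bounded value, it exhibits no duality gap.

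First I would form the Lagrangian, introducing a free multiplier \(\lambda\) for the equality \(\sum_i w_i = 1\) and nonnegative multipliers \(\mu_i\) for the upper bounds \(w_i \le p_i/\alpha\), while keeping \(w \ge 0\) explicit. Maximizing the Lagrangian over \(w \ge 0\) is finite only when each coefficient \(R_i(f) - \lambda - \mu_i \le 0\), i.e.\ \(\mu_i \ge (R_i(f) - \lambda)_+\), in which case the maximizer contributes zero and the dual objective is \(\lambda + \frac{1}{\alpha}\sum_i p_i \mu_i\). Minimizing over the feasible \(\mu\) sets \(\mu_i = (R_i(f) - \lambda)_+\), giving dual value \(\lambda + \frac{1}{\alpha}\sum_i p_i (R_i(f) - \lambda)_+ = \lambda + \frac{1}{\alpha}\expect[(R_Y(f) - \lambda)_+]\); minimizing over \(\lambda \in \reals\) and invoking strong duality then yields the claimed identity.

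For the second claim I would first argue that the inner infimum over \(\lambda\) can be taken over a compact interval. Since \(\ell\) takes values in \([0,1]\) we have \(R_i(f) \in [0,1]\), and the convex map \(\lambda \mapsto \lambda + \frac{1}{\alpha}\expect[(R_Y(f)-\lambda)_+]\) has derivative \(1 - \alpha^{-1}\prob(R_Y(f) > \lambda)\) wherever differentiable, tending to \(1 - \alpha^{-1} < 0\) as \(\lambda \to -\infty\) and to \(1 > 0\) as \(\lambda \to +\infty\); hence every minimizer lies in \([\min_i R_i(f), \max_i R_i(f)] \subseteq [0,1]\) and the infimum is attained. I would then show that \((f, \lambda) \mapsto \lambda + \frac{1}{\alpha}\expect[(R_Y(f)-\lambda)_+]\) is jointly continuous on \(\functions \times [0,1]\): continuity in \(\lambda\) is immediate, and if \(f_n \to f\) in the supremum norm then continuity of \(\ell\) with boundedness gives \(\ell(f_n, Z) \to \ell(f, Z)\) pointwise, so dominated convergence yields \(R_i(f_n) \to R_i(f)\), and the \(1\)-Lipschitz map \((\cdot)_+\) propagates this to the objective. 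Compactness of \(\functions\) together with the compact \(\lambda\)-range then guarantees attainment for every \(f\), so the representation holds simultaneously across the class.

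The genuinely delicate point is not the pointwise duality---finite LP theory makes the no-gap claim routine---but the ``moreover'' clause: one must confirm that the \(\lambda\)-optimization can be confined to a fixed compact set and that the joint objective is regular enough that the infimum is a minimum. This is exactly where compactness of \(\functions\) in the supremum norm and continuity of \(\ell\) enter, and it is what later licenses rewriting \(\inf_{f \in \functions}\lcvar_\alpha(f)\) as the joint minimization \(\min_{f \in \functions,\,\lambda \in [0,1]} \{\lambda + \alpha^{-1}\expect[(R_Y(f)-\lambda)_+]\}\).
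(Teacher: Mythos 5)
Your proof is correct, and it reaches the identity by a genuinely different route than the paper. The paper forms the Lagrangian \(L(q,\lambda)=\expect[q_{Y}(R_{Y}(f)-\lambda)]+\lambda\) and invokes the minimax theorem (Theorem~\ref{theorem:MinimaxTheorem}), which obliges it to verify that \(\lambda\) may be restricted to a compact interval: it computes \(D'(\lambda)=1+\alpha^{-1}(F_{f}(\lambda)-1)\) (Lemma~\ref{lemma:CVaRDerivative}) and locates all minimizers between the quantile endpoints \(\lambda_{*}(f)\) and \(\lambda^{*}(f)\), then uses compactness of \(\functions\) in sup norm to make these endpoints uniform over \(f\). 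You instead substitute \(w_{i}=q_{i}p_{i}\), recognize \(\lcvar_{\alpha}(f)\) as a finite linear program over \(\{w:\sum_{i}w_{i}=1,\ 0\le w_{i}\le p_{i}/\alpha\}\), and read off the dual form from strong LP duality --- the discrete Rockafellar--Uryasev argument --- so the pointwise identity requires no minimax theorem and no compactification of the \(\lambda\)-domain whatsoever. For the ``moreover'' clause, you confine \(\lambda\) to \([0,1]\) using only the paper's standing assumption \(\ell\colon\functions\times\zspace\to[0,1]\), and then obtain joint continuity (via dominated convergence under sup-norm convergence of \(f_{n}\to f\)) and attainment; this matches the paper's own remark that boundedness of \(\ell\) suffices in place of the compactness-plus-continuity hypothesis, which the paper uses instead to bound its quantile endpoints uniformly. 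What each approach buys: yours is more elementary, gives exact duality with attainment for each fixed \(f\), and isolates the compactness hypothesis as relevant only to the uniform-in-\(f\) claim; the paper's Lagrangian/minimax template is reused verbatim for LHCVaR (Proposition~\ref{prop:HCVaRDual}), so it amortizes across both results. One cosmetic caveat in your LP step: if some \(p_{i}=0\), the substitution forces \(w_{i}=0\) while \(q_{i}\) ranges freely over \([0,\alpha^{-1}]\), but such classes contribute to neither side of the identity, so the argument is unaffected.
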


The proof is mostly standard and therefore deferred to the appendix.
The only trick compared with CVaR is showing that we may restrict the domain of \(\lambda\) to a compact set; which essentially requires showing that the process \(\{R_{Y}(f): f \in \functions\}\) is sufficiently well-behaved.
It would also suffice to assume that \(\ell\) is bounded, as with most theoretical results in learing theory.
Note that to minimize LCVaR, we can solve this convex program in \(\lambda\) and \(f\).

\subsubsection{Label Heterogeneous CVaR}

While the LCVaR approach of the previous section is useful for providing some robustness in a computationally tractable manner, it may not be best suited for imbalanced classification because it treats all classes identically in that each \(q_{i}\) must lie in the interval \([0, \alpha^{-1}]\).
Since imbalanced classification is inherently a problem of heterogeneity, we may wish to allow \(q_{i}\) to be in some interval \([0, \alpha^{-1}_{i}]\) instead.
We can formalize this problem as follows.
\begin{definition}
Define the uncertainty set 
\(
Q_{H, \alpha}
=
\left\{
q: 
\expect [q_{Y}] = 1,
q_{i} \in [0, \alpha^{-1}_{i}]
\text{ for } i = 1, \ldots, k
\right\}.
\)
We call the resulting optimization problem label heterogeneous conditional value at risk (LHCVaR), and we write
\begin{align*}
& \begin{aligned}
\lhcvar_{\alpha}(f)
=
\sup_{q \in Q_{H, \alpha}}
\expect\left[q_{Y} R_{Y}(f)\right].
\end{aligned}
\end{align*}
\label{def:LHCVaR}
\end{definition}

Similar to LCVaR, this has a dual form.

\begin{proposition}
A dual form for LHCVaR is given by
\begin{align*}
& \begin{aligned}
\lhcvar_{\alpha}(f)
=
\inf_{\lambda \in \reals}
\expect\left[
\alpha_{Y}^{-1}
\left(R_{Y}(f) - \lambda\right)_{+}
\right] 
+
\lambda.
\end{aligned}
\end{align*}
Moreover, if \(\functions\) is compact in the supremum norm on \(\xspace\) and \(\ell\) is continuous, then the dual form holds for all \(f\) in \(\functions\).
\label{prop:HCVaRDual}
\end{proposition}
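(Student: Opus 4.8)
The plan is to treat the supremum defining $\lhcvar_{\alpha}(f)$ as a finite-dimensional linear program in $q$ and to dualize it. Writing $r_i = R_i(f)$, we have $\lhcvar_{\alpha}(f) = \sup_q \sum_{i=1}^k q_i p_i r_i$ subject to the single equality constraint $\sum_{i=1}^k q_i p_i = 1$ together with the box constraints $0 \le q_i \le \alpha_i^{-1}$. The whole argument parallels the proof of Proposition~\ref{prop:CVaRDual}, the only change being that the uniform bound $\alpha^{-1}$ is replaced by the per-class bound $\alpha_i^{-1}$; so I would dualize only the equality constraint while keeping the box constraints explicit.

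Concretely, I would introduce a multiplier $\lambda \in \reals$ for $\sum_i q_i p_i = 1$ and form the Lagrangian $L(q, \lambda) = \sum_i q_i p_i (r_i - \lambda) + \lambda$. Because $L$ is separable across the coordinates $q_i$ over the box $[0, \alpha_i^{-1}]$, the inner maximization decouples: each coordinate is pushed to $q_i = \alpha_i^{-1}$ when $r_i - \lambda > 0$ and to $q_i = 0$ otherwise, giving $\sup_{0 \le q_i \le \alpha_i^{-1}} q_i p_i (r_i - \lambda) = \alpha_i^{-1} p_i (r_i - \lambda)_{+}$. Summing yields the dual function $g(\lambda) = \lambda + \sum_i \alpha_i^{-1} p_i (r_i - \lambda)_{+} = \lambda + \expect\left[\alpha_{Y}^{-1}\left(R_{Y}(f) - \lambda\right)_{+}\right]$, which is exactly the claimed objective, and weak duality already gives $\lhcvar_{\alpha}(f) \le \inf_{\lambda} g(\lambda)$.

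For the reverse inequality I would invoke strong duality for linear programs. The primal is feasible whenever $Q_{H,\alpha} \ne \emptyset$ (equivalently $\sum_i \alpha_i^{-1} p_i \ge 1$, which holds as soon as each $\alpha_i \in (0,1)$) and its feasible set is compact, so the optimal value is finite and strong duality holds; equivalently, one may verify Slater's condition on the relative interior of the polytope. This gives the equality $\lhcvar_{\alpha}(f) = \inf_{\lambda \in \reals} g(\lambda)$ for each fixed $f$.

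The remaining point, and the place where the compactness of $\functions$ and continuity of $\ell$ enter, is to justify that the infimum over $\lambda \in \reals$ may be taken over a compact set, uniformly in $f$ — this is the same subtlety flagged for Proposition~\ref{prop:CVaRDual}. I would note that $g$ is convex and piecewise linear in $\lambda$, with slope tending to $+1$ as $\lambda \to +\infty$ and to $1 - \sum_i \alpha_i^{-1} p_i < 0$ as $\lambda \to -\infty$, so $g$ is coercive and its minimizer lies in $[\min_i R_i(f), \max_i R_i(f)]$. Compactness of $\functions$ in the supremum norm together with continuity of $\ell$ makes $f \mapsto R_Y(f)$ continuous and uniformly bounded, so these minimizing intervals are contained in a single compact set independent of $f$; this is what lets the dual identity hold for all $f \in \functions$ simultaneously. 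I expect this uniform compact-$\lambda$ reduction to be the only genuinely technical step, the Lagrangian computation and LP strong duality being routine.
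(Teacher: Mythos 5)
Your proof is correct, but it takes a genuinely different route from the paper's. The paper proves the identity via the minimax theorem (Theorem~\ref{theorem:MinimaxTheorem}): it forms the same Lagrangian, gets the \(\leq\) direction trivially, and then spends the bulk of the argument verifying the minimax hypotheses --- in particular, it computes the one-sided derivatives \(H'(\lambda) = 1 - \expect[\alpha_Y^{-1}\ind\{R_Y(f) > \lambda\}]\) (Lemma~\ref{lemma:HCVaRDerivative}) and uses their signs to show the infimum over \(\lambda\) may be restricted to a compact interval \([\lambda_*(f), \lambda^*(f)]\), with compactness of \(\functions\) and continuity of \(\ell\) making this interval uniform over \(f\) so the minimax theorem applies. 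You instead observe that for fixed \(f\) the problem is a finite-dimensional LP over the polytope \(Q_{H,\alpha}\), dualize only the equality constraint \(\expect[q_Y]=1\), and invoke LP strong duality (feasibility plus boundedness; no Slater condition is actually needed for affine constraints, as you note), which yields the per-\(f\) identity with no compactness assumption on \(\functions\) whatsoever. This is a cleaner and more elementary argument for the pointwise statement, and it correctly isolates what the compactness hypothesis actually buys: a single compact \(\lambda\)-interval valid for all \(f \in \functions\) simultaneously, which is what one needs when minimizing jointly over \((f,\lambda)\) --- your coercivity argument (slopes \(1 - \sum_i \alpha_i^{-1}p_i \leq 0\) on the left, \(+1\) on the right, so the minimizer lies in \([\min_i R_i(f), \max_i R_i(f)]\)) recovers exactly the interval the paper derives, modulo the degenerate case \(\sum_i \alpha_i^{-1} p_i = 1\), where the left slope is \(0\) rather than strictly negative but the minimum is still attained in the stated interval. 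What the paper's minimax route buys in exchange is the explicit derivative formula for \(H\), which identifies the optimal \(\lambda\) as a generalized quantile of the class risks and is reused for the closed-form water-filling computation of \(\lambda\) in Appendix~\ref{sec:AdditionalExp}; your LP route would recover the same characterization from complementary slackness, but you would need to make that step explicit if the downstream algorithm is to be justified.
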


Again, we note that an alternative sufficient condition for the dual to hold for all \(f\) in \(\functions\) is that \(\ell\) be bounded.
Importantly, the label heterogeneous CVaR dual form is convex in \(f\) and \(\lambda\).
As a result, we can still optimize efficiently, in principle.

We also note that the finite dimension \(k\) is crucial for label heterogeneous CVaR.
This is due to our use of the minimax theorem, which requires compactness in various places; so in general this result cannot be extended to the infinite-dimensional case.

\begin{figure*}[th]
	\centering
	\begin{subfigure}{0.33\textwidth}
		\includegraphics[width=\columnwidth]{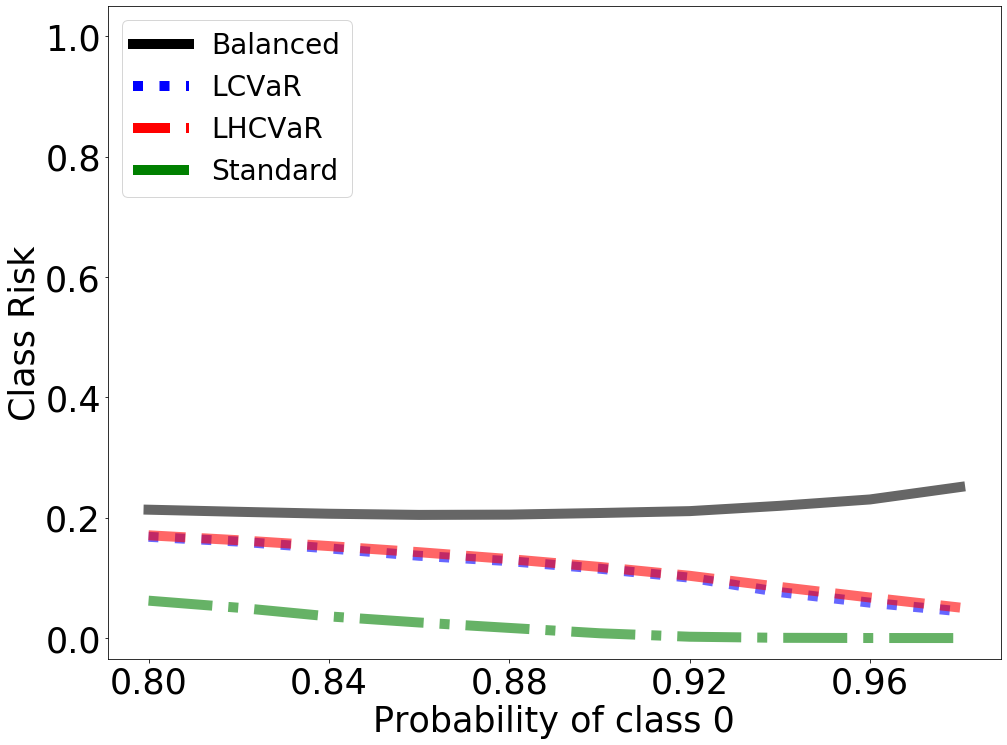}
		\vspace{-15pt}
		\caption{Class 0 risk}
		\label{subfig:SyntheticClasswiseZero}
	\end{subfigure}\begin{subfigure}{0.33\textwidth}
		\includegraphics[width=\columnwidth]{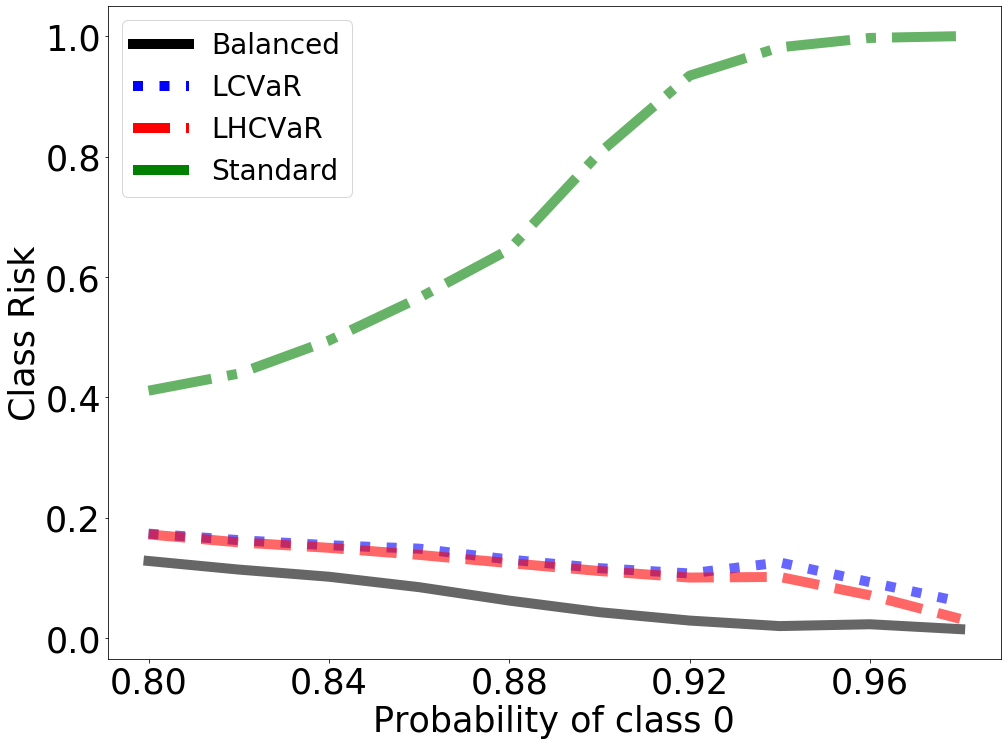}
		\vspace{-15pt}
		\caption{Class 1 risk}
		\label{subfig:SyntheticClasswiseOne}
	\end{subfigure}
	\begin{subfigure}{0.33\textwidth}
		\includegraphics[width=\columnwidth]{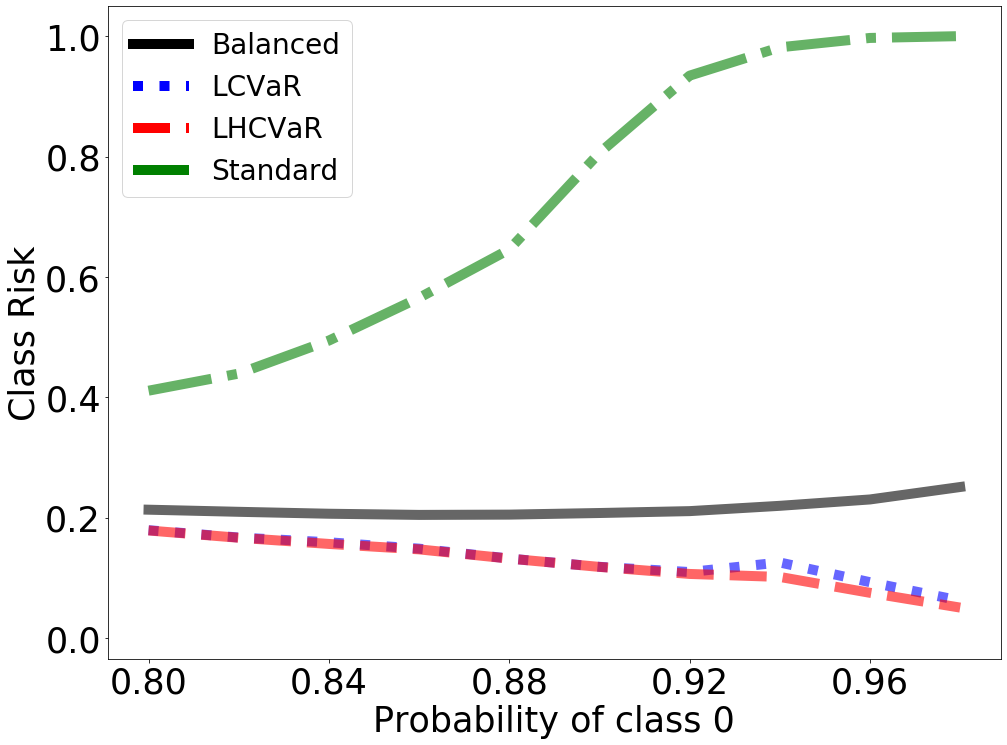}
		\vspace{-15pt}
		\caption{Worst class risk}
		\label{subfig:SyntheticClasswiseWorst}
	\end{subfigure}
	\caption{Plots of class 0, class 1, and worst class risk on the test dataset under different choices of \(1 - p\) in the synthetic experiment. The worst test class risk is the maximum of the risks of the two classes for each choice of the probability of class 0. LCVaR and LHCVaR performs better in worst class risk than both standard and balanced risks as class imbalance increases.}
	\label{fig:SyntheticResults}
\end{figure*}

\section{Numerical Results}
\label{sec:NumericalResults}

\subsection{Methods}
\label{subsec:Methodologies}

We examine the empirical performance of LCVaR and LHCVaR risks, and compare them against the standard risk and a balanced risk as baselines. Let \(\hat{p}_i\) be the empirical proportion of the \(i\)th label and \(\hat{R}_i\) be the empirical class conditional risk.

\paragraph{Balanced risk} Here, we consider the specific weighting where each class is equally weighted:
\begin{align*}
	\hat{R}_{1/(k \hat{p})}(f) = \frac{1}{k}  \sum_{i=1}^{k} \emprisk_{i}(f)
\end{align*}
i.e., we fix \(q_{i} = 1 / (k \hat{p}_i)\).

\paragraph{LCVaR} The empirical formulation optimizes the dual formulation, in which \(\alpha\) is a hyperparameter:
\begin{align}
	\hat{\lcvar}_\alpha(f) = 
	\underset{\lambda \in \reals}{\min\ }\left\{\frac{1}{\alpha}\sum\limits_{i = 1}^k\hat{p}_i(\hat{R}_{i}(f) - \lambda)_{+}
	+
	\lambda
	\right\}.
	\label{eqn:EmpiricalLCVaR}
\end{align}

\paragraph{LHCVaR} We similarly optimize a dual form in the empirical LHCVaR risk. To reduce the number of hyperparameters to only \(c \in (0, 1]\) and \(\temp \in (0, \infty)\), we calculate \(\alpha_i\) as follows: 
\begin{align}
	\alpha_i^{(\temp, c)} = c \left( \frac{\hat{p_i}^{1 / \temp}}{\sum_{j = 1}^{k} \hat{p_j}^{1 / \temp}}\right).
	\label{eqn:EmpiricalLHCVaR}
\end{align}

\(\temp\) behaves as a temperature parameter (similar to \citealt{jang2016categorical, wang2020balancing}) and causes \(\alpha\) to become a smoother distribution of weights when \(\temp > 1\) and converge to uniform weights as \(\temp \rightarrow \infty\). Conversely, when \(\temp < 1\), the alpha distribution becomes sharper and heavily weights the classes with lowest \(\hat{p}_i\) as \(\temp \rightarrow 0\). We simply choose a \(\kappa\) of 1 unless otherwise stated.  \(c\) consequently characterizes the total magnitude of the weights. Ultimately, we formulate the empirical risk as:
\begin{align*}
	\hat{\lhcvar}_{\temp, c}(f) 
	= 
	\inf_{\lambda \in \reals}\left\{\sum\limits_{i = 1}^k\frac{\hat{p}_i}{\alpha_i^{(\temp, c)}}(\hat{R}_i(f) - \lambda)_+ + \lambda \right\}
\end{align*}

We train a logistic regression model with gradient descent on a cross entropy loss, which acts as a convex surrogate loss for zero-one risk.

\subsection{Datasets}
\label{subsec:Experiments}

We evaluate our methods on both synthetic and real datasets. 

\paragraph{Synthetic Datasets} The data in our synthetic experiment is constructed for \(\xspace = [0, 1]\) and  \(\yspace=\{0, 1\}\). For a given \(p = P(Y = 0)\), we generated a dataset by uniformly randomly sampling an \(X\) in \([0, 1]\) and sampling a \(Y\) with the following distribution:
\begin{align*}
	P(Y = 1\mid X = x) &= x^{\frac{p}{1 - p}}\\
	P(Y = 0\mid X = x) &= 1 - x^{\frac{p}{1 - p}}.
\end{align*} 

In these synthetic datasets, we note that the Bayes optimal classifier and class risks are:
\begin{align*}
f^*(x) &= \ind\left\{x > \left(\frac{1}{2}\right)^{ \frac{1 - p}{p}}\right\}\\
R_{0}(f^*) &=1 - (1 + p)\left(\frac{1}{2}\right)^{\frac{1}{p}}\\
R_{1}(f^*) &= \left(\frac{1}{2}\right)^{\frac{1}{p}}.
\end{align*} 
When \(p\) is high, \(R_0(f^*) < R_1(f^*)\), which leads to a classifier that has vastly worse performance on class 1 compared to class 0. This discrepancy in class risk is a common issue in classification problems where there is a significant class imbalance. 

We randomly generated 100,000 data points for both train and test sets. We generated datasets for each value of \(p\) from 0.80 to 0.98, inclusive, in steps of 0.02.

\paragraph{Real World Datasets} We also experiment on the Covertype dataset taken from the UCI dataset repository \cite{dua2017uci}.  This dataset is 53-dimensional with 7 classes and has 2\%-98\% (11340-565892 examples) train-test split.

\begin{figure*}
	\centering
	\begin{subfigure}[t]{0.33\textwidth}
		\includegraphics[width=\textwidth]{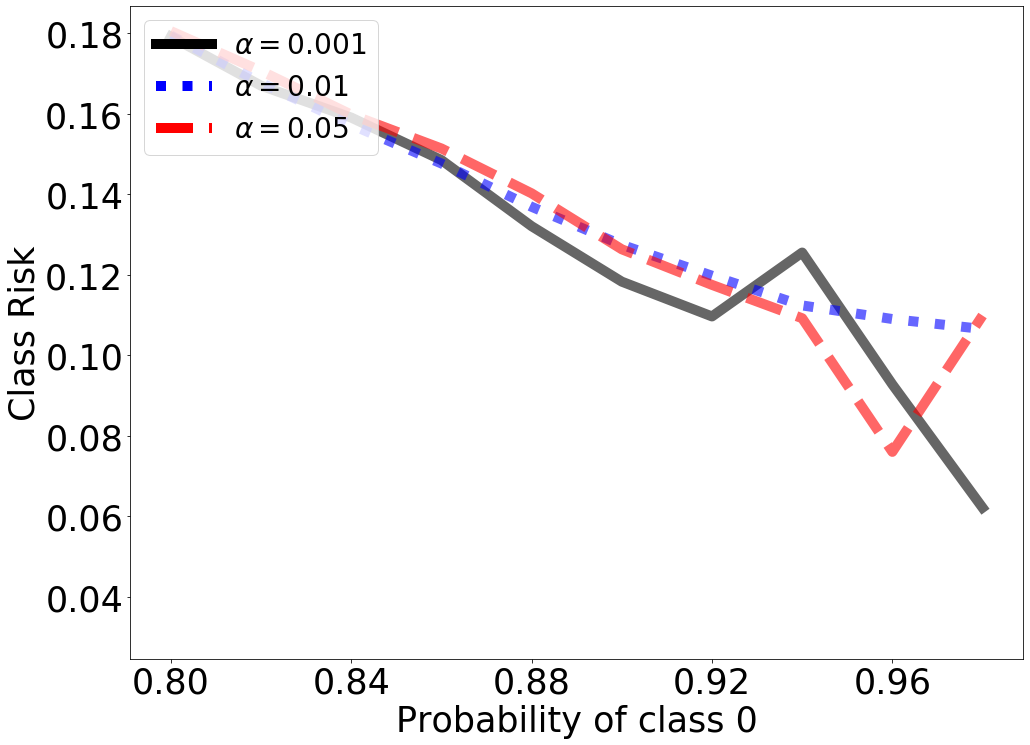}
		\caption{Varying \(\alpha\) for LCVaR}
	\end{subfigure}
	\begin{subfigure}[t]{0.33\textwidth}
		\includegraphics[width=\textwidth]{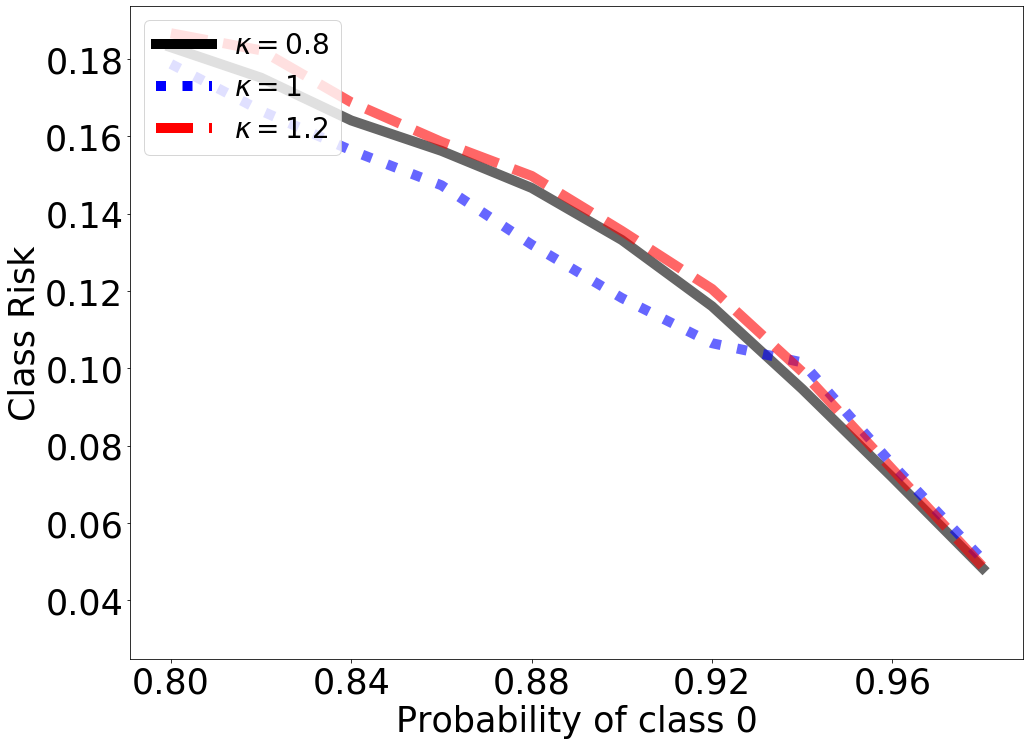}
		\caption{Varying \(\kappa\) for LHCVaR with \(\alpha=0.01\)}
	\end{subfigure}
	\caption{Worst class risk of different \(\alpha\) values for LCVaR and \(\kappa\) values for LHCVaR in the synthetic setting. Across different levels of class imbalance, \(\alpha\) and \(\kappa\) do not have a significant impact on worst class risk of LCVaR and LHCVaR.}
	\label{fig:SyntheticAblation}
\end{figure*}

\begin{figure*}[bth]
	\centering
	\begin{subfigure}[t]{0.25\textwidth}
		\includegraphics[width=\textwidth]{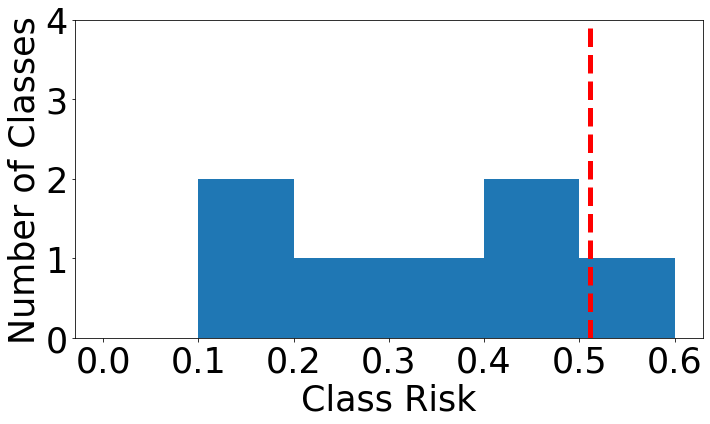}
		\vspace{-15pt}
		\caption{Standard}
	\end{subfigure}\begin{subfigure}[t]{0.25\textwidth}
		\includegraphics[width=\textwidth]{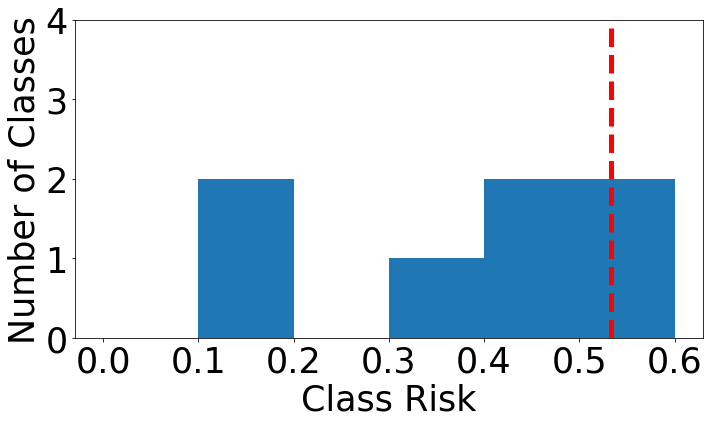}
		\vspace{-15pt}
		\caption{Balanced}
	\end{subfigure}\begin{subfigure}[t]{0.25\textwidth}
		\includegraphics[width=\textwidth]{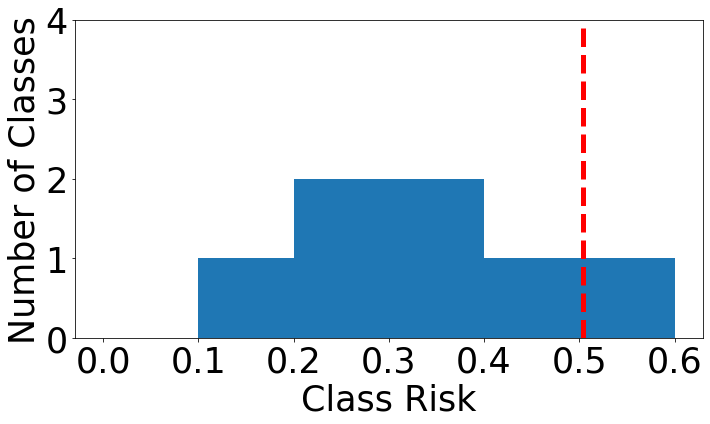}
		\vspace{-15pt}
		\caption{LCVaR}
	\end{subfigure}\begin{subfigure}[t]{0.25\textwidth}
		\includegraphics[width=\textwidth]{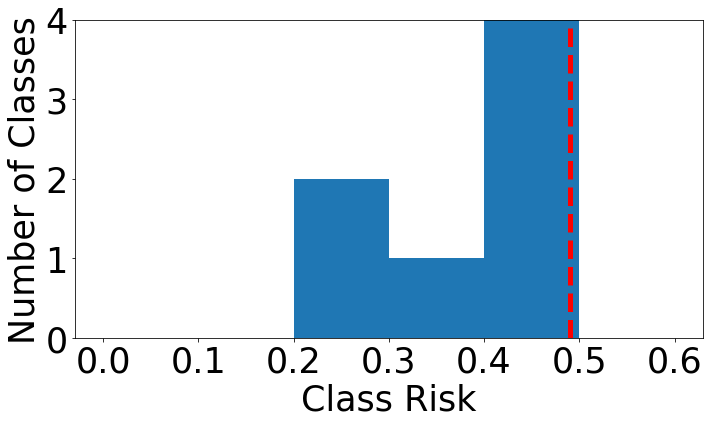}
		\vspace{-15pt}
		\caption{LHCVaR}
	\end{subfigure}
	\caption{Histogram of class risks for each method on the Covertype dataset. The red line marks the largest risk for each method. The distribution of class risks for standard and balanced methods are more spread out, while the class risks for LCVaR and LHCVaR are more concentrated near the max class risk. The max class risks are slightly lower for LCVaR and LHCVaR compared to the other two methods.}
	\label{fig:CovtypeHists}
\end{figure*}

\subsection{Results}

\paragraph{Synthetic} 
In \cref{fig:SyntheticResults}, we can 
observe that the the worst case class risk of LCVaR and LHCVaR across multiple values of \(p\) is better than both the standard and balanced classifier. The classwise risks of LCVaR and LHCVaR are relatively close across different values of \(p\), while there is a large discrepancy between classwise risks of the classifier trained under the standard or balanced risks. Note that the more significant the imbalance, i.e., the smaller the \(p\), the better LCVaR and LHCVaR perform compared to balanced risk on class 0, while paying a progressively smaller price on the class 1 risk. The same is also true between both LCVaR and LHCVaR and the standard risk, although with the classes swapped. We note that while the worst class risk of LCVaR and LHCVaR seem to decrease with greater imbalance, this may not be a general property of these methods. Rather, this is more likely an artifact of the synthetic setup having more probability mass further from the decision boundary as the imbalance increases. The main observation is simply that LCVaR and LHCVaR have lower worst class risk in comparison to the baseline methods. Thus, this empirically demonstrates that both LCVaR and LHCVaR can significantly improve the highest class risks while losing little in performance on classes with lower risks.

In addition to comparing against baselines, we also examine the effect of different choices of \(\alpha\) and \(\kappa\) on LCVaR and LHCVaR, respectively. The results of this comparison are in \cref{fig:SyntheticAblation}. In both methods, varying the hyperparameters does not have a dramatic impact on the behavior of the worst class risk for both these methods across different values of class imbalance.

\begin{table}[ht]
	\centering
	\caption{Standard risk and risk of the worst class for each method on the Covertype dataset. LCVaR and LHCVaR improve on the worst class risk.}
	\vspace{-5pt}
	\begin{tabular}{c|c|c}
		\textbf{Method} & \textit{Standard Risk} & \textit{Worst Class Risk}\\
		\hline
		LHCVaR & 0.3979 & \textbf{0.4907}\\
		LCVaR & 0.3384 & 0.5037\\
		Standard & \textbf{0.3275} & 0.5111\\
		Balanced & 0.3765 & 0.5333 
	\end{tabular}
	\label{table:CovtypeRiskTable}
\end{table}

\begin{table}[h]
	\centering

	\caption{Performance of LCVaR across different $\alpha$ values, and LHCVaR across different \(\kappa\) values. The performance each method is relatively agnostic to  choices of \(\alpha\) and \(\kappa\), although the smallest  choices of \(\alpha\) and \(\kappa\) for each method have the largest changes in worst class risk, respectively. }
	\begin{tabular}{c|c|c|c|c}
		\textbf{Method}         & \textit{$\alpha$} & \textit{$\kappa$} & \textit{Standard Risk} & \textit{Worst Class Risk} \\ \hline
		\multirow{3}{*}{LCVaR}  & 0.01              & N/A               & 0.4266                 & 0.5474                    \\
		& 0.05              & N/A               & 0.3993                 & 0.4932                    \\
		& 0.1               & N/A               & 0.4060                 & 0.5037                    \\ \hline
		\multirow{3}{*}{LHCVaR} & 0.05              & 0.8               & 0.4308                 & 0.5408                    \\
		& 0.05              & 1                 & 0.3979                 & 0.4907                    \\
		& 0.05              & 1.2               & 0.4171                 & 0.5050                   
	\end{tabular}
	\label{table:CovtypeAblation}
\end{table}

\paragraph{Real} 
In \cref{table:CovtypeRiskTable}, we observe that LCVaR and LHCVaR have better worst class risks than the standard and class weighted baselines. However, improving worst class risk comes at a cost to to the standard risk in the case of both LCVaR and LHCVaR. This tradeoff is reflected in the histograms of class risk shown in \cref{fig:CovtypeHists}, where the class risks under the standard and balanced classifiers are more spread out and have classes with much lower risks. On the other hand, LCVaR and LHCVaR have class risk distributions that are more concentrated towards the worst class risk value. Consequently, LCVaR and LHCVaR achieve a lower worst class risk, which is consistent with our theory.

We also compare the effect of choosing different \(\alpha\) and \(\kappa\) on LCVaR and LHCVaR, respectively, in \cref{table:CovtypeAblation}. We see that the worst class risk still performs well under different choices of \(\alpha\) and \(\kappa\), although there is some degradation when the \(\alpha\) is smaller than optimal choice, in the case of LCVaR, and when \(\kappa\) is smaller and produces a sharper distribution, in the case of LHCVaR.

\section{Discussion}
\label{sec:Discussion}

In this work, we have studied the effect of optimizing classifiers with respect to different weightings and developed robust risk measures that minimizes worst case weighted risk across a set of weightings. We subsequently show that optimizing with respect to LCVaR and LHCVaR empirically improves the worst class risk, at a reasonable cost to accuracy. One future direction for research  is to understand the Bayes optimal classifier under LCVaR and LHCVaR. Another more applied direction could be to consider domain shift.
If we formalize each prior over the classes as a weighting, optimizing LCVaR or LHCVaR may improve performance when the test class priors are different from the training class priors.


	
	\bibliography{extreme_refs}
	\pagebreak
	
	\appendix
\section{Organization}
\label{sec:Organization}

Our appendices contain proofs, all of which are omitted from the main text, and additional details on the weighting approach to imbalanced classification.
In Appendix~\ref{sec:PluginClassificationDetails}, we prove our results for plug-in classification.
Additionally, we show that a threshold-shifted version of Tsybakov's noise condition implies precise rates for the convergence of expected excess risk.
Finally, we briefly discuss the universality of weighting, i.e., the fact that choosing the correct weighting is often the means to optimizing other classification metrics, for a class of classification metrics.

In Appendix~\ref{sec:TradeoffERM}, we show a result analogous to Proposition~\ref{prop:ExcessqprimeRisk} for empirical risk minimization.
However, the result is less illuminating, since it depends on the optimal classifiers \(f^{*}_{q}\) \(f^{*}_{q'}\) for weights \(q\) and \(q'\) within the class \(\functions\), which is difficult to analyze more precisely in any generality.

In Appendix~\ref{sec:RobustWeightingProofs}, we prove our results for robust weighting. 
This includes both the convergence and duality results.
In Appendix~\ref{sec:ConditionalSamplingModel}, we prove the analog of Theorem~\ref{theorem:MainStandardSampling} for the conditional sampling model. 
The only difference to observe is that the bounded differences inequality is used with respect to a different number of variables, which leads to a slightly stronger bound.

In Appendix~\ref{sec:GDA}, we discuss gradient descent-ascent, which is a standard algorithm for solving robust optimization problems.
This may be used in cases where the uncertainty set \(Q\) does not lead to LCVaR or LHCVaR.
In Appendix~\ref{sec:AdditionalLemmas} and Appendix~\ref{sec:StandardLemmas}, we provide technical and standard lemmas respectively.

Finally, we include additional experiment details, and an algorithm for analytically deriving dual variables in the empirical LCVaR and LHCVaR formulations in Appendix~\ref{sec:AdditionalExp}.
\section{Plug-in Classification Details}
\label{sec:PluginClassificationDetails}

In this appendix, we provide additional details surrounding plug-in classification.
We first start with the proofs of results from the main text, and then we provide more concrete results based on an additional assumption of that gives us faster rates of convergence.
Finally, we provide details on the universality of weighting.

For simplicity, we assume that our density estimator \(\hat{\eta}\) is a local polynomial estimator \citep{stone_optimal_1982}, but the properties that the estimator must have for the following proofs to succeed can also be satisfied by other nonparametric estimators such as kernelized regression \citep{krzyzak_pointwise_1987}, and nearest-neighbors regression \citep{gyorfi_rate_1981}. 
\subsection{Proofs}
\label{subsec:PluginProofs}

\begin{proof}[Proof of Lemma~\ref{lemma:WeightedBayesClassifier}]
By the definition of the \(q\)-weighted risk and the tower property, we have
\begin{align*}
\risk_{01, q}(f)
&=
\expect[ q_{Y} \risk_{Y}(f)] \\
&=
\expect\left[
q_{0}(1 - \eta(X)) \expect[\ind\left\{f(X) = 1 \right\} | Y = 0] +
 q_{1} \eta(X) \expect[\ind\left\{f(X) = 0\right\} | Y = 1]
\right] \\
=&
\expect\left[
q_{0}(1 - \eta(X)) \ind\left\{f(X) = 1 \right\}
+
q_{1} \eta(X) \ind\left\{f(X) = 0\right\}
\right].
\end{align*}
By inspection, we observe that the \(f^{*}\) minimizing the \(q\)-risk satisfies
\[
f^{*}(x)
=
\begin{cases}
1 & q_{0} (1 - \eta(x)) < q_{1} \eta(x) \\
0 & q_{0} (1 - \eta(x)) > q_{1} \eta(x).
\end{cases}
\]
When \(q_{0} (1 - \eta(x)) = q_{1} \eta(x)\), we note that the decision may be arbitrary because it does not affect the risk.
So, by simple algebraic manipulation, we have
\[
f^{*}(x)
=
\ind\left\{
\eta(x) \geq \frac{q_{0}}{q_{0} + q_{1}}
\right\},
\]
which completes the proof.
\end{proof}


Now, we turn to Proposition~\ref{prop:ExcessqRisk}, Proposition~\ref{prop:IrreducibleError}, and Proposition~\ref{prop:ExcessqprimeRisk}.
Our proofs rely on the following lemma of \cite{yang1999minimax}.
First, we introduce a few additional definitions.
Denote the \(\epsilon\)-entropy of \(\Sigma\) with respect to the \(L_p\) norm for \ \(1 \leq p \leq \infty\) by \(\metricentropy(\epsilon, \Sigma, L_p)\).
We define the norm
\begin{align*}
\norm{\hat{\eta} - \eta}_{L_1(P_X)} = \int \left|\eta(x) - \hat{\eta}(x)\right| d P_X 
\end{align*}.

\begin{lemma}[Theorem 1 of \citealt{yang1999minimax}]
Let \(\eta\) be an element of  \(\Sigma\) where \(\Sigma\) is a class of functions from \(\reals^d\) to \([0, 1]\).
Suppose the \(\epsilon\)-entropy satisfies
\begin{align*}
	\metricentropy(\epsilon, \Sigma, L_p) \leq C\epsilon^{-\rho},
\end{align*}
where \(C > 0, \rho > 0\). 
Then the minimax upper bound on the mean convergence rate of any regression estimator \(\hat{\eta}\) is 
\begin{align*}
	\underset{\hat{\eta}}{\min} \;\underset{\eta \in \Sigma}{\max}\expect\left[\norm{\eta - \hat{\eta}}_{L_1(P_X)}\right] \leq O\left(n^{-\frac{1}{2 + \rho}}\right),
\end{align*}
where the expectation is taken over the samples for estimating \(\hat{\eta}\). 
\label{lemma:YangMinimax}
\end{lemma}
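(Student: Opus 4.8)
The plan is to exhibit a single estimator $\hat{\eta}$ whose expected $L_1(P_X)$ error meets the claimed rate; since the statement is an upper bound on the minimax risk $\min_{\hat{\eta}}\max_{\eta \in \Sigma}$, one good construction suffices. The natural model is the classification likelihood in which $(X_i, Y_i)$ are i.i.d.\ with $Y_i \mid X_i \sim \mathrm{Bernoulli}(\eta(X_i))$, so that $\eta$ indexes the family of conditional laws $P_\eta$. The first step is to record the metric comparisons that let us pass between the $L_1(P_X)$ loss and a statistical divergence. Because every function in $\Sigma$ takes values in $[0,1]$, for any $g$ we have $\norm{g}_{L_1(P_X)} \leq \norm{g}_{L_2(P_X)}$, and the Hellinger distance between the Bernoulli conditionals $P_\eta$ and $P_{\eta'}$ is controlled by $\norm{\eta - \eta'}_{L_2(P_X)}$ up to universal constants (with a degeneracy as $\eta$ nears the boundary $\{0,1\}$ that I defer to the obstacle below). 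Consequently, fixing a resolution $\epsilon$ and taking a minimal $\epsilon$-net of $\Sigma$ in the $L_p$ norm of the hypothesis, of cardinality $N \leq \exp(\metricentropy(\epsilon, \Sigma, L_p)) \leq \exp(C\epsilon^{-\rho})$, yields (for uniformly bounded functions, at the same polynomial exponent) an $O(\epsilon)$-net of $\{P_\eta\}$ in Hellinger distance.

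The estimator I would use is the Yang--Barron information-theoretic aggregate over such a net: let $\{\eta_1, \ldots, \eta_N\}$ be the net and form the Bayesian mixture predictive density with uniform prior on it. The redundancy bound for this mixture controls the Cesàro-averaged cumulative Kullback--Leibler risk by the sum of a \emph{complexity} term $\log N / n \leq C\epsilon^{-\rho}/n$ and an \emph{approximation} term of order $\epsilon^2$ coming from the closest net point to $\eta$. Extracting a single estimator from the predictive sequence (by averaging, or by an online-to-batch conversion that selects a uniformly random index) and converting the resulting KL/Hellinger risk back to $L_2(P_X)$ and then to $L_1(P_X)$ via the comparisons above gives
\begin{align*}
\expect\left[\norm{\eta - \hat{\eta}}_{L_1(P_X)}\right]
\leq
C'\sqrt{\epsilon^2 + \frac{\epsilon^{-\rho}}{n}}.
\end{align*}
An essentially equivalent route replaces the mixture by the empirical least-squares minimizer over the net, for which a standard oracle inequality for bounded-response regression yields the same two-term bound.

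It then remains to optimize over $\epsilon$. Balancing the two terms, i.e.\ choosing $\epsilon = \epsilon_n$ with $\epsilon_n^2 \asymp \epsilon_n^{-\rho}/n$, gives $\epsilon_n \asymp n^{-1/(2+\rho)}$ and hence $\expect[\norm{\eta - \hat{\eta}}_{L_1(P_X)}] \leq O(n^{-1/(2+\rho)})$ uniformly over $\eta \in \Sigma$, which is the claim. I expect the main obstacle to lie in the divergence bookkeeping rather than in the balancing: establishing the two-term risk bound at the \emph{fast} $\log N/n$ rate, rather than the slow $\sqrt{\log N/n}$ that a naive union bound of Hoeffding would give, requires exploiting the curvature of the squared loss (equivalently, a Bernstein-type control of the Hellinger affinity), and one must handle the degeneracy of the Kullback--Leibler divergence as $\eta$ approaches the boundary $\{0,1\}$. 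Care is also needed to confirm that the hypothesis, stated for the $L_p$ entropy, transfers to control of the $L_2(P_X)$ (equivalently Hellinger) covering numbers that the argument actually consumes.
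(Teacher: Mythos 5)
This lemma is not actually proved in the paper---it is imported verbatim as Theorem~1 of \citet{yang1999minimax}---and your sketch is essentially a faithful reconstruction of the argument in that cited source: take a minimal \(\epsilon\)-net of \(\Sigma\), form the uniform-prior Bayesian mixture (the Yang--Barron information-theoretic estimator) whose cumulative KL redundancy splits into the complexity term \(C\epsilon^{-\rho}/n\) plus an approximation term of order \(\epsilon^{2}\), extract a single estimator by Ces\`aro averaging, convert Hellinger risk to \(L_1(P_X)\) risk, and balance to get \(\epsilon_n \asymp n^{-1/(2+\rho)}\). The two caveats you flag are precisely the points the cited proof attends to---the KL degeneracy near \(\eta \in \{0,1\}\) is handled there by keeping the net bounded away from the boundary (note the Hellinger-to-\(L_1\) comparison itself needs no such care, since \(h^2(P_\eta, P_{\eta'}) \leq 2\norm{\eta - \eta'}_{L_1(P_X)}\) and conversely \(\norm{\eta - \eta'}_{L_1(P_X)} \leq 2\,h(P_\eta, P_{\eta'})\) by Cauchy--Schwarz for \([0,1]\)-valued functions), and the \(L_p\)-to-\(L_2\)/Hellinger entropy transfer is where the restriction to polynomial entropy of the same order matters---so your proposal is correct at the level of detail given and matches the approach of the source the paper relies on.
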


The upper bound converges at a rate of $O\left(n^{-1/(2 + \rho)}\right)$ where $\rho$ is a
smoothness parameter for $\eta$, with standard assumptions on the function class of $\eta$. For the class of $\beta$-H\"older functions, $\rho = \beta / d$, which is our setting of interest.

\begin{proof}[Proof of Proposition~\ref{prop:ExcessqRisk}]
We start by bounding the excess $q$-risk for a classifier $f$ by
\begin{align*}
\excess_q(f) 
&=
R_q(f) - R_q(f^*_q) \\
&= 
(q_0 + q_1)
\int \left|\eta(x) - \frac{q_0}{q_0 + q_1}\right| \ind\left\{f(x) \neq f_q^*(x)\right\} d P_{X}\\
&\leq 
(q_0 + q_1)
\int \left|\eta(x) - \hat{\eta}(x)\right|  d P_X,
\end{align*}
where the upper bound follows when
\(|\eta(x) - q_{0}/(q_{0} + q_{1})| \leq |\eta(x) - \hat{\eta}(x)|\) when \(f(x) \neq f^{*}_{q}(x)\).
Finally, applying Lemma~\ref{lemma:YangMinimax} for \(\beta\)-H\"older functions as noted above completes the proof.
\end{proof}

\begin{proof}[Proof of Proposition~\ref{prop:IrreducibleError}]
The proposition follows from basic algebraic manipulations and one common observation in nonparametric classification.
We have
\begin{align*}
	\irreducible 
	&=
	\expect\left[R_{q'}(f_{q'}^*(X)) - R_{q'}(f_{q}^*(X))\right] \\
	&= 
	\int\limits \left|q'_0(1 - \eta(x)) + q'_1\eta(x)\right| dP_X \ind\left\{f_{q'}^*(x) \neq f_q^*(x)\right\}\\
	&=
	(q'_0 + q'_1)\int\limits \left|\eta(x) - t_{q'}\right| \ind\left\{f^*_{q'}(x) \neq f^*_q(x)\right\} dP_X\\
	&\leq 
	(q'_0 + q'_1)\left|t_q - t_{q'}\right| \prob\left(f_{q'}^*(X) \neq f_q^*(X)\right),
\end{align*}
where in the inequality we use the fact that if \(f_{q'}^*(X) \neq f_q^*(X)\) then \(\eta(X)\) must be in \([\underline{t}_{q, q'}, \overline{t}_{q, q'}]\).
Thus, we have \(\left|\eta(x) - t_{q'}\right| \leq |\overline{t}_{q, q'} - \underline{t}_{q, q'}| = \left|t_q - t_{q'}\right|.\)
\end{proof}

\begin{proof}[Proof of Proposition~\ref{prop:ExcessqprimeRisk}]
Recall that the expected estimation error is
\[
\estimation = \expect \left[R_{q'}(\hat{f}_q) - R_{q'}(f^*_q)\right]
\]
We can upper bound the term inside the expectation by
\begin{align*}
R_q'(\hat{f}_q) - R_{q'}(f^*_{q})
&=
\int\limits q'_0(1 - \eta(x))\ind\{\hat{f}_q(x) = 1\} + q'_1\eta(x) \ind\{\hat{f}_q(x) = 0\} dP_X\\
&\qquad - 
\int\limits q'_0(1 - \eta(x))\ind\{f^*_q(x) = 1\} + q'_1\eta(x) \ind\{f^*_q(x) = 0\} dP_X 
\\
&=
\int\limits (q'_0(1 - \eta(x)) - q'_1\eta(x))\ind\{\hat{f}_q(x) = 1, f^*_q(x) = 0\} dP_{X}\\
&\qquad + 
(q'_1\eta(x) - q'_0(1 -\eta(x))) 
\int \ind\{\hat{f}_q(x) = 0, f^*_q(x) = 1\} dP_X
\\
&= 
(q'_0 + q_1')\int\limits \left|\eta(x) - \frac{q_0'}{q_0' + q_1'}\right|\ind\{\hat{f}_q(x) \neq f^*_{q}(x)\} dP_X\\
&\leq 
(q_0' + q_1')\int\limits \left(\left|\eta(x) - t_{q}\right| + \left|t_{q'} - t_{q}\right|\right) \ind\{\hat{f}_q(x) \neq f^*_{q}(x)\} dP_X,
\end{align*}
where we use the triangle inequality in the final line.
Next, using the fact that \(|\eta(x) - t_{q}| \leq |\eta(x) - \hat{\eta}(x)|\) when \(f(x) \neq f^{*}_{q}(x)\), we have
\begin{align*}
R_q'(\hat{f}_q) - R_{q'}(f^*_{q})
&\leq 
\ (q_0' + q_1') \left( \int\limits \left|\eta(x) - t_q\right|\ind\left\{\hat{f}_q(x) \neq f^*_q(x)\right\} dP_X 
\right. \\ &\left. \qquad 
+ \left|t_{q'} - t_q\right|\prob\left(\hat{f}_q(x) \neq f^*_q(x)\right)\vphantom{\int\limits}\right)
\\
&\leq 
(q_0' + q_1' )\left(\int\limits \left|\eta(x) - \hat{\eta}(x)\right| dP_X + \left|t_{q'} - t_q\right|\prob\left(\hat{f}_q(x) \neq f^*_q(x)\right)\right) 
\end{align*}
Thus, we obtain the upper bound
\begin{align*}
	\estimation 
	\leq 
	(q_0' + q_1' )\left(\expect\left[\int\limits \left|\eta(x) - \hat{\eta}(x)\right| dP_X\right] + \left|t_{q'} - t_q\right|\expect\left[\prob\left(\hat{f}_q(x) \neq f^*_q(x)\right)\right]\right)
\end{align*}
Therefore we have completed the proof. Applying Lemma~\ref{lemma:YangMinimax} to the first term also proves Corollary \ref{corollary:ExcessqprimeRiskHolder}.
\end{proof}

\subsection{Shifted Margin Assumption}
\label{subsec:ShiftedMarginAssumption}

An important tool in nonparametric classification is the Tsybakov margin condition.
\begin{definition}
A distribution \(P_{X, Y}\) satisfies the \((\alpha, C)\)-margin condition if for all \(t > 0\), we have 
\[
\prob\left(0 \leq \left| \eta(X) - \frac{1}{2}\right|
\leq t
\right)
\leq 
C t^{\alpha}.
\]
\label{def:TsybakovMargin}
\end{definition}
Subsequent works \citep{audibert_fast_2007, chaudhuri2014rates} leverage this assumption to provide fast, explicit rates of convergence for expected risk. 
The margin condition is naturally suited to standard plug-in classification because the decision threshold is \(1/2\); for weighted plug-in classification, we need a shifted margin condition.
\begin{definition}
A distribution \(P_{X, Y}\) satisfies the \((q, \alpha, C)\)-margin condition if for all \(t > 0\), we have
\begin{align*}
    \prob\left(0 \leq \left|\eta(x) - t_{q}\right| \leq t\right) \leq C t^\alpha.   
\end{align*}
\label{def:ShiftedMarginCondition}
\end{definition}
Using the shifted margin condition, we can obtain better results than we presented in the main paper.
However, the shifted margin condition may be be less interpretable than the original margin condition.
Intuitively, the original margin condition says that there is very little probability mass where distinguishing between \(Y = 0\) and \(Y = 1\) is difficult, i.e., near \(\eta(X) = 1/2\).
For other \(t_q\), the decision may not be difficult in that \(t_q\) may be far from \(1/2\), but we would still require little mass near this point.


\begin{proposition}
Suppose the distribution \(P_{X, Y}\) satisfies the \((q, \alpha, C)\)-margin condition and \(X\) has a density that is lower bounded by some constant \(\mu_{\min}\) on its support.
Additionally, suppose that \(\eta\) is \(\beta\)-H\"{o}lder.
Then, the excess expected \(q'\)-risk of \(\hat{f}_q\) 
satisfies the bound
\[
\expect \excess_{q'}(\hat{f}_{q})
\leq 
(q_0' + q_1')\left(O\left(\frac{\log n}{n}\right)^{\frac{\beta}{2\beta + d}} + \left|t_{q'} - t_q\right|O\left(\frac{\log n}{n}\right)^{\frac{\alpha \beta}{2\beta + d}}\right)
+
\irreducible
\]
\label{prop:ExcessqRiskShiftedMargin}
\end{proposition}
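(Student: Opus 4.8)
The plan is to combine the structural estimation-error bound already established in Proposition~\ref{prop:ExcessqprimeRisk} with two ingredients tailored to the new assumptions: a high-probability pointwise deviation bound for the local polynomial estimator $\hat\eta$ (which is where the density lower bound $\mu_{\min}$ and the $\beta$-H\"older smoothness enter) and the shifted $(q, \alpha, C)$-margin condition. Throughout, let $\psi_n = (\log n / n)^{\beta/(2\beta + d)}$ denote the sup-norm rate achieved by $\hat\eta$: the density lower bound guarantees that each local neighborhood contains enough samples to control the variance, the H\"older exponent controls the bias of order $h^\beta$, and the logarithmic factor is the usual price for uniform, high-probability control after balancing the bandwidth $h \approx (\log n / n)^{1/(2\beta + d)}$.

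First I would start from the decomposition in \eqref{eqn:AltWeightDecompositon}, keeping $\irreducible$ as the additive term (it is already controlled by Proposition~\ref{prop:IrreducibleError}), and reduce matters to the estimation error. Reusing the intermediate inequality from the proof of Proposition~\ref{prop:ExcessqprimeRisk}, the estimation error is bounded by $(q_0' + q_1')$ times the sum of an excess $q$-risk piece, namely $\expect\int |\eta(x) - t_q|\,\ind\{\hat f_q(x) \neq f^*_q(x)\}\,dP_X$, and a disagreement piece $|t_{q'} - t_q|\,\expect[\prob(\hat f_q(X) \neq f^*_q(X))]$. The two summands of the claimed bound come from these two pieces respectively.

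For the excess $q$-risk piece I would use $|\eta(x) - t_q| \leq |\eta(x) - \hat\eta(x)|$ on the disagreement set (exactly as in Proposition~\ref{prop:ExcessqprimeRisk}) to bound it by $\expect\norm{\eta - \hat\eta}_{L_1(P_X)}$, and then invoke the local polynomial estimation rate, giving $O(\psi_n)$ and hence the first term $O((\log n / n)^{\beta/(2\beta + d)})$; the margin condition would in fact sharpen this to $O(\psi_n^{1 + \alpha})$, so the stated $O(\psi_n)$ is a valid (loose) bound. For the disagreement piece, the key observation is that $\{\hat f_q(x) \neq f^*_q(x)\} \subseteq \{|\hat\eta(x) - \eta(x)| \geq |\eta(x) - t_q|\}$, so I would split the domain at the level $|\eta(x) - t_q| \approx \psi_n$: on the near-threshold region $\{|\eta(x) - t_q| \leq \psi_n\}$ the shifted margin condition bounds the $P_X$-measure by $C\psi_n^\alpha$, while on the far region a disagreement forces a deviation larger than $\psi_n$, whose probability decays exponentially by the pointwise concentration bound. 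Summing the exponential tails over dyadic shells of $|\eta - t_q|$ and weighting by the margin condition shows the far region contributes at the same order, so the disagreement probability is $O(\psi_n^\alpha) = O((\log n / n)^{\alpha\beta/(2\beta + d)})$, yielding the second term.

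The main obstacle is this last peeling argument. One must first establish the exponential pointwise deviation inequality for the local polynomial estimator, the step where $\mu_{\min}$ is indispensable to lower bound the effective local sample size, and then calibrate the bandwidth so that bias and variance balance at rate $\psi_n$; both the logarithmic factor and the precise exponents $\beta/(2\beta + d)$ and $\alpha\beta/(2\beta + d)$ emerge from this calibration, together with the careful verification that the far-region tail does not dominate. Once the disagreement probability and the $L_1$ rate are in hand, assembling the bound is routine: factor out $(q_0' + q_1')$, add back $\irreducible$, and the claim follows.
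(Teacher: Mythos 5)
Your skeleton matches the paper's: both start from the final inequality in the proof of Proposition~\ref{prop:ExcessqprimeRisk}, keep \(\irreducible\) as an additive term, and handle the \(L_1\) piece by bounding it with the sup norm and invoking the local polynomial rate. The genuine divergence is in the disagreement term \(\expect[\prob(\hat f_q(X) \neq f^*_q(X))]\). You propose the classical Audibert--Tsybakov route: split at the level \(|\eta(x) - t_q| \approx \psi_n\), apply the shifted margin condition on the near-threshold region, and control the far region via a pointwise exponential deviation inequality for \(\hat\eta\) summed over dyadic shells. The paper avoids peeling entirely: its Lemma~\ref{lemma:ShiftedMarginBound} applies the \((q,\alpha,C)\)-margin condition at the \emph{random} level \(t = \norm{\eta - \hat\eta}_\infty\), giving \(\prob\bigl(\hat f_q(x) \neq f^*_q(x)\bigr) \leq C \norm{\eta - \hat\eta}_\infty^{\alpha}\) conditionally on the sample, and then takes expectations using Stone's theorem (Lemma~\ref{lemma:NonparamConvRate}), which directly bounds the \(\alpha\)-th moment \(\expect[\norm{\eta - \hat\eta}_\infty^\alpha] \leq C (\log n / n)^{\alpha\beta/(2\beta+d)}\). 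This two-line argument is exactly calibrated to the stated moment-type bound and needs only the cited sup-norm rate, whereas your peeling argument requires you to first establish the exponential pointwise concentration for local polynomial estimators under the \(\mu_{\min}\) density assumption --- the step you yourself flag as the main obstacle and leave unproven (it is available in the literature, e.g., under the strong density assumption of \citet{audibert_fast_2007}, so this is heaviness rather than a fatal gap). What your route buys in exchange is sharpness: as you note, it also yields the \(O(\psi_n^{1+\alpha})\) improvement on the first term and high-probability rather than in-expectation statements, neither of which the proposition claims. One small simplification you missed: once you bound the \(L_1\) term by \(\expect[\norm{\eta - \hat\eta}_\infty]\), the same margin-at-random-level trick disposes of the disagreement term with no domain splitting at all.
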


Before proving this proposition, we prove a helpful lemma that leverages the shifted margin condition, similar to one from \cite{audibert_fast_2007}.
\begin{lemma}
For a fixed density estimate \(\hat{\eta}\), if \(P_{X, Y}\) satisfies the \((q, \alpha, C)\)-margin condition, then following upper bound is always true:
\begin{align*}
    \prob\left(\hat{f}_q(x) \neq f^*_q(x), \eta(x) \neq t_{q}\right)
    &\leq 
    C \left\|\eta - \hat{\eta}\right\|_\infty^\alpha.
\end{align*}
\label{lemma:ShiftedMarginBound}
\end{lemma}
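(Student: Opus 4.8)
The plan is to reduce the disagreement event to a small-margin event and then invoke the $(q, \alpha, C)$-margin condition directly. The key geometric observation is the one standard for plug-in rules: at any point $x$ where $\hat{f}_q(x) \neq f^*_q(x)$, the estimated regression value $\hat{\eta}(x)$ and the true value $\eta(x)$ must fall on opposite sides of the common threshold $t_q$, which forces $\eta(x)$ to lie within the estimation error of $t_q$. Since $\hat{\eta}$ is fixed throughout, $\norm{\eta - \hat{\eta}}_\infty$ is a deterministic quantity, and the whole argument is purely pointwise plus one application of the margin bound.

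First I would make the geometric observation precise by a two-case argument. Using $f^*_q(x) = \ind\{\eta(x) > t_q\}$ and $\hat{f}_q(x) = \ind\{\hat{\eta}(x) > t_q\}$, disagreement together with $\eta(x) \neq t_q$ means either (i) $\eta(x) > t_q$ while $\hat{\eta}(x) \leq t_q$, or (ii) $\eta(x) < t_q$ while $\hat{\eta}(x) > t_q$. In case (i) we have $0 < \eta(x) - t_q \leq \eta(x) - \hat{\eta}(x)$, and in case (ii) we have $0 < t_q - \eta(x) \leq \hat{\eta}(x) - \eta(x)$; in both cases $|\eta(x) - t_q| \leq |\eta(x) - \hat{\eta}(x)| \leq \norm{\eta - \hat{\eta}}_\infty$. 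I would note that the particular boundary convention at $\hat{\eta}(x) = t_q$ is immaterial, since conditioning on $\eta(x) \neq t_q$ keeps $\eta(x)$ strictly away from the threshold, and the distance inequality holds regardless of how ties for $\hat{\eta}$ are broken.

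This yields the set inclusion $\{\hat{f}_q(x) \neq f^*_q(x),\ \eta(x) \neq t_q\} \subseteq \{0 < |\eta(X) - t_q| \leq \norm{\eta - \hat{\eta}}_\infty\}$. Taking probabilities and instantiating the $(q, \alpha, C)$-margin condition of \cref{def:ShiftedMarginCondition} at the radius $t = \norm{\eta - \hat{\eta}}_\infty$ gives $\prob(\hat{f}_q(x) \neq f^*_q(x),\ \eta(x) \neq t_q) \leq C \norm{\eta - \hat{\eta}}_\infty^\alpha$, which is the claim.

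Because each step is a short deduction, there is no serious obstacle here; the only points requiring care are the bookkeeping of the boundary case $\eta(x) = t_q$ (handled by the event's own restriction $\eta(x) \neq t_q$) and confirming that the margin condition may legitimately be applied at the radius $\norm{\eta - \hat{\eta}}_\infty$, which is justified precisely because the lemma holds $\hat{\eta}$ fixed so this radius is a constant rather than a random variable.
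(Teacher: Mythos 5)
Your proof is correct and follows essentially the same route as the paper: both arguments reduce the disagreement event to the event $\{0 \leq |\eta(X) - t_q| \leq \|\eta - \hat{\eta}\|_\infty\}$ via the pointwise bound $|\eta(x) - t_q| \leq |\eta(x) - \hat{\eta}(x)|$ on disagreement, then apply the $(q, \alpha, C)$-margin condition at radius $\|\eta - \hat{\eta}\|_\infty$. Your explicit two-case treatment of the threshold crossing and the boundary convention merely fills in details the paper leaves implicit.
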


\begin{proof}
We use a simple upper bound on the error probability event and apply the margin condition to obtain
\begin{align*}
\prob\left(\hat{f}_q(x) \neq f^*_q(x), \eta(x) \neq t_{q})\right) 
&\leq 
\prob\left(0 \leq \left|\eta(x) - t_{q}\right| \leq \left|\eta(x) - \hat{\eta}(x) \right|\right)\\
&\leq 
\prob\left(0 \leq \left|\eta(x) - t_{q}\right| \leq \left\|\eta - \hat{\eta}\right\|_\infty\right)\\
&\leq 
C_0 \left\| \eta - \hat{\eta}\right\|_\infty^\alpha.
    \end{align*}
This completes the proof.
\end{proof}

Since, by \cref{lemma:ShiftedMarginBound}, we have proved an upper bound in terms of \(\norm{\eta - \hat{\eta}}_\infty^\alpha\), we now cite an upper bound on that quantity that is a property of regression estimator.
\begin{lemma}[Theorem 1 of \citealt{stone_optimal_1982}]
	Let \(\hat{\eta}\) be a local polynomial regression estimator, and suppose \(X\) has a density that is lower bounded by some constant \(\mu_{\min} > 0\) on its support. Then, we have the following upper bound:
	\begin{align}
	\expect\left[\norm{\eta - \hat{\eta}}^\alpha_\infty\right]
	\leq 
	C \left(\frac{\log n}{n}\right)^{\frac{\alpha \beta}{2\beta + d}}.
	\end{align}
	\label{lemma:NonparamConvRate}
\end{lemma}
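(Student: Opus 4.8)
The plan is to start from the additive decomposition of equation~\eqref{eqn:AltWeightDecompositon}, which writes $\expect\excess_{q'}(\hat{f}_q) = \estimation + \irreducible$. Since the irreducible term $\irreducible$ appears unchanged on the right-hand side of the claimed bound, the whole task reduces to sharpening the estimation term $\estimation$ under the shifted margin condition; I would leave $\irreducible$ entirely untouched and feed it straight through to the final inequality.

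For the estimation term I would invoke the bound already proved in Proposition~\ref{prop:ExcessqprimeRisk}, namely
\[
\estimation \leq (q_0' + q_1')\left(\expect\left[\norm{\eta - \hat{\eta}}_{L_1(P_X)}\right] + \left|t_{q'} - t_q\right|\,\expect\left[\prob\left(\hat{f}_q(X) \neq f^*_q(X)\right)\right]\right),
\]
so it suffices to control the two expectations separately and substitute. For the first, I would use $\norm{\eta - \hat{\eta}}_{L_1(P_X)} \leq \norm{\eta - \hat{\eta}}_{\infty}$ (valid because $P_X$ is a probability measure) and then apply Lemma~\ref{lemma:NonparamConvRate} with exponent $1$, i.e.\ Stone's sup-norm rate for the local polynomial estimator. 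This is exactly where the lower-bounded-density hypothesis on $X$ is consumed, and it yields $\expect[\norm{\eta-\hat{\eta}}_{\infty}] \leq C(\log n/n)^{\beta/(2\beta+d)}$, producing the first term of the claim.

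For the second expectation I would bring in the margin machinery. The key preliminary observation is that the $(q,\alpha,C)$-margin condition of Definition~\ref{def:ShiftedMarginCondition}, applied with $t\downarrow 0$, forces $\prob(\eta(X)=t_q)=0$; consequently the disagreement probability equals its restriction to the event $\{\eta(X)\neq t_q\}$, on which Lemma~\ref{lemma:ShiftedMarginBound} gives $\prob(\hat{f}_q(x)\neq f^*_q(x),\,\eta(x)\neq t_q)\leq C\norm{\eta-\hat{\eta}}_{\infty}^{\alpha}$. Taking expectations and applying Lemma~\ref{lemma:NonparamConvRate} with exponent $\alpha$ then yields $\expect[\prob(\hat{f}_q(X)\neq f^*_q(X))] \leq C(\log n/n)^{\alpha\beta/(2\beta+d)}$, which is precisely the factor multiplying $\left|t_{q'}-t_q\right|$ in the claim. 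Combining the two estimates, restoring the $(q_0'+q_1')$ prefactor, and re-appending $\irreducible$ gives the stated inequality.

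The step I expect to be the main obstacle --- or at least the only genuinely non-mechanical point --- is the treatment of the boundary set $\{\eta(X)=t_q\}$. Lemma~\ref{lemma:ShiftedMarginBound} controls only the disagreement event intersected with $\{\eta\neq t_q\}$, so without first establishing that the margin condition annihilates the mass at $\eta=t_q$ the raw probability $\prob(\hat{f}_q\neq f^*_q)$ would not be bounded by the margin rate. Everything remaining is bookkeeping: aligning the two invocations of Lemma~\ref{lemma:NonparamConvRate} at exponents $1$ and $\alpha$, and noting that passing from the $L_1(P_X)$ distance used in Proposition~\ref{prop:ExcessqprimeRisk} to the sup-norm only loosens the first term by a logarithmic factor, consistent with the $(\log n/n)$ form stated here.
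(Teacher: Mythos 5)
There is a genuine gap, and it is structural: you have not proved the statement you were given. The statement is Lemma~\ref{lemma:NonparamConvRate} itself --- the bound \(\expect\left[\norm{\eta - \hat{\eta}}_\infty^\alpha\right] \leq C\left(\log n / n\right)^{\alpha\beta/(2\beta+d)}\) for a local polynomial estimator when the design density is bounded below. Your proposal instead reproves Proposition~\ref{prop:ExcessqRiskShiftedMargin} (the shifted-margin excess-risk bound), and in doing so it \emph{invokes} Lemma~\ref{lemma:NonparamConvRate} twice, at exponents \(1\) and \(\alpha\), as a black box. Relative to the assigned statement this is circular: the very inequality to be established is consumed as an input, and nothing in your argument --- the decomposition into \(\estimation + \irreducible\), the use of Lemma~\ref{lemma:ShiftedMarginBound}, the handling of the boundary set \(\{\eta(X) = t_q\}\) --- bears on why the sup-norm moment of \(\eta - \hat{\eta}\) decays at the stated rate. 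For what it is worth, the paper does not derive this lemma either; it imports it as Theorem~1 of \citet{stone_optimal_1982}, so the honest ``proof'' here is a citation, and your write-up matches neither that citation nor a self-contained derivation.

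If you wanted to actually prove the lemma rather than cite it, the needed ingredients are the classical ones from Stone's analysis: (i) a deterministic bias bound of order \(h^\beta\) at bandwidth \(h\), from the \(\beta\)-H\"older assumption on \(\eta\); (ii) a stochastic term of order \(\sqrt{\log n / (n h^d)}\) \emph{uniformly} over \(x\), obtained via a covering argument over the support, where the hypothesis \(\mu_{\min} > 0\) is consumed --- it guarantees every local neighborhood contains \(\Omega(n h^d)\) samples with high probability, so the local polynomial fit is well-conditioned everywhere; (iii) balancing \(h \asymp (\log n / n)^{1/(2\beta + d)}\) to get \(\norm{\eta - \hat{\eta}}_\infty \leq C (\log n / n)^{\beta/(2\beta+d)}\) with probability \(1 - n^{-c}\); and (iv) conversion of this high-probability bound into a bound on \(\expect\left[\norm{\eta - \hat{\eta}}_\infty^\alpha\right]\), using that \(\eta\) and \(\hat{\eta}\) take values in \([0,1]\) so the exceptional event contributes at most \(n^{-c}\), which is dominated by the main rate. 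Step (iv) is the only point where the exponent \(\alpha\) enters, and it is where your proposal's casual ``apply the lemma with exponent \(\alpha\)'' hides real content: an in-expectation \(L_1\) or \(L_\infty\) rate does not by itself yield the \(\alpha\)-th moment rate for \(\alpha > 1\) without the boundedness-plus-tail argument above.
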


The above bound is the optimal rate of uniform convergence for nonparametric estimators under the regularity conditions shown here, and local polynomial regression achieves this optimal rate \citep{stone_optimal_1982}.

\begin{proof}[Proof of Proposition~\ref{prop:ExcessqRiskShiftedMargin}]
It suffices to prove an upper bound on the estimation error.
We have
\begin{align*}
\estimation
&\leq(q_0' + q_1')\left(
\expect\left[\int \left|\eta(x) - \hat{\eta}(x)\right| d P_X\right]
+ 
\left|t_{q'} -t_q\right| \expect\left[\prob(\hat{f}_q(x) \neq f^*_q(x))\right]\right)
\end{align*}
by the final equation of the proof of Proposition~\ref{prop:ExcessqprimeRisk}.
Next, we use the fact that for all \(x\) in \(\xspace\) we have \(\eta(x) - \hat{\eta}(x) \leq \norm{\eta -\hat{\eta}}_\infty\) and \cref{lemma:ShiftedMarginBound} to obtain
\begin{align*}
\estimation
&\leq 
(q_0' + q_1')\left(\expect\left[\left\|\eta - \hat{\eta}\right\|_\infty\right] +  \left|t_{q'} - t_q\right|C_0\expect\left[\norm{\eta - \hat{\eta}}_\infty^\alpha\right]\right)
\end{align*}
Finally, we apply \cref{lemma:NonparamConvRate} to obtain
\begin{align*}
\estimation
&\leq 
(q_0' + q_1')\left(C \left(\frac{\log n}{n}\right)^{\frac{\beta}{2\beta + d}} + \left|t_{q'} - t_{q}\right|C_0C \left(\frac{\log n}{n}\right)^{\frac{\alpha \beta}{2\beta + d}}\right),
\end{align*}
which completes the proof.
\end{proof}

\subsection{Universality of Weighting}
\label{subsec:UniversalityOfWeighting}

Since we may be interested in performance in error metrics other than risk, we discuss other classification metrics here.
In particular, we simply show that weighting is ``universal'' in that it can be used to optimize these other classification metrics.
The reason for this is that, in plug-in classification, optimizing many classification metrics is equivalent to altering the threshold for the classification, and this has been observed to lead to the optimal decision rule in many cases \citep{lewis1995evaluating, menon2013statistical, narasimhan2014statistical, koyejo2014consistent}.
We examine the specific case of metrics considered in \cite{koyejo2014consistent}.
\begin{definition}
Let $f$ be a classifier over \(\xspace\). 
Define the true positive, false negative, false positive, and true negative proportions to be
	\begin{align*}
		\TP &= \prob(Y = 1, f(X) = 1) &\qquad \FP &= \prob(Y = 0, f(X) = 1)\\
		\FN &= \prob(Y = 1, f(X) = 0) &\qquad \TN &= \prob(Y = 0, f(X) = 0).
	\end{align*}
A linear-fractional metric is defined as 
	\begin{align*}
		\loss(f, P_X, \eta) = \frac{a_0 + a_{11}\TP + a_{10}\FP + a_{01}\FN + a_{00}\TN}{b_0 + b_{11}\TP + b_{10}\FP + b_{01}\FN + b_{00}\TN}
	\end{align*} 
for constants $a_0, a_{11}, a_{10}, a_{01}, a_{00}, b_0, b_{11}, b_{10}, b_{01}, b_{00}$.
\label{def:LinearFrac}
\end{definition}
\cite{koyejo2014consistent} showed that the optimal classifier for any linear-fractional metric is simply a threshold classifier. 
Specifically, the following theorem is true.
\begin{theorem}[\citealt{koyejo2014consistent}]
	Let \(\loss\) be a linear-fractional metric, and let 
	\(P_X\) be absolutely continuous with respect to the dominating measure \(\numeasure\) on \(\xspace\).  
	Define
	\begin{align*}
		\loss^* = \underset{f}{\text{\normalfont max }} \loss(f, P_X, \eta)
	\end{align*}
	and
	\begin{align*}
		\threshold^* = \frac{(b_{10} - b_{00})\loss^* - a_{10} + a_{00}}{a_{11} - a_{10} - a_{01} + a_{00} - (b_{11} - b_{10} - b_{01} + b_{00})\loss^*}.
	\end{align*}
Then, the optimal classifier for $\loss$ is 
\(
f_\loss^*(x) = \ind\left\{\eta(x) > \delta^* \right\}
\) 
if 
\[
a_{11} - a_{10} - a_{01} + a_{00} - (b_{11} - b_{10} - b_{01} + b_{00})\loss^* > 0
\] and
\(
f_\loss^*(x) = \ind\left\{\eta(x) < \delta^* \right\}
\)
otherwise.
\end{theorem}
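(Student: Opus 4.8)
The plan is to reduce this fractional optimization to a pointwise thresholding problem via a Dinkelbach-style linearization. Write the metric as $\loss(f) = N(f)/D(f)$, where $N$ and $D$ denote the numerator and denominator of Definition~\ref{def:LinearFrac}, both affine in $(\TP, \FP, \FN, \TN)$. Assuming $D(f) > 0$ over all classifiers (so the ratio is well-defined), optimality of $\loss^*$ gives $N(f) \leq \loss^* D(f)$ for every $f$, with equality at the maximizer $f^*_\loss$. Hence the surrogate objective $G(f) := N(f) - \loss^* D(f)$ satisfies $G(f) \leq 0$ for all $f$ and $G(f^*_\loss) = 0$, so $f^*_\loss$ also maximizes $G$ and $\max_f G(f) = 0$. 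The key gain is that $G$ is \emph{linear}, rather than fractional, in the confusion-matrix entries.

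Next I would express $G$ pointwise. Using $\TP = \int \eta\, \ind\{f = 1\}\, dP_X$, $\FP = \int (1 - \eta)\, \ind\{f = 1\}\, dP_X$, and the analogous identities for $\FN$ and $\TN$, the objective becomes
\begin{align*}
G(f) = (a_0 - \loss^* b_0) + \int \big( g_1(x)\, \ind\{f(x) = 1\} + g_0(x)\, \ind\{f(x) = 0\} \big)\, dP_X,
\end{align*}
where, writing $c_{ij} = a_{ij} - \loss^* b_{ij}$, we have $g_1(x) = c_{11}\eta(x) + c_{10}(1 - \eta(x))$ and $g_0(x) = c_{01}\eta(x) + c_{00}(1 - \eta(x))$. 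Since the integrand decouples across $x$, the maximizer sets $f(x) = 1$ exactly when $g_1(x) > g_0(x)$ (the decision is immaterial when they coincide), a pointwise rule depending only on $\eta(x)$.

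It then remains to carry out the algebra and match the threshold. Rearranging $g_1(x) > g_0(x)$ yields $A\,\eta(x) > c_{00} - c_{10}$, where $A = c_{11} - c_{01} - c_{10} + c_{00} = (a_{11} - a_{10} - a_{01} + a_{00}) - \loss^*(b_{11} - b_{10} - b_{01} + b_{00})$ is exactly the denominator appearing in $\threshold^*$, while $c_{00} - c_{10} = (b_{10} - b_{00})\loss^* - a_{10} + a_{00}$ is its numerator. When $A > 0$, dividing gives $\eta(x) > \threshold^*$, hence $f^*_\loss = \ind\{\eta > \threshold^*\}$; when $A < 0$ the inequality flips to $\eta(x) < \threshold^*$, giving the stated alternative. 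This recovers both cases of the theorem.

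The main obstacle is the linearization step itself. It is self-referential, in that $\threshold^*$ depends on the unknown optimal value $\loss^*$, and it presumes $D(f) > 0$ throughout so that $N(f)/D(f) \leq \loss^*$ is equivalent to $G(f) \leq 0$. One must also argue that the optimum is attained, so that $G(f^*_\loss) = 0$ genuinely holds; the absolute continuity of $P_X$ with respect to $\numeasure$ enters here, and it further guarantees that the boundary set $\{x : \eta(x) = \threshold^*\}$, on which the pointwise decision is arbitrary, does not obstruct the characterization. Everything else is the routine linear algebra above.
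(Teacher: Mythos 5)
The paper never proves this statement itself---it is imported verbatim from \citet{koyejo2014consistent}---and your Dinkelbach-style linearization (passing from \(\loss(f) = N(f)/D(f)\) to the linear surrogate \(G(f) = N(f) - \loss^* D(f)\), maximizing pointwise, and matching the resulting threshold to \(\threshold^*\)) is essentially the same argument as in that cited source, with the algebra checking out exactly: \(A = (a_{11} - a_{10} - a_{01} + a_{00}) - \loss^*(b_{11} - b_{10} - b_{01} + b_{00})\) and \(c_{00} - c_{10} = (b_{10} - b_{00})\loss^* - a_{10} + a_{00}\) are precisely the denominator and numerator of \(\threshold^*\). You also correctly identify the only genuine technical points---attainment of \(\loss^*\), positivity of \(D(f)\), and the measure of the boundary set \(\left\{x : \eta(x) = \threshold^*\right\}\)---while the degenerate case \(A = 0\), where \(\threshold^*\) is undefined, is glossed over by the theorem statement itself and hence not a defect of your proof.
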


\begin{corollary}
We note by Proposition~\ref{prop:ExcessqRisk} that for an metric $\loss$ where 
\[
a_{11} - a_{10} - a_{01} + a_{00} - (b_{11} - b_{10} - b_{01} + b_{00})\loss^* > 0,
\]
if we set define \(q\) to be 
	\begin{align*}
		q_0 &= (b_{10} - b_{00})\loss^* - a_{10} + a_{00}\\
		q_1 &= (b_{01}-  b_{11})\loss^* - a_{01} + a_{11},
	\end{align*}
then $f_q^* = f_\loss^*$. 
\label{cor:WeightUniversality}
\end{corollary}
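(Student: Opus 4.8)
The plan is to show that the threshold $t_q = q_0/(q_0+q_1)$ induced by the proposed weighting $q$ coincides exactly with the threshold $\delta^*$ from the theorem of \cite{koyejo2014consistent}, and then invoke Lemma~\ref{lemma:WeightedBayesClassifier} to conclude that the weighted Bayes classifier $f_q^*$ has the same form as $f_{\loss}^*$. Since the hypothesis of the corollary assumes $a_{11} - a_{10} - a_{01} + a_{00} - (b_{11} - b_{10} - b_{01} + b_{00})\loss^* > 0$, the optimal classifier for $\loss$ is $f_{\loss}^*(x) = \ind\{\eta(x) > \delta^*\}$, which matches the form of $f_q^*(x) = \ind\{\eta(x) > t_q\}$ given by Lemma~\ref{lemma:WeightedBayesClassifier}. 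So the entire task reduces to the algebraic identity $t_q = \delta^*$.

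First I would write out $t_q$ with the proposed values substituted in:
\begin{align*}
t_q = \frac{q_0}{q_0 + q_1} = \frac{(b_{10} - b_{00})\loss^* - a_{10} + a_{00}}{\big[(b_{10} - b_{00})\loss^* - a_{10} + a_{00}\big] + \big[(b_{01} - b_{11})\loss^* - a_{01} + a_{11}\big]}.
\end{align*}
The numerator already matches the numerator of $\delta^*$, so the work is entirely in the denominator. I would collect the $\loss^*$-terms and the constant terms separately in the sum $q_0 + q_1$, obtaining the $\loss^*$-coefficient $b_{10} - b_{00} + b_{01} - b_{11}$ and the constant part $-a_{10} + a_{00} - a_{01} + a_{11}$. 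Comparing this against the denominator of $\delta^*$, namely $a_{11} - a_{10} - a_{01} + a_{00} - (b_{11} - b_{10} - b_{01} + b_{00})\loss^*$, I would check that the constant parts agree term-by-term and that the $\loss^*$-coefficients agree after distributing the minus sign, since $-(b_{11} - b_{10} - b_{01} + b_{00}) = b_{10} + b_{01} - b_{11} - b_{00}$. Thus $q_0 + q_1$ equals precisely the denominator of $\delta^*$, giving $t_q = \delta^*$.

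With the threshold identity established, the conclusion is immediate: by Lemma~\ref{lemma:WeightedBayesClassifier}, $f_q^*(x) = \ind\{\eta(x) > t_q\} = \ind\{\eta(x) > \delta^*\} = f_{\loss}^*(x)$, where the last equality uses the positivity hypothesis to select the correct branch of the \cite{koyejo2014consistent} theorem. I do not anticipate a genuine obstacle here; the statement is a direct verification. The only points requiring mild care are the sign bookkeeping in matching the $\loss^*$-coefficients of the denominators, and a tacit well-definedness check that the proposed $q_0, q_1$ are nonnegative and that $q_0 + q_1 > 0$ (so that $t_q$ is a valid threshold in $[0,1]$) — the latter follows from the positivity assumed in the corollary's hypothesis, which is exactly the denominator of $\delta^*$ being positive. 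The reference to Proposition~\ref{prop:ExcessqRisk} in the statement is presumably a typo for Lemma~\ref{lemma:WeightedBayesClassifier}, since it is the characterization of the weighted Bayes classifier's threshold that drives the argument.
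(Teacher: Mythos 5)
Your proof is correct and is exactly the argument the paper intends: the corollary is stated without an explicit proof as an immediate consequence of threshold matching, and your verification that $q_0 + q_1$ equals the denominator of $\delta^*$ (hence $t_q = \delta^*$), combined with Lemma~\ref{lemma:WeightedBayesClassifier} giving $f_q^*(x) = \ind\{\eta(x) > t_q\}$, fills in precisely that computation. You are also right that the citation of Proposition~\ref{prop:ExcessqRisk} in the statement is evidently a typo for Lemma~\ref{lemma:WeightedBayesClassifier}, and your flagged well-definedness check ($q_0 + q_1 > 0$ from the positivity hypothesis) is the appropriate level of care.
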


Performance metrics that are used in evaluating classifiers such as F1 and arithmetic mean satisfy the the conditions of Corollary~\ref{cor:WeightUniversality}. 
Thus, we can reformulate optimization of a classifier in these error metrics as a specific weighting the risk.

\section{The Fundamental Trade-off in Empirical Risk Minimization}
\label{sec:TradeoffERM}

Part of our motivation for the robust weighted problem is the fundamental trade-off under different weightings \(q\) and \(q'\).
We demonstrated this for plug-in classification in the main text because it elucidates the nature of the problem naturally via thresholds,
but we should also convince ourselves that this is not simply a quirk of plug-in classification.
To this end, we provide a brief analysis for empirical risk minimization.

Let \(\fhat_{q}\) and \(f_{q}^{*}\) denote the empirical risk minimizer and risk minimizer within \(\functions\).
Define the excess risk to be the difference between \(\risk(\fhat_{q})\) and \(\risk_{q}(f_{q}^{*})\).
Suppose that we have a uniform convergence guarantee
\[
\risk_{q}(f) - \hat{\risk}_{q}(f)
\leq 
O\left(n^{-\frac{1}{2}}\right)
\]
for all \(f\) in \(\functions\).
Then, a standard chaining argument reveals that the excess risk decay rate satisfies
\begin{align*}
& \begin{aligned}
\excess_{q}(\fhat_q)
&=
\risk_{q}(\fhat_{q}) - \risk(f_{q}^{*}) \\
&=
\risk_{q}(\fhat_{q}) - \hat{\risk}_{q}(\fhat_{q})
+ \hat{\risk}_{q}(\fhat_{q}) - \hat{\risk}_{q}(f_{q}^{*})
+ \hat{\risk}_{q}(f_{q}^{*}) - \risk(f_{q}^{*}) \\
&\leq 
O\left(n^{-\frac{1}{2}}\right) + 0 + O\left(n^{-\frac{1}{2}}\right) \\
&= 
O\left(n^{-\frac{1}{2}}\right),
\end{aligned}
\end{align*}
where in the inequality we used our uniform convergence guarantee twice and the fact that \(\fhat_q\) is the empirical \(q\)-risk minimizer.
This mirrors the case of \(q\)-weighted plug-in estimation in that the excess \(q\)-risk still converges to \(0\) at the standard rate.

On the other hand, we obtain a constant term when performing a similar analysis for \(\excess_{q'}(\fhat_{q})\).
Specifically, we get
\begin{align*}
& \begin{aligned}
\excess_{q'}(\fhat_{q})
&=
\risk_{q'}(\fhat_{q}) - \risk_{q'}(f_{q}^{*}) \\
&=
\risk_{q}(\fhat_{q}) - \risk_{q}(f_{q}^{*})
+ 
\risk_{q'}(\fhat_{q}) - \risk_{q}(\fhat_{q}) 
+
\risk_{q}(f_{q}^{*}) - \risk_{q'}(f_{q'}^{*}) \\
&\leq 
O\left(n^{-\frac{1}{2}}\right)
+
\risk_{q'}(\fhat_{q}) - \risk_{q}(\fhat_{q}) 
+
\risk_{q}(f_{q}^{*}) - \risk_{q'}(f_{q'}^{*}).
\end{aligned}
\end{align*}
Now, using the prior convergence result for the empirical risk minimizers, we obtain
\begin{align*}
& \begin{aligned}
\excess_{q'}(\fhat_{q})
&\leq 
\risk_{q'}(\fhat_{q}) - \risk_{q}(\fhat_{q}) 
+
\risk_{q}(f_{q}^{*}) - \risk_{q'}(f_{q'}^{*}) 
+
O\left(n^{-\frac{1}{2}}\right)\\
&\leq 
\risk_{q'}(f_{q}^{*}) - \risk_{q}(f_{q}^{*})
+
\risk_{q}(f_{q}^{*}) - \risk_{q'}(f_{q'}^{*}) 
+
O\left(n^{-\frac{1}{2}}\right)\\
&=
\underbrace{\risk_{q'}(f_{q}^{*}) - \risk_{q'}(f_{q'}^{*}) }_{A}
+
O\left(n^{-\frac{1}{2}}\right).
\end{aligned}
\end{align*}
Since \(f_{q'}^{*}\) minimizes \(\risk_{q'}\) and \(f_{q}^{*}\) minimizes \(\risk_{q}\), we see that \(A \geq 0\).
Thus, even though there is not a clear threshold interpretation, we do see that there is irreducible error that arises in the empirical risk minimization setting as well.

\section{Robust Weighting Proofs}
\label{sec:RobustWeightingProofs}
In this section, we prove our results for robust weighting.
We start with our generalization and excess risk bounds.

\begin{proof}[Proof of Theorem~\ref{theorem:MainStandardSampling}]
Define the risk \(\risk_{i, \ind}\) as
\begin{align*}
& \begin{aligned}
\hat{\risk}_{i, \ind}(f)
&=
\hat{p}_{i} \hat{\risk}_{i}(f)
=
\frac{1}{n}
\sum_{j = 1}^{n} \marginloss(f, z_{j}) \ind\left\{y_{j} = i\right\}.
\end{aligned}
\end{align*}
Let \( \risk_{i, \ind}(f)\) denote \(\expect \hat{\risk}_{i, \ind}(f)\).
Note that we have 
\[
\risk_{i, \ind}(f) 
=
\frac{1}{n} \sum_{j = 1}^{n} \expect \left[\marginloss(f, z_{j}) \ind\left\{y_{j} = i\right\}\right]
=
\frac{1}{n} \sum_{j = 1}^{n} p_{i} \expect\left[\marginloss(f, z_{j}) | y_{j} = i\right]
=
p_{i} \risk_{i}(f).
\]
By definition, we have
\begin{align*}
& \begin{aligned}
\risk_{Q}(f)
=
\sup_{q \in Q} \sum_{i = 1}^{k} q_{i} p_{i} \risk_{i}(f)
=
\sup_{q \in Q} \sum_{i = 1}^{k} q_{i} \risk_{i, \ind}(f),
\end{aligned}
\end{align*}
and so for our purposes, it suffices to analyze \(\hat{\risk}_{i, \ind}\).
Define the class
\[
\functions_{i, \ind}
=
\left\{\marginloss(f, \cdot) \ind\left\{y_{j} = i\right\}: f \in \functions\right\}.
\]
By Lemma~\ref{lemma:StandardRademacher}, we have with probability at least \(1 - \delta / k\) that
\begin{align*}
& \begin{aligned}
\risk_{i, \ind}(f)
\leq 
\hat{\risk}_{i, \ind}(f)
+
2 \rademacher_{n}(\marginloss \circ \functions_{i, \ind})
+
\sqrt{\frac{\log \frac{k}{\delta}}{2n}}
\end{aligned}
\end{align*}
for each \(f\) in \(\functions\).
So, it suffices to analyze the Rademacher complexity term.
Let \(\sigma_{j}\) be iid Rademacher random variables.
We condition on the value of \(y_{1}, \ldots, y_{n}\).
Let \(\mathcal{H}_{Y}\) be the sigma-field \(\sigma(y_{1}, \ldots, y_{n})\).
Suppose without loss of generality that under the conditioning, we have \(y_{1} = \cdots = y_{N_{i}} = i\) and \(y_{j} \neq i\) for all \(j > N_{i}\).
Then, we have
\begin{align*}
& \begin{aligned}
\rademacher_{n}(\functions_{i, \ind})
&=
\frac{1}{n}
\expect
\expect\left[
\sup_{f \in \functions}
\sum_{i = 1}^{n} 
\sigma_{j}
\marginloss(f, z_{j}) \ind\{y_{j} = i\}
\biggr| \mathcal{H}_{Y}
\right] \\
&=
\frac{1}{n}
\expect
\expect\left[
\sup_{f \in \functions}
\sum_{j = 1}^{N_{i}}
\sigma_{j}
\marginloss(f, z_{j})
\biggr|
\mathcal{H}_{Y}
\right] \\
&=
\expect
\left[
\frac{N_{i}}{n}
\hat{\rademacher}_{N_{i}}(\marginloss \circ \functions )
\right].
\end{aligned}
\end{align*}

By the proof of Lemma~\ref{lemma:KuznetsovRademacher}, we have
\begin{align*}
& \begin{aligned}
\hat{\rademacher}_{N_{i}}(\marginloss \circ \functions )
\leq 
2 k \hat{\rademacher}_{N_{i}}(\Pi_{1}(\functions)).
\end{aligned}
\end{align*}
Putting everything together completes the proof of the generalization bound; now we turn to the excess \((\functions, q)\)-risk bound.

Recall that \(\hat{f}_{Q}\) is the empirical \(Q\)-risk minimizer and \(f^{*}_{Q}\) is the population \(Q\)-risk minimizer.
By Lemma~\ref{lemma:ModifiedRademacher}, we have with probability at least \(1 - \delta / k\) that
\begin{align*}
\risk_{i, \ind}(\hat{f}_{Q}) 
&\leq 
\hat{\risk}_{i, \ind}(\hat{f}_{Q}) 
+
4 \rademacher_{n}(\marginloss \circ \functions_{i, \ind}) + \sqrt{\frac{\log \frac{k}{\delta}}{2 n}}.
\end{align*}
Summing, we have
\begin{align*}
\risk_{q}(\hat{f}_{Q})
=
\sum_{i = 1}^{k} q_{i} p_{i} \risk_{i}(\hat{f}_{Q})
&=
\sum_{i = 1}^{k} q_{i} (\risk_{i, \ind}\hat{f}_{Q})  \\
&\leq 
\sum_{i = 1}^{k} q_{i}\left(
\hat{\risk}_{i, \ind}(\hat{f}_{Q}) 
+
4 \rademacher_{n}(\marginloss \circ \functions_{i, \ind}) + \sqrt{\frac{\log \frac{k}{\delta}}{2 n}}
\right) \\
&\leq 
\hat{\risk}_{q}(\hat{f}_{Q})
+
\sum_{i = 1}^{k} q_{i}p_{i} 
\left(
\frac{4}{p_{i}}\rademacher_{n}(\marginloss \circ \functions_{i, \ind}) +  \sqrt{\frac{\log \frac{k}{\delta}}{2 p_{i}^{2} n}} 
\right).
\end{align*}
Using the proof of Lemma~\ref{lemma:KuznetsovRademacher} as before, we then obtain
\[
\risk_{q}(\hat{f}_{Q})
\leq 
\hat{\risk}_{q}(\hat{f}_{Q})
+
\sum_{i = 1}^{k} q_{i}p_{i} 
\left(
8k \expect\left[\frac{N_{i}}{p_{i} n}\emprad_{N_{i}}(\Pi_{1}(\functions))\right] +  \sqrt{\frac{\log \frac{k}{\delta}}{2 p_{i}^{2} n}} 
\right).
\]
Thus, by taking supremums, we observe that
\begin{equation}
\risk_{Q}(\hat{f}_{Q}) 
\leq
\emprisk_{Q}(\hat{f}_Q)
+
\sup_{q \in Q}
\sum_{i = 1}^{k} q_{i}p_{i} 
\left(
8k \expect\left[\frac{N_{i}}{p_{i} n}\emprad_{N_{i}}(\Pi_{1}(\functions))\right] +  \sqrt{\frac{\log \frac{k}{\delta}}{2 p_{i}^{2} n}} 
\right).
\label{eqn:ExcessFRiskProofEqn1}
\end{equation}
Similarly, by Lemma~\ref{lemma:ModifiedRademacher}, we have
\[
-\risk_{i, \ind}(f^{*}_{Q}) 
\leq 
-\emprisk_{i, \ind}(f^{*}_{Q}) 
+ 
4 \rademacher_{n}(\marginloss \circ \functions_{i, \ind}) + \sqrt{\frac{\log \frac{k}{\delta}}{2 n}}.
\]
Summing as before and using the proof of Lemma~\ref{lemma:KuznetsovRademacher}, we have
\begin{align*}
-\risk_{q}(f^{*}_{Q})
\leq 
-\emprisk_{q}(f^{*}_{Q})
+
\sum_{i = 1}^{k} q_{i}p_{i} 
\left(
8k \expect\left[\frac{N_{i}}{p_{i} n}\emprad_{N_{i}}(\Pi_{1}(\functions))\right] +  \sqrt{\frac{\log \frac{k}{\delta}}{2 p_{i}^{2} n}} 
\right).
\end{align*}
Taking the infimum and using Lemma~\ref{lemma:SimpleInfimum}, we have
\begin{equation}
-\risk_{Q}(f^{*}_{Q})
\leq 
-\emprisk_{Q}(f^{*}_{Q})
+
\sup_{q \in Q}
\sum_{i = 1}^{k} q_{i}p_{i} 
\left(
8k \expect\left[\frac{N_{i}}{p_{i} n}\emprad_{N_{i}}(\Pi_{1}(\functions))\right] +  \sqrt{\frac{\log \frac{k}{\delta}}{2 p_{i}^{2} n}} 
\right).
\label{eqn:ExcessFRiskProofEqn2}
\end{equation}
Summing equation~\eqref{eqn:ExcessFRiskProofEqn1} and equation~\eqref{eqn:ExcessFRiskProofEqn2} and noting that \(\hat{f}_{Q}\) minimizes the empirical robust risk, we have
\[
\excess_{Q}(\functions)
=
\risk_{Q}(\hat{f}_{Q}) - \risk_{Q}(f^{*}_{Q})
\leq 
2\sup_{q \in Q}
\sum_{i = 1}^{k} q_{i}p_{i} 
\left(
8k \expect\left[\frac{N_{i}}{p_{i} n}\emprad_{N_{i}}(\Pi_{1}(\functions))\right] +  \sqrt{\frac{\log \frac{k}{\delta}}{2 p_{i}^{2} n}} 
\right),
\]
and this completes the proof.
\end{proof}


\begin{proof}[Proof of Corollary~\ref{cor:MainStandardSampling}]
The only thing we need to do here is calculate the Rademacher complexity term of Theorem~\ref{theorem:MainStandardSampling}.
Using our assumption and Jensen's inequality, we have
\begin{align*}
&\begin{aligned}
\expect\left[\frac{N_{i}}{n} \hat{\rademacher}_{N_{i}}(\Pi_{1}(\functions))\right]
&\leq 
\frac{C(\functions)}{n}
\expect\left[
\sqrt{N_{i}}
\right] 
\leq 
\frac{C(\functions)}{n}
\expect[N_{i}]^{1/2} 
=
C(\functions) \sqrt{\frac{p_{i}}{n}}.
\end{aligned}
\end{align*}
This completes the proof of the corollary.
\end{proof}

Next, we prove our duality results. 
We start with LCVaR.
\begin{proof}[Proof of Proposition~\ref{prop:CVaRDual}]
The Lagrangian of LCVaR is
\begin{align*}
& \begin{aligned}
L(q, \lambda)
&=
\expect[ q_{Y} R_{Y}(f) ]
+
\lambda(1 - \expect[ q_{Y}])
=
\expect\left[
q_{Y} (R_{Y}(f) - \lambda)\right]
+
\lambda.
\end{aligned}
\end{align*}

Our goal is to use the minimax theorem, which we state as Theorem~\ref{theorem:MinimaxTheorem}, to switch the infimum over \(\lambda\) and the supremum over \(q\).
First, we do not need the minimax theorem to obtain
\begin{align}
& \begin{aligned}
\inf_{\lambda\in \reals} L(q, \lambda)
&\leq 
\inf_{\lambda \in \reals}
\sup_{q: q(\cdot) \in [0, \alpha^{-1}]}
\expect\left[
q_{Y} (\risk_{Y}(f) - \lambda)\right]
+
\lambda 
=
\inf_{\lambda \in \reals}
\left\{\expect\left[
\alpha^{-1} \expect(\risk_{Y}(f) - \lambda)_{+}
+
\lambda
\right]
\right\},
\label{eqn:DualForLambda}
\end{aligned}
\end{align}
since the inequality follows the trivial direction of the minimax theorem and we can solve the inner maximization problem by setting 
\[
q_{i}
=
\begin{cases}
0 & \risk_{i}(f) - \lambda < 0 \\
\alpha^{-1} & \risk_{i}(f) - \lambda \geq 0.
\end{cases}
\]

Our present goal is to verify the conditions of the minimax theorem.
First, we note that \(\lambda \mapsto L(q, \lambda)\) is linear and therefore convex for any \(q\), and similarly, \(q \mapsto L(q, \lambda)\) is linear and therefore concave for any \(q\).
Additionally, the domain of \(q\), in this case \([0, \alpha^{-1}]^{k}\), is compact and convex by definition; so we only need to prove that it suffices to consider \(\lambda\) on a compact, convex domain.

Denote the right hand side of equation~\eqref{eqn:DualForLambda} by \(\inf_{\lambda \in \reals} D(\lambda)\).
Let \(F_{f}(\lambda)\) denote the cumulative distribution function of \(\risk_{Y}\) at \(\lambda\).
By Lemma~\ref{lemma:CVaRDerivative}, the derivative of \(D(\lambda)\) is given by
\[
D'(\lambda) 
=
1 + \alpha^{-1}(F_{f}(\lambda) - 1),
\]
when \(F_{f}\) is continuous at \(\lambda\).
If it is not, then the same result holds for the left and right limits.
Thus by considering signs of the derivative, we see that \(\lambda\) achieves minimizes \(D(\lambda)\) for a value in the interval \([\lambda_{*}(f), \lambda^{*}(f)]\) where
\[
\lambda_{*}(f) = \inf\{t : F_{f}(t) \geq 1 - \alpha\}
\text{ and }
\lambda^{*}(f) = \sup\{t : F_{f}(t) \leq 1 - \alpha\}.
\]
Note further that when \(\functions\) is compact in, say, sup norm, then we also have finite \(\lambda_{*} = \inf_{f \in \functions} t_{*}(\lambda)\) and 
\(\lambda^{*} = \inf_{f \in \functions} t^{*}(\lambda)\).
In any case, we see that it suffices to define \(\lambda\) on a compact set \(\Lambda = [\lambda_*, \lambda^*]\), and so we may assume without loss of generality that the domain of \(\lambda\) is compact.

This verifies the conditions of the minimax theorem, and so
we have
\begin{align*}
& \begin{aligned}
\lcvar_{\alpha}(f)
&=
\inf_{\lambda \in \reals}
\sup_{q: q(\cdot) \in [0, \alpha^{-1}]}
\expect\left[
q_{Y} (\risk_{Y}(f) - \lambda)\right]
+
\lambda 
=
\inf_{\lambda \in \reals}
\left\{\expect\left[
\alpha^{-1} \expect(\risk_{Y}(f) - \lambda)_{+}
+
\lambda
\right]
\right\},
\end{aligned}
\end{align*}
which completes the proof.
\end{proof}

Next, we consider LHCVaR.
\begin{proof}[Proof of Proposition~\ref{prop:HCVaRDual}]
The proof is similar to that of Proposition~\ref{prop:CVaRDual}.
The Lagrangian of LHCVaR is 
\begin{align*}
& \begin{aligned}
L(q, \lambda)
&=
\expect[q_{Y} \risk_{Y}(f)]
+ 
\lambda \left(1 - \expect[q_{Y}]\right)
=
\expect\left[
q_{Y}
(\risk_{Y}(f) - \lambda)
\right]
+
\lambda.
\end{aligned}
\end{align*}
Next, by the trivial direction of the minimax theorem, we have
\begin{align}
& \begin{aligned}
\lhcvar_{\alpha}(f)
&\leq 
\inf_{\lambda \in \reals}
\sup_{q: q_{Y} \in [0, \alpha^{-1}_{Y}]}
L(q, \lambda) 
=
\inf_{\lambda \in \reals}
\expect\left[
\alpha^{-1}_{Y}(\risk_{Y}(f) - \lambda)_{+}
\right]
+
\lambda.
\label{eqn:HCVaRLambda}
\end{aligned}
\end{align}
So, now our goal is to verify the conditions of the minimax theorem.
As with LCVaR, the Lagrangian \(L\) is linear and therefore concave in \(q\); is linear and therefore convex in \(\lambda\); and is defined over a compact domain of values of \(q\) given by \([0, \alpha^{-1}]^{k}\).
Thus, the only difficulty, as with LCVaR, is showing that it suffices to define \(\lambda\) over a compact interval.
To this end, define the right hand side of equation~\eqref{eqn:HCVaRLambda} to be \(\inf_{\lambda \in \reals} H(\lambda)\).
It suffices to show that \(D(\lambda)\) achieves its infimum on a closed interval, in which case we can restrict the domain of \(\lambda\) to this compact, convex set.

To prove such an interval exists, we wish to show that there exist constants \(\lambda_{*}\) and \(\lambda^{*}\) such that \(H\) is decreasing for all \(\lambda < \lambda_{*}\) and increasing for all \(\lambda > \lambda^{*}\).
By Lemma~\ref{lemma:HCVaRDerivative}, we see that the derivative of \(H\) is
\[
H'(\lambda)
=
1 - \expect[\alpha_{Y}^{-1} \ind\left\{R_{Y}(f) > \lambda\right\}]
=
1 - \sum_{i = 1}^{k} \alpha_{i}^{-1} p_{i} \ind\left\{R_{i}(f) > \lambda\right\}
\]
when \(H'\) exists; otherwise the result holds for the left and right derivatives.
Let \(\lambda_{*}(f) = \min_{i = 1,\ldots, k} R_{i}(f)\).
Then, for \(\lambda \leq \lambda_{*}(f)\), we have
\[
H'(\lambda)
=
1 - \sum_{i = 1}^{k} \alpha_{i}^{-1} p_{i} \leq 0.
\]
Next, pick \(\lambda^{*}(f) = \max_{i = 1, \ldots, k} R_{i}(f) + 1\).
Then, for all \(\lambda \geq \lambda^{*}(f)\), we have
\[
H'(\lambda) 
= 
1 \geq 0.
\]
If \(\ell\) is continuous, then each \(R_{i}(f)\) is continuous in \(f\). 
Moreover, when \(\functions\) is compact on \(\xspace\) in the supremum norm,
then we can define finite constants \(\lambda_{*} = \inf_{f \in \functions} \lambda_{*}(f)\) and \(\lambda^{*}(f) = \sup_{f \in \functions} \lambda^{*}(f)\).

Thus, we may restrict the domain of \(\lambda\) to \([\lambda_{*}, \lambda^{*}]\) without loss of generality.
The minimax theorem now implies that equation~\eqref{eqn:HCVaRLambda} holds with equality, which completes the proof.
\end{proof}

\section{Results for the Conditional Sampling Model}
\label{sec:ConditionalSamplingModel}

Now, we present the alternative result for the conditional sampling model.
Recall that \(n_i\) is the number of samples of class \(i\), which is assumed to be fixed.

\begin{theorem}
Let \(\ell\) be the multiclass margin loss.
With probability at least \(1 - \delta\), for every \(f\) in \(\functions\) we have
\[
\risk_{Q}
\leq 
\max_{q \in Q}
\left\{
\hat{\risk}_{q}(f)
+
\sum_{i = 1}^{k} 
q_{i} \hat{p}_{i} \left(2k \rademacher_{n_{i}}(\functions) 
+ 
\sqrt{\frac{\log \frac{k}{\delta}}{2n_{i}}}
\right)
\right\}.
\]
\label{theorem:MainConditionalSampling}
\end{theorem}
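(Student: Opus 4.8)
The plan is to follow the same scaffold as the proof of Theorem~\ref{theorem:MainStandardSampling}, but to exploit the fact that in the conditional sampling model each class $i$ contributes a \emph{fixed} number $n_i$ of i.i.d.\ samples drawn from $P_{X \mid Y = i}$. Because $n_i$ is deterministic and the empirical class-conditioned risk $\hat{\risk}_i(f)$ is already an average over exactly these $n_i$ samples, there is no need to introduce the indicator-weighted risks $\hat{\risk}_{i,\ind}$ or to condition on the label sigma-field $\mathcal{H}_Y$; I can analyze each $\hat{\risk}_i$ directly. First I would rewrite the objective as $\risk_Q(f) = \sup_{q \in Q} \sum_{i=1}^k q_i \hat{p}_i \risk_i(f)$, reducing the problem to controlling $\risk_i(f) - \hat{\risk}_i(f)$ uniformly over $\functions$ for each $i$.

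Next, for a fixed class $i$, I would apply the standard symmetrization-plus-concentration bound (Lemma~\ref{lemma:StandardRademacher}) to the margin loss composed with $\functions$ on the $n_i$ class-$i$ samples, with confidence parameter $\delta/k$. This yields, with probability at least $1 - \delta/k$,
\[
\risk_i(f)
\leq
\hat{\risk}_i(f)
+
2\rademacher_{n_i}(\marginloss \circ \functions)
+
\sqrt{\frac{\log \frac{k}{\delta}}{2 n_i}}
\]
for every $f \in \functions$. I would then invoke the margin-loss contraction step used in Lemma~\ref{lemma:KuznetsovRademacher} to replace the margin-loss Rademacher complexity by $2k$ times the Rademacher complexity of $\functions$. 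Multiplying through by $q_i \hat{p}_i$, summing over $i = 1, \ldots, k$, and taking the supremum over $q \in Q$ gives the claimed bound; a union bound over the $k$ per-class events (each of probability at most $\delta/k$) delivers the overall $1 - \delta$ guarantee.

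The only genuinely new point relative to Theorem~\ref{theorem:MainStandardSampling}, and the step I would flag as the crux, is the bounded-differences argument inside Lemma~\ref{lemma:StandardRademacher}. In the standard model, $\hat{\risk}_{i,\ind}$ is a $1/n$-weighted sum over all $n$ observations and the number $N_i$ of class-$i$ points is random, forcing the expectation $\expect[\tfrac{N_i}{n}\hat{\rademacher}_{N_i}(\cdot)]$ and a McDiarmid bound over $n$ variables that is then scaled back by $1/p_i$. Here, by contrast, $\hat{\risk}_i$ changes by at most $1/n_i$ when a single class-$i$ sample is perturbed, so the bounded-differences inequality applies directly over the $n_i$ fixed variables. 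This removes the randomness of the per-class count entirely and produces the term $\sqrt{\log(k/\delta)/(2 n_i)}$ rather than $\sqrt{\log(k/\delta)/(2 p_i^2 n)}$, which (since $n_i \approx p_i n$) is the promised slightly stronger bound.
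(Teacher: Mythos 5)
Your proposal is correct and follows essentially the same route as the paper: the paper's proof likewise applies Lemma~\ref{lemma:StandardRademacher} (at confidence \(\delta/k\), with an implicit union bound) together with the margin-loss contraction of Lemma~\ref{lemma:KuznetsovRademacher} directly to the \(n_i\) fixed i.i.d.\ class-\(i\) samples, then multiplies by \(q_i \hat{p}_i\), sums over \(i\), and takes the supremum over \(Q\). The point you flag as the crux---that with deterministic \(n_i\) the bounded-differences step runs over \(n_i\) variables, eliminating the random counts \(N_i\) and the \(1/p_i\) rescaling---is precisely the difference the paper itself identifies as the source of the slightly stronger bound.
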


\begin{proof}
The proof is similar to that of \cite{cao2019learning}.
We apply Lemma~\ref{lemma:StandardRademacher} and Lemma~\ref{lemma:KuznetsovRademacher} to obtain
\begin{align*}
&\begin{aligned}
\risk_{i}(f)
&\leq 
\hat{\risk}_{i}(f)
+
2k \rademacher_{n_{i}}(\functions) 
+
\sqrt{\frac{\log \frac{k}{\delta}}{2n_{i}}}.
\end{aligned}
\end{align*}
Multiplying by \(q_{i} \hat{p}_{i}\), summing over \(i\), and taking a supremum over \(Q\) completes the proof.
\end{proof}

\section{Gradient Descent-Ascent}
\label{sec:GDA}

In general, the robust classification problem is a saddle-point problem.
For our purposes, define a saddle-point problem to be an optimization problem of the form
\begin{equation}
\inf_{a \in \mathcal{A}} \sup_{b \in \mathcal{B}}
f(a, b).
\label{eqn:SaddlePoint}
\end{equation}

One of the seminal results in game theory is that the minimax problem is equivalent to the maximin problem.

\begin{theorem}[minimax theorem]
Let \(\mathcal{A}\) and \(\mathcal{B}\) be compact convex sets.
Let \(f: \mathcal{A} \times \mathcal{B} \to \reals\) be a function such that \(a \mapsto f(a, b)\) is convex and \(b \mapsto f(a, b)\) is concave.
Then, we have
\[
\inf_{a \in \mathcal{A}} \sup_{b \in \mathcal{B}}
f(a, b)
=
\sup_{b \in \mathcal{B}}
\inf_{a \in \mathcal{A}} 
f(a, b).
\]
\label{theorem:MinimaxTheorem}
\end{theorem}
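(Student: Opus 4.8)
The plan is to prove the two inequalities separately. The inequality $\sup_{b} \inf_{a} f(a,b) \le \inf_{a} \sup_{b} f(a,b)$ requires no structure whatsoever, so I would dispose of it first: for any fixed $a_0 \in \mathcal{A}$ and $b_0 \in \mathcal{B}$ one has $\inf_{a} f(a, b_0) \le f(a_0, b_0) \le \sup_{b} f(a_0, b)$, and taking the supremum over $b_0$ on the left and the infimum over $a_0$ on the right yields the claim. All the content lies in the reverse inequality $\inf_{a}\sup_{b} f(a,b) \le \sup_{b}\inf_{a} f(a,b)$, which I would establish by contradiction. Write $w = \inf_{a}\sup_{b} f(a,b)$ and $v = \sup_{b}\inf_{a} f(a,b)$, suppose $w > v$, and fix a separating value $c$ with $v < c < w$.

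The next step is a compactness reduction to finitely many points of $\mathcal{B}$. For each $b \in \mathcal{B}$ let $F_b = \{a \in \mathcal{A} : f(a,b) \le c\}$; since $a \mapsto f(a,b)$ is convex, hence lower semicontinuous, each $F_b$ is closed. Because $w > c$, every $a$ admits some $b$ with $f(a,b) > c$, so $\bigcap_{b \in \mathcal{B}} F_b = \emptyset$. Compactness of $\mathcal{A}$ then furnishes finitely many points $b_1, \ldots, b_m$ with $\bigcap_{j=1}^{m} F_{b_j} = \emptyset$; equivalently, $\max_{1 \le j \le m} f(a, b_j) > c$ for every $a \in \mathcal{A}$.

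The heart of the argument is a single separating-hyperplane step in $\reals^m$. I would form the upper image
\[
E = \left\{ u \in \reals^m : \text{there is } a \in \mathcal{A} \text{ with } f(a, b_j) \le u_j \text{ for all } j \right\},
\]
which is convex, by convexity of each $a \mapsto f(a, b_j)$, closed, by compactness and semicontinuity, and upward closed. The previous paragraph says precisely that $(c, \ldots, c) \notin E$, so I can separate this point from $E$ by a hyperplane; upward closedness forces the normal vector into the nonnegative orthant, and after normalizing it becomes a probability vector $\mu$ satisfying $\sum_{j} \mu_j f(a, b_j) \ge c$ for every $a$. Setting $\bar b = \sum_{j} \mu_j b_j \in \mathcal{B}$, which is admissible by convexity of $\mathcal{B}$, and using concavity of $b \mapsto f(a, b)$ gives $f(a, \bar b) \ge \sum_{j} \mu_j f(a, b_j) \ge c$ for all $a$, whence $v \ge \inf_{a} f(a, \bar b) \ge c$, contradicting $c > v$.

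I expect the main obstacle to be the verification underlying the separation step: confirming that $E$ is genuinely convex and closed, and that upward closedness legitimately restricts the separating normal to the nonnegative orthant so that it corresponds to an admissible convex combination $\bar b \in \mathcal{B}$. This is exactly where compactness of $\mathcal{A}$ and the (semi)continuity implicit in the convex--concave hypotheses become indispensable; indeed, these topological assumptions cannot be dropped, which is consistent with the remark in the excerpt that the minimax theorem ``requires compactness in various places'' when it is invoked for LHCVaR. An alternative route would be the classical fixed-point proof of von Neumann via Brouwer, but the separation argument above seems cleaner and meshes directly with the finite-intersection reduction.
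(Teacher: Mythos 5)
The paper never proves this statement: it is quoted in Appendix~\ref{sec:GDA} as a classical result from game theory and invoked as a black box in the proofs of Propositions~\ref{prop:CVaRDual} and~\ref{prop:HCVaRDual}. So your proposal should be judged against the standard literature argument, and in architecture it matches it exactly: the finite-intersection reduction over \(\mathcal{B}\) followed by a separating-hyperplane step in \(\reals^m\) is the classical Kneser--Fan/Sion proof. However, there is one genuine flaw, and it sits precisely at the point you flagged as the main obstacle: your claim that convexity of \(a \mapsto f(a,b)\) implies lower semicontinuity is false. On \(\mathcal{A} = [0,1]\), the function with \(g(0) = 1\) and \(g(a) = 0\) for \(a \in (0,1]\) is convex but not lower semicontinuous at \(0\); in general a finite convex function on a compact convex set is continuous only on the relative interior and may jump upward on the boundary. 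Consequently neither the closedness of the sublevel sets \(F_b\) (needed for the finite-intersection-property step) nor the closedness of the upper image \(E\) (needed for \emph{strict} separation of \((c,\ldots,c)\) from \(E\)) follows from the hypotheses as stated. Both uses are essential: without closedness of \(E\) you can only separate non-strictly, and the conclusion \(\sum_j \mu_j f(a,b_j) \geq c\) for all \(a\), hence \(v \geq c\), can fail to give a contradiction with \(c > v\).

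The repair is standard: add the hypothesis that \(f(\cdot, b)\) is lower semicontinuous for each \(b\) (for instance, that \(f\) is continuous), which is exactly the assumption carried by Sion-type statements. Note that the theorem as printed in the paper tacitly omits this hypothesis as well, and that the omission is harmless for the paper's purposes, since the Lagrangians \(L(q,\lambda)\) to which it is applied are affine in each argument and hence continuous, with \(q\) ranging over a compact box and \(\lambda\) restricted to a compact interval by the arguments in the proofs of Propositions~\ref{prop:CVaRDual} and~\ref{prop:HCVaRDual}. Two smaller remarks: once lower semicontinuity in \(a\) is added, the remainder of your argument is correct as written (the upward closedness of \(E\) does force the separating normal into the nonnegative orthant, and normalization is legitimate because the normal is nonzero); and it is worth observing that your proof never uses compactness of \(\mathcal{B}\), nor any semicontinuity in \(b\) --- only convexity of \(\mathcal{B}\) and concavity of \(f(a,\cdot)\) --- so the corrected argument in fact establishes a statement strictly stronger than the one quoted in the paper.
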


\begin{algorithm2e}[!ht]
\SetKwInOut{Input}{Input}
\SetKwInOut{Output}{Output}
\Input{Convex domain \(\mathcal{A}\), \(a_{1} \in \mathcal
{A}\), step sizes \(\eta_{t}\), number of rounds \(T\)}
\For{\(t = 1, \ldots, T\)}{
    Play \(a_{t}\) and observe cost \(f_{t}(a_{t})\).
    
    Update and project 
    \begin{align*}
    &\begin{aligned}
    x_{t + 1}&= a_{t} - \eta_{t} \nabla f_{t}(a_{t}) \\
    a_{t + 1}&= \Pi_{\mathcal{A}}(x_{t + 1}).
    \end{aligned}
    \end{align*}
    }

\Output{The average iterate \(\bar{a}_{T} = \frac{1}{T} \sum_{t = 1}^{T} a_{t}\).}
\caption{Online Gradient Descent}
\label{alg:OGD}
\end{algorithm2e}

\begin{lemma}[Theorem~3.1 of \citealt{hazan2016introduction}]
Let \(f_{1}, \ldots, f_{T}: \mathcal{A} \to \reals\) be a sequence of \(L\)-Lipschitz convex functions.
If the step size for online gradient descent is chosen to be
\[
\eta_{t} = \frac{D}{L \sqrt{t}},
\]
then we have
\[
\sum_{t = 1}^{T} f_{t}(a_{t})
-
\min_{a^{*} \in \mathcal{A}} \sum_{t = 1}^{T} f_{t}(a^{*})
\leq 
\frac{3}{2}DL \sqrt{T}.
\]
\label{lemma:OGD}
\end{lemma}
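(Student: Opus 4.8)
The plan is to run the standard potential-function argument for online gradient descent, tracking the squared Euclidean distance from the iterates to a fixed comparator. Here I read \(D\) as the diameter of \(\mathcal{A}\) (so \(\norm{a - a'} \leq D\) for all \(a, a' \in \mathcal{A}\)) and \(L\) as the uniform bound on the (sub)gradient norms, which is exactly the content of the \(L\)-Lipschitz assumption: \(\norm{\nabla f_t(a_t)} \leq L\). Write \(\nabla_t = \nabla f_t(a_t)\) and let \(a^*\) be the minimizer of \(\sum_{t=1}^T f_t\) appearing in the statement; since the bound will hold for this fixed \(a^*\), it controls the \(\min\) in the lemma. The first step is to linearize the regret by convexity of each \(f_t\),
\[
f_t(a_t) - f_t(a^*) \leq \nabla_t^{\top}(a_t - a^*),
\]
which reduces the task to bounding \(\sum_{t=1}^T \nabla_t^{\top}(a_t - a^*)\).

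Next I would expand the distance to \(a^*\) after a gradient step and use that the Euclidean projection \(\Pi_{\mathcal{A}}\) is non-expansive and that \(a^* \in \mathcal{A}\), giving
\[
\norm{a_{t+1} - a^*}^2 \leq \norm{a_t - \eta_t \nabla_t - a^*}^2 = \norm{a_t - a^*}^2 - 2\eta_t \nabla_t^{\top}(a_t - a^*) + \eta_t^2 \norm{\nabla_t}^2.
\]
Rearranging and using \(\norm{\nabla_t} \leq L\) yields the per-step inequality
\[
\nabla_t^{\top}(a_t - a^*) \leq \frac{\norm{a_t - a^*}^2 - \norm{a_{t+1} - a^*}^2}{2\eta_t} + \frac{\eta_t}{2} L^2.
\]
Summing over \(t = 1, \ldots, T\) splits the regret into a telescoping-type distance term and a \(\sum_t \eta_t\) term.

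The main obstacle is that the step sizes \(\eta_t = D/(L\sqrt{t})\) are time-varying, so the first sum does not telescope directly. I would handle it by Abel summation: regrouping, the coefficient of each \(\norm{a_t - a^*}^2\) becomes \(\tfrac{1}{2}(\eta_t^{-1} - \eta_{t-1}^{-1})\), which is nonnegative because \(\eta_t\) is decreasing; bounding every \(\norm{a_t - a^*}^2 \leq D^2\), dropping the final negative term, and collapsing the resulting telescope gives \(\sum_{t=1}^T \tfrac{\norm{a_t - a^*}^2 - \norm{a_{t+1} - a^*}^2}{2\eta_t} \leq \tfrac{D^2}{2\eta_T} = \tfrac{1}{2} D L \sqrt{T}\). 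For the second term I would use the integral bound \(\sum_{t=1}^T t^{-1/2} \leq 2\sqrt{T}\), so that \(\tfrac{L^2}{2}\sum_{t=1}^T \eta_t = \tfrac{DL}{2}\sum_{t=1}^T t^{-1/2} \leq D L \sqrt{T}\). Adding the two contributions gives the claimed regret bound \(\tfrac{3}{2} D L \sqrt{T}\), completing the proof. The only genuinely delicate points are the correct signs in the Abel rearrangement (relying on monotonicity of \(\eta_t\)) and the clean use of the diameter bound to kill the cross terms.
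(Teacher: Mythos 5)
Your proposal is correct and follows exactly the argument the paper relies on: this lemma is quoted verbatim from Theorem~3.1 of \citet{hazan2016introduction}, whose proof is precisely your potential-function argument --- linearize via convexity, expand \(\lVert a_{t+1}-a^*\rVert^2\) using non-expansiveness of the projection, regroup the time-varying \(1/(2\eta_t)\) coefficients (nonnegative by monotonicity of \(\eta_t\)), bound each squared distance by \(D^2\) to get \(D^2/(2\eta_T) = \tfrac{1}{2}DL\sqrt{T}\), and bound \(\tfrac{L^2}{2}\sum_t \eta_t \leq DL\sqrt{T}\) via \(\sum_{t=1}^T t^{-1/2} \leq 2\sqrt{T}\). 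All steps, constants, and the delicate sign considerations in the Abel rearrangement check out, so there is nothing to add.
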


Now we return to the saddle-point problem.
We give the gradient descent-ascent algorithm in Algorithm~\ref{alg:GDA} and the convergence result in Proposition~\ref{prop:GDAConvergence}.

\begin{algorithm2e}[!ht]
\SetKwInOut{Input}{Input}
\SetKwInOut{Output}{Output}
\Input{Convex-concave function \(f\), step sizes \(\eta_{a, t}\) and \(\eta_{b, t}\), number of rounds \(T\)}
\For{\(t = 1, \ldots, T\)}{
    Play \((a_{t}, b_{t})\) and observe cost \(f(a_{t}, b_{t})\).
    
    Update and project 
    \begin{align*}
    &\begin{aligned}
    x_{t + 1}&= a_{t} - \eta_{t} \nabla_{a} f(a_{t}, b_{t}) \\
    a_{t + 1}&= \Pi_{\mathcal{A}}(x_{t + 1}).
    \end{aligned}
    \end{align*}

    Update and project 
    \begin{align*}
    &\begin{aligned}
    y_{t + 1}&= b_{t} + \eta_{t} \nabla_{b} f(a_{t}, b_{t}) \\
    b_{t + 1}&= \Pi_{\mathcal{A}}(y_{t + 1}).
    \end{aligned}
    \end{align*}
    }

\Output{The average iterates \(\bar{a}_{T} = \frac{1}{T} \sum_{t = 1}^{T} a_{t}\) and \(\bar{b}_{T} = \frac{1}{T} \sum_{t = 1}^{T} b_{t}\).}
\caption{Gradient Descent-Ascent}
\label{alg:GDA}
\end{algorithm2e}

\begin{proposition}
Let \(\mathcal{A}\) and \(\mathcal{B}\) be convex, compact sets.
Suppose that \(\mathcal{A}\) has diameter \(D_{a}\) and \(\mathcal{B}\) has diameter \(D_{b}\).
Let \(f:\mathcal{A} \times \mathcal{B} \to \reals\) be convex-concave, \(L_{a}\)-Lipschitz in its first argument, and \(L_{b}\)-Lipschitz in its second argument.
Let \((a^{*}, b^{*})\) denote the solution to the saddle-point problem of equation~\eqref{eqn:SaddlePoint}.
If \((\bar{a}_{T}, \bar{b}_{T})\) is the output of Algorithm~\ref{alg:GDA}, then we have
\[
f(a^{*}, b^{*}) - \frac{3 (L_{a} D_{a} + L_{b} D_{b})}{2\sqrt{T}}
\leq 
f(\bar{a}_{T}, \bar{b}_{T})
\leq 
f(a^{*}, b^{*}) + \frac{3 (L_{a} D_{a} + L_{b} D_{b})}{2\sqrt{T}}.
\]
\label{prop:GDAConvergence}
\end{proposition}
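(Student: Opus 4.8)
The plan is to recognize that Algorithm~\ref{alg:GDA} is nothing more than two coupled instances of online gradient descent (Algorithm~\ref{alg:OGD}), one run by the minimizing player over \(\mathcal{A}\) and one by the maximizing player over \(\mathcal{B}\), and then to glue their regret bounds together using Jensen's inequality and the saddle-point structure. Concretely, I would view the \(a\)-iterates as the output of online gradient descent against the sequence of convex functions \(f_{t}(a) = f(a, b_{t})\), and the \(b\)-iterates as the output of online gradient descent run in ascent, i.e.\ descending on the convex functions \(g_{t}(b) = -f(a_{t}, b)\); convexity of \(f_{t}\) and \(g_{t}\) follows directly from the convex-concave assumption. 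Choosing step sizes as in Lemma~\ref{lemma:OGD}, namely \(\eta_{a, t} = D_{a} / (L_{a}\sqrt{t})\) and \(\eta_{b, t} = D_{b} / (L_{b}\sqrt{t})\), yields the two regret bounds
\begin{align*}
\sum_{t = 1}^{T} f(a_{t}, b_{t}) - \min_{a \in \mathcal{A}} \sum_{t = 1}^{T} f(a, b_{t}) &\leq \frac{3}{2} D_{a} L_{a} \sqrt{T}, \\
\max_{b \in \mathcal{B}} \sum_{t = 1}^{T} f(a_{t}, b) - \sum_{t = 1}^{T} f(a_{t}, b_{t}) &\leq \frac{3}{2} D_{b} L_{b} \sqrt{T}.
\end{align*}

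Next I would bound the duality gap of the averaged iterates. By convexity of \(f\) in its first argument, \(f(\bar{a}_{T}, b) \leq \frac{1}{T}\sum_{t} f(a_{t}, b)\) for every \(b\), and by concavity in its second argument, \(f(a, \bar{b}_{T}) \geq \frac{1}{T}\sum_{t} f(a, b_{t})\) for every \(a\). Taking a maximum over \(b\) in the first and a minimum over \(a\) in the second, subtracting, and inserting and removing the term \(\frac{1}{T}\sum_{t} f(a_{t}, b_{t})\), the two regret bounds combine to give
\begin{align*}
\max_{b \in \mathcal{B}} f(\bar{a}_{T}, b) - \min_{a \in \mathcal{A}} f(a, \bar{b}_{T}) \leq \frac{3(L_{a} D_{a} + L_{b} D_{b})}{2\sqrt{T}}.
\end{align*}

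Finally, I would invoke Theorem~\ref{theorem:MinimaxTheorem} so that the saddle value satisfies \(f(a^{*}, b^{*}) = \min_{a} \max_{b} f = \max_{b} \min_{a} f\). Since \(\bar{a}_{T} \in \mathcal{A}\) and \(\bar{b}_{T} \in \mathcal{B}\) are particular points, we have \(\min_{a} f(a, \bar{b}_{T}) \leq f(a^{*}, b^{*}) \leq \max_{b} f(\bar{a}_{T}, b)\) and also \(\min_{a} f(a, \bar{b}_{T}) \leq f(\bar{a}_{T}, \bar{b}_{T}) \leq \max_{b} f(\bar{a}_{T}, b)\); hence both \(f(a^{*}, b^{*})\) and \(f(\bar{a}_{T}, \bar{b}_{T})\) lie in an interval of length at most \(\frac{3(L_{a} D_{a} + L_{b} D_{b})}{2\sqrt{T}}\), which is exactly the claimed two-sided bound. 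The main obstacle is conceptual rather than computational: one must check that Lemma~\ref{lemma:OGD}, stated for a fixed sequence of loss functions, applies to each player even though that player's losses \(f_{t}\) and \(g_{t}\) are generated adaptively by the opponent's plays. This is legitimate because the online gradient descent regret guarantee holds against an adaptive adversary, each player's update depending only on its own current gradient and projection onto its own domain. I would also record the minor bookkeeping point that the symbol \(\eta_{t}\) appearing in the updates of Algorithm~\ref{alg:GDA} should be read as the player-specific \(\eta_{a, t}\) and \(\eta_{b, t}\), so that the step-size hypothesis of Lemma~\ref{lemma:OGD} is met for each coordinate separately.
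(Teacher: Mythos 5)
Your proof is correct, and it rests on the same two pillars as the paper's own argument: the online gradient descent regret bound (Lemma~\ref{lemma:OGD}) applied once to the descent player against the adaptively generated losses \(f_{t}(a) = f(a, b_{t})\) and once to the ascent player via \(g_{t}(b) = -f(a_{t}, b)\), followed by Jensen's inequality on the averaged iterates. Where you differ is the final assembly. The paper anchors everything at the saddle point: it defines the regret \(R_{T} = \sum_{t}[f(a_{t}, b_{t}) - f(a^{*}, b^{*})]\), bounds \(R_{T}\) and \(-R_{T}\) by exploiting the saddle property (e.g., \(f(a^{*}, b_{t}) \le f(a^{*}, b^{*})\) makes one of the sums nonpositive), and then derives the upper and lower bounds on \(f(\bar{a}_{T}, \bar{b}_{T})\) through two separate chains of inequalities. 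You instead bound the duality gap \(\max_{b \in \mathcal{B}} f(\bar{a}_{T}, b) - \min_{a \in \mathcal{A}} f(a, \bar{b}_{T})\) and sandwich both \(f(\bar{a}_{T}, \bar{b}_{T})\) and \(f(a^{*}, b^{*})\) inside the resulting interval, invoking Theorem~\ref{theorem:MinimaxTheorem} to identify \(f(a^{*}, b^{*})\) with the common minimax value. Your version is slightly stronger and arguably cleaner: the duality-gap bound certifies that \((\bar{a}_{T}, \bar{b}_{T})\) is an approximate saddle point, i.e., near-optimality of each player's averaged iterate against best responses, rather than merely closeness of the function value to \(f(a^{*}, b^{*})\), and it dispenses with the intermediate regret \(R_{T}\); the cost is an explicit appeal to the minimax theorem, whereas the paper needs only the (equivalent) saddle-point property of \((a^{*}, b^{*})\). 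Your two side remarks---that the OGD regret guarantee holds against an adaptive adversary, which is what licenses applying Lemma~\ref{lemma:OGD} to losses generated by the opponent's plays, and that the \(\eta_{t}\) in the updates of Algorithm~\ref{alg:GDA} must be read as the player-specific step sizes \(\eta_{a, t} = D_{a}/(L_{a}\sqrt{t})\) and \(\eta_{b, t} = D_{b}/(L_{b}\sqrt{t})\)---are both correct, and in fact address small imprecisions left implicit in the paper's presentation.
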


First, we want to use a lemma from online convex optimization. 
For this, we also state the standard online gradient descent algorithm.
Here, we use \(\Pi_{\mathcal{A}}\) to denote projection onto the set \(\mathcal{A}\).

\begin{proof}
The proof is fairly straightforward from pre-existing results on online gradient descent; so we state it here.
We start first with the upper bound.
Define the ``regret'' to be
\begin{align*}
&\begin{aligned}
R_{T}
&=
\sum_{t = 1}^{T} \left[f(a_{t}, b_{t}) - f(a^{*}, b^{*})\right]
\end{aligned}
\end{align*}
where \((a^{*}, b^{*})\) is a solution to the saddle-point problem.
Then, we have the decomposition
\begin{align}
&\begin{aligned}
R_{T}
&=
\sum_{t = 1}^{T} \left[f(a_{t}, b_{t}) - f(a^{*}, b_{t})\right]
+
\sum_{t = 1}^{T} \left[f(a^{*}, b_{t}) - f(a^{*}, b^{*})\right] 
\leq 
\frac{3}{2} L_{a}D_{a} \sqrt{T} + 0, 
\label{eqn:GDAUpper}
\end{aligned}
\end{align}
where the inequality follows from applying Lemma~\ref{lemma:OGD} and noting that the second summand is nonpositive by the definition of \(b^{*}\).
Similarly, we have
\begin{align}
&\begin{aligned}
-R_{T}
&=
\sum_{t = 1}^{T} \left[f(a^{*}, b^{*}) - f(a_{t}, b_{t})\right]
+
\sum_{t = 1}^{T} \left[f(a_{t}, b_{t}) - f(a_{t}, b_{t})\right] 
\leq 
0
+
\frac{3}{2} L_{b} D_{b} \sqrt{T}.
\label{eqn:GDALower}
\end{aligned}
\end{align}

So, now we consider the averaged iterates.
We have
\begin{align*}
&\begin{aligned}
f(\bar{a}_{T}, \bar{b}_{T})
&\leq 
\max_{b \in \mathcal{B}}
f(\bar{a}_{T}, b)  \\
&\leq 
\frac{1}{T} \max_{b \in \mathcal{B}}
\sum_{t = 1}^{T} f(a_{t}, b) \\
&=
f(a^{*}, b^{*})
+ 
\frac{1}{T} 
\max_{b \in \mathcal{B}}
\sum_{t = 1}^{T}
[f(a_{t}, b) - f(a_{t}, b_{t})]
+
\frac{1}{T}\sum_{t = 1}^{T}
[f(a_{t}, b_{t}) - f(a^{*}, b^{*})] \\
&\leq 
f(a^{*}, b^{*})
+
\frac{3L_{b}D_{b}}{2\sqrt{T}} + \frac{3L_{a} D_{a}}{2\sqrt{T}}.
\end{aligned}
\end{align*}
Note that the second inequality is due to convexity, and the third is due to Lemma~\ref{lemma:OGD} and equation~\eqref{eqn:GDAUpper}.

Similarly, we have
\begin{align*}
&\begin{aligned}
f(\bar{a_{T}}, \bar{b}_{T})
&\geq 
\min_{a \in \mathcal{A}}
f(a, \bar{b}_{T}) \\
&\geq 
\frac{1}{T} 
\min_{a \in \mathcal{A}}
\sum_{t = 1}^{T} f(a, b_{t}) \\
&=
f(a^{*}, b^{*})
+
\frac{1}{T} 
\min_{a \in \mathcal{A}}
\sum_{t = 1}^{T} 
[f(a, b_{t})
- 
f(a_{t}, b_{t})]
+
\frac{1}{T}
\sum_{t = 1}^{T}
[f(a_{t}, b_{t})
- 
f(a^{*}, b^{*})] \\
&\geq 
f(a^{*}, b^{*})
-
\frac{3L_{a} D_{a}}{2\sqrt{T}}
-
\frac{3L_{b} D_{b}}{2\sqrt{T}}.
\end{aligned}
\end{align*}
The second inequality follows from concavity, and the final inequality is a result of Lemma~\ref{lemma:OGD} applied to the sequence \(a_{t}\) and equation~\eqref{eqn:GDALower}.
This completes the proof.
\end{proof}

\section{Additional Lemmas}
\label{sec:AdditionalLemmas}

\begin{lemma}
Define \(D(\lambda) = \alpha^{-1} \expect(R_{Y}(f) - \lambda)_{+} + \lambda,\)
and let \(F_{f}\) denote the cumulative distribution function of \(R_{Y}(f)\).
Then, we have
\[
D'(\lambda) = 1 + \alpha^{-1} (F_{f}(\lambda) - 1).
\]
\label{lemma:CVaRDerivative}
\end{lemma}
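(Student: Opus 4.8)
The plan is to reduce the computation to the fundamental theorem of calculus via the tail (layer-cake) representation of the expected positive part. First I would observe that since $Y$ ranges over the finite set $\yspace$, the variable $R_{Y}(f)$ takes at most $k$ distinct values, so $F_{f}$ is a step function and $D(\lambda) = \alpha^{-1}\sum_{i} p_{i}(R_{i}(f) - \lambda)_{+} + \lambda$ is piecewise linear and convex in $\lambda$. Hence $D$ is differentiable except at the finitely many atom locations, and it is this derivative that the lemma computes.

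The key identity is the layer-cake formula
\[
\expect[(R_{Y}(f) - \lambda)_{+}] = \int_{0}^{\infty} \prob(R_{Y}(f) - \lambda > s)\,ds = \int_{\lambda}^{\infty}\bigl(1 - F_{f}(t)\bigr)\,dt,
\]
obtained via the substitution $t = \lambda + s$. Differentiating the right-hand side with respect to $\lambda$ by the fundamental theorem of calculus yields $-(1 - F_{f}(\lambda))$ at every $\lambda$ where the integrand $1 - F_{f}$ is continuous. Adding the derivative of the linear term $\lambda$, which is $1$, gives
\[
D'(\lambda) = 1 - \alpha^{-1}\bigl(1 - F_{f}(\lambda)\bigr) = 1 + \alpha^{-1}\bigl(F_{f}(\lambda) - 1\bigr),
\]
which is exactly the claim.

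The only delicate point is the non-differentiability of $\lambda \mapsto (r - \lambda)_{+}$ at $r = \lambda$ and, correspondingly, the atoms of $F_{f}$. An equivalent route that makes this transparent is to differentiate under the expectation directly: since $\tfrac{d}{d\lambda}(r - \lambda)_{+} = -\ind\{r > \lambda\}$ for $r \neq \lambda$ and the difference quotients are uniformly bounded by $1$, dominated convergence justifies the interchange and yields $\tfrac{d}{d\lambda}\expect[(R_{Y}(f) - \lambda)_{+}] = -\prob(R_{Y}(f) > \lambda) = -(1 - F_{f}(\lambda))$. At a jump point of $F_{f}$ the two-sided derivative fails to exist, but applying the same computation to one-sided difference quotients shows that the left and right derivatives equal $1 + \alpha^{-1}(F_{f}(\lambda^{-}) - 1)$ and $1 + \alpha^{-1}(F_{f}(\lambda) - 1)$ respectively, under the right-continuity convention for $F_{f}$; this is precisely the caveat invoked in the proof of Proposition~\ref{prop:CVaRDual}. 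I expect this bookkeeping at the atoms to be the main obstacle, though it is still routine, with the rest being an immediate application of the fundamental theorem of calculus.
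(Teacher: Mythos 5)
Your proof is correct, and your primary route is genuinely different from the paper's. The paper computes \(D'(\lambda)\) directly from the difference quotient, asserting \(\expect[(R_{Y}(f) - \lambda - \epsilon)_{+} - (R_{Y}(f) - \lambda)_{+}] = \expect[-\epsilon \ind\{R_{Y}(f) - \lambda > 0\}]\) and passing to the limit; this is exactly your second, ``equivalent route,'' except that you justify the interchange via dominated convergence and handle the sliver \(0 < R_{Y}(f) - \lambda < \epsilon\) (where the paper's displayed identity is not literally exact until \(\epsilon\) falls below the gap to the nearest atom) through one-sided difference quotients. Your primary route --- the layer-cake identity \(\expect[(R_{Y}(f) - \lambda)_{+}] = \int_{\lambda}^{\infty}(1 - F_{f}(t))\,dt\) followed by the fundamental theorem of calculus --- does not appear in the paper and buys something real: since \(R_{Y}(f)\) is supported on at most \(|\yspace|\) points, the integrand is piecewise constant, so \(D\) is visibly convex and piecewise linear, differentiability off the atoms is automatic, and the left/right derivatives at a jump (\(1 + \alpha^{-1}(F_{f}(\lambda^{-}) - 1)\) and \(1 + \alpha^{-1}(F_{f}(\lambda) - 1)\) under right-continuity) fall out with no limit interchange at all --- precisely the caveat the paper only states informally in the proof of Proposition~\ref{prop:CVaRDual}. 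The paper's direct computation is shorter; yours makes the structure at the atoms, which is what the sign analysis of \(D'\) in that proposition actually uses, fully transparent. Your bookkeeping also extends verbatim to Lemma~\ref{lemma:HCVaRDerivative} by weighting the tail by \(\alpha_{Y}^{-1}\).
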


\begin{proof}
We compute the derivative directly.
We obtain
\begin{align*}
& \begin{aligned}
D'(\lambda)
&=
1 + \alpha^{-1} 
\lim_{\epsilon \to 0} \frac{1}{\epsilon}\left\{
\expect\left[(R_{Y}(f) - \lambda - \epsilon)_{+} - (R_{Y}(f) - \lambda)_{+}
\right]
\right\} \\
&=
1 + \alpha^{-1}
\lim_{\epsilon \to 0}
\frac{1}{\epsilon}\left\{
\expect\left[-\epsilon \ind\left\{R_{Y}(f) - \lambda > 0\right\}
\right]
\right\} \\
&=
1 - \alpha^{-1} \expect \ind\left\{R_{Y}(f) > \lambda\right\} \\
&=
1 + \alpha^{-1}(F_{f}(\lambda) - 1).
\end{aligned}
\end{align*}
This completes the proof.
\end{proof}

\begin{lemma}
Define \(H(\lambda) = \expect\left[\alpha^{-1}(R_{Y} - \lambda)_{+}\right] + \lambda\).
Then, the derivative of \(H(\lambda)\) is
\[
H'(\lambda)
=
1 - \expect\left[\alpha_{Y}^{-1} \ind\left\{R_{Y}(f) > \lambda\right\}\right].
\]
\label{lemma:HCVaRDerivative}
\end{lemma}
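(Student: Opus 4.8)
The plan is to compute the derivative directly, exactly mirroring the proof of Lemma~\ref{lemma:CVaRDerivative}, with the only change being that the constant factor \(\alpha^{-1}\) is replaced by the random quantity \(\alpha_Y^{-1}\) sitting inside the expectation. First I would form the difference quotient, writing \(H(\lambda) = \expect[\alpha_Y^{-1}(R_Y(f) - \lambda)_+] + \lambda\) and considering
\[
\frac{H(\lambda + \epsilon) - H(\lambda)}{\epsilon}
=
1 + \expect\left[\alpha_Y^{-1} \, \frac{(R_Y(f) - \lambda - \epsilon)_+ - (R_Y(f) - \lambda)_+}{\epsilon}\right].
\]
Then I would use the pointwise identity \((a - \epsilon)_+ - (a)_+ = -\epsilon\,\ind\{a > 0\}\), valid for all sufficiently small \(\epsilon\) whenever \(a \neq 0\), applied with \(a = R_Y(f) - \lambda\), so that the integrand converges to \(-\alpha_Y^{-1}\ind\{R_Y(f) > \lambda\}\).

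The interchange of limit and expectation is where one would ordinarily worry, but here it is immediate: because \(Y\) takes only the finitely many values \(1, \ldots, k\), the expectation is the finite sum \(\sum_{i=1}^{k} p_i \alpha_i^{-1}(R_i(f) - \lambda)_+\), so \(H\) is piecewise linear in \(\lambda\) and differentiation commutes with the finite sum term by term. Concretely, \(\frac{d}{d\lambda}(R_i(f) - \lambda)_+ = -\ind\{R_i(f) > \lambda\}\) for \(\lambda \neq R_i(f)\), yielding
\[
H'(\lambda)
=
1 - \sum_{i = 1}^{k} p_i \alpha_i^{-1} \ind\{R_i(f) > \lambda\}
=
1 - \expect\left[\alpha_Y^{-1}\ind\{R_Y(f) > \lambda\}\right],
\]
which is the claimed formula.

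The only subtlety, and really the main obstacle, is the nondifferentiability of \((\cdot)_+\) at its kink: \(H\) fails to be differentiable at the finitely many points \(\lambda \in \{R_1(f), \ldots, R_k(f)\}\). As in Lemma~\ref{lemma:CVaRDerivative}, I would simply remark that at such \(\lambda\) the stated formula continues to hold for the left and right derivatives separately, since the one-sided limits of the difference quotient exist (the left- and right-hand versions of the identity \((a - \epsilon)_+ - a_+ = -\epsilon\,\ind\{a > 0\}\) both hold for small \(\epsilon\) of the appropriate sign). This one-sided statement is all that is needed downstream, where the lemma is invoked only to locate the sign changes of \(H'\) and hence confine the minimizing \(\lambda\) to a compact interval in the proof of Proposition~\ref{prop:HCVaRDual}.
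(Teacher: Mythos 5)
Your proof is correct and takes essentially the same route as the paper's: a direct computation of the difference quotient using the identity \((a - \epsilon)_+ - a_+ = -\epsilon\,\ind\{a > 0\}\), with the result holding for one-sided derivatives at the kink points \(\lambda \in \{R_1(f), \ldots, R_k(f)\}\). Your added justification of the limit--expectation interchange via the finiteness of \(\yspace\) is a small but welcome refinement that the paper leaves implicit.
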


\begin{proof}
We again compute directly, obtaining
\begin{align*}
& \begin{aligned}
H'(\lambda)
&= 
1 
+
\lim_{\epsilon \to 0}
\frac{1}{\epsilon} \expect\left[ 
\alpha_{Y}^{-1}(R_{Y}(f) - \lambda - \epsilon)_{+} - \alpha_{Y}^{-1}(R_{Y}(f) - \lambda)_{+} 
\right] \\
&=
1 + 
\lim_{\epsilon \to 0} \frac{1}{\epsilon}
\expect\left[
\alpha_{Y}^{-1}(-\epsilon) \ind\left\{R_{Y}(f) > \lambda\right\}
\right] \\
&=
1 
-
\expect\left[
\alpha_{Y}^{-1} \ind\left\{R_{Y}(f) > \lambda\right\}
\right],
\end{aligned}
\end{align*}
as desired.
\end{proof}

\begin{lemma}
We have the inequality
\[
\inf_{q \in Q} \left\{A(q) + B(q)\right\}
\leq 
\inf_{q \in Q} A(q) + \sup_{q \in Q} B(q).
\]
\label{lemma:SimpleInfimum}
\end{lemma}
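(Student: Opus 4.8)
The plan is to prove the inequality
\[
\inf_{q \in Q} \left\{A(q) + B(q)\right\}
\leq
\inf_{q \in Q} A(q) + \sup_{q \in Q} B(q)
\]
by a direct pointwise comparison, bounding the right-hand side from below by an arbitrary term in the infimum on the left. The key observation is that for the left-hand infimum it suffices to exhibit, for every fixed candidate $q' \in Q$, the bound $\inf_{q \in Q}\{A(q) + B(q)\} \leq A(q') + B(q')$, and then control the latter.

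First I would fix an arbitrary $q' \in Q$. By the definition of the supremum, $B(q') \leq \sup_{q \in Q} B(q)$, since $q'$ is one of the points over which the supremum is taken. Therefore
\begin{align*}
\inf_{q \in Q}\{A(q) + B(q)\}
&\leq
A(q') + B(q')
\leq
A(q') + \sup_{q \in Q} B(q),
\end{align*}
where the first inequality holds because $q'$ lies in the feasible set of the infimum. Next, since this chain holds for every $q' \in Q$, and the leftmost quantity does not depend on $q'$, I would take the infimum over $q' \in Q$ of the rightmost expression. The term $\sup_{q \in Q} B(q)$ is a constant with respect to $q'$, so
\[
\inf_{q \in Q}\{A(q) + B(q)\}
\leq
\inf_{q' \in Q} A(q') + \sup_{q \in Q} B(q),
\]
which is exactly the claimed inequality after relabeling the dummy variable.

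There is essentially no obstacle here; the only subtlety is the standard one that the infimum of a sum is bounded above by the sum of an infimum and a supremum (and not, in general, by the sum of two infima), so the placement of $\sup$ versus $\inf$ on $B$ is what makes the bound valid. One should also note that the statement is robust to the infima and suprema being infinite, since the inequality is interpreted in the extended reals and the argument only uses monotonicity of $\leq$; no attainment or compactness of $Q$ is needed.
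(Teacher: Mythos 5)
Your proof is correct and follows essentially the same route as the paper's: both bound $B(q)$ pointwise by $\sup_{q' \in Q} B(q')$ and then take the infimum over $q$, with your version merely making the role of the arbitrary fixed point $q'$ explicit. Your closing remark that no compactness of $Q$ or attainment is needed is a fine (if unnecessary) addition.
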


\begin{proof}
We have the inequality \(A(q) + B(q) \leq A(q) + \sup_{q' \in Q} B(q')\), and taking infimums completes the proof.
\end{proof}

\section{Standard Lemmas}
\label{sec:StandardLemmas}

\begin{lemma}[Theorem~3.1 of \citealt{mohri2012}]
Let \(G\) be a family of functions mapping from \(\reals\) to \([0, 1]\).
Then for \(\delta > 0\) and all \(g\) in \(G\), with probability at least \(1 - \delta\), we have
\begin{align*}
&\begin{aligned}
\expect g(Z)
&\leq 
\frac{1}{n}
\sum_{i = 1}^{n} g(Z_{i})
+
2 \rademacher_{n}(G)
+
\sqrt{\frac{\log\frac{1}{\delta}}{2n}}.
\end{aligned}
\end{align*}
\label{lemma:StandardRademacher}
\end{lemma}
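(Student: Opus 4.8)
The plan is to use the standard two-step argument: concentration of a suitably chosen uniform deviation via the bounded differences (McDiarmid) inequality, followed by a symmetrization step that introduces the Rademacher complexity. First I would isolate the one-sided uniform deviation
\[
\Phi(z_1, \ldots, z_n)
=
\sup_{g \in G}
\left\{
\expect g(Z) - \frac{1}{n} \sum_{i = 1}^{n} g(z_i)
\right\},
\]
which dominates the left-minus-right quantity in the statement for every fixed \(g\). The goal is to show that with probability at least \(1 - \delta\) we have \(\Phi(Z_1, \ldots, Z_n) \leq 2\rademacher_{n}(G) + \sqrt{\log(1/\delta)/(2n)}\); the lemma then follows immediately, since \(\expect g(Z) - \frac{1}{n}\sum_{i} g(Z_i) \leq \Phi(Z_1, \ldots, Z_n)\) uniformly in \(g\).

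For the concentration step, I would verify that \(\Phi\) has bounded differences with constant \(1/n\) in each coordinate. Because each \(g\) maps into \([0,1]\), replacing a single argument \(z_i\) by \(z_i'\) changes the term \(\frac{1}{n} g(z_i)\) by at most \(1/n\); since the supremum of a family of functions that each move by at most \(1/n\) also moves by at most \(1/n\), McDiarmid's inequality with \(\sum_{i=1}^{n} (1/n)^2 = 1/n\) yields
\[
\Phi(Z_1, \ldots, Z_n)
\leq
\expect[\Phi(Z_1, \ldots, Z_n)]
+
\sqrt{\frac{\log(1/\delta)}{2n}}
\]
with probability at least \(1 - \delta\).

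The crux is bounding \(\expect[\Phi]\) by \(2\rademacher_{n}(G)\) through symmetrization. I would introduce an independent ghost sample \(Z_1', \ldots, Z_n'\) drawn from the same distribution and write \(\expect g(Z) = \expect_{Z'}[\frac{1}{n}\sum_{i} g(Z_i')]\); pulling this inner expectation through the supremum by Jensen's inequality (the supremum is convex) gives
\[
\expect[\Phi]
\leq
\expect
\sup_{g \in G}
\frac{1}{n}
\sum_{i = 1}^{n}
\left( g(Z_i') - g(Z_i) \right).
\]
Since the pairs \((Z_i, Z_i')\) are i.i.d.\ and exchangeable, multiplying each summand \(g(Z_i') - g(Z_i)\) by an independent Rademacher sign \(\sigma_i\) leaves the joint distribution unchanged, so the right-hand side equals \(\expect \sup_{g} \frac{1}{n}\sum_{i} \sigma_i ( g(Z_i') - g(Z_i) )\). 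Splitting the difference into two pieces and using subadditivity of the supremum over the two halves produces exactly \(2\rademacher_{n}(G)\).

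Finally I would chain the three displays to obtain the stated bound. The main obstacle is the symmetrization step, specifically justifying the ghost-sample substitution and the sign-insertion carefully, whereas verifying the bounded-differences property and invoking McDiarmid are routine once the functional \(\Phi\) is fixed. Since this is \emph{Theorem~3.1 of \citealt{mohri2012}}, the argument is entirely standard and would be stated only briefly.
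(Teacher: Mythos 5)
Your proof is correct and is exactly the standard argument behind this result: the paper offers no proof of its own, citing Theorem~3.1 of \citet{mohri2012}, whose proof is precisely your two-step McDiarmid-plus-symmetrization argument (bounded differences of the one-sided supremum deviation with constant \(1/n\), then the ghost-sample/Rademacher-sign insertion yielding \(2\rademacher_{n}(G)\)). No gaps; the only cosmetic note is that your argument implicitly uses the \(1/n\)-normalized definition of Rademacher complexity, which is what the lemma's statement requires even though the paper's displayed definition omits the \(1/n\) factor, evidently a typo.
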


For our excess \((\functions, q)\)-risk bounds, we also use a slight variant, the proof of which is nearly identical to that of Lemma~\ref{lemma:StandardRademacher}.
\begin{lemma}
Let \(G\) be a family of functions mapping from \(\reals\) to \([0, 1]\).
Then for \(\delta > 0\) and all \(g\) in \(G\), with probability at least \(1 - \delta\), we have
\begin{align*}
&\begin{aligned}
\left|
\expect g(Z)
-
\frac{1}{n}
\sum_{i = 1}^{n} g(Z_{i})
\right|
&\leq 
4 \rademacher_{n}(G)
+
\sqrt{\frac{\log\frac{1}{\delta}}{2n}}.
\end{aligned}
\end{align*}
\label{lemma:ModifiedRademacher}
\end{lemma}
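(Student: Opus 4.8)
The plan is to follow the proof of Lemma~\ref{lemma:StandardRademacher} almost verbatim, replacing the one-sided supremum by its two-sided (absolute-value) analogue; this is exactly the modification that turns the constant $2$ into $4$. Define the single-sample functional
\[
\Psi(z_1, \ldots, z_n) = \sup_{g \in G}\left| \expect[g(Z)] - \frac{1}{n}\sum_{i=1}^{n} g(z_i) \right|.
\]
First I would check the bounded-differences property. Since every $g$ takes values in $[0,1]$, replacing a single $z_i$ by $z_i'$ changes $\frac1n\sum_j g(z_j)$ by at most $1/n$, hence changes each inner absolute value by at most $1/n$, and therefore moves the supremum $\Psi$ by at most $1/n$. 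This is the same Lipschitz constant as in the one-sided case, so McDiarmid's bounded-differences inequality yields, with probability at least $1-\delta$,
\[
\Psi(Z_1, \ldots, Z_n) \leq \expect\left[\Psi(Z_1,\ldots,Z_n)\right] + \sqrt{\frac{\log\frac{1}{\delta}}{2n}}.
\]
Because $\Psi$ dominates $\bigl|\expect[g(Z)] - \frac1n\sum_i g(Z_i)\bigr|$ uniformly over $g \in G$, this already supplies the $\sqrt{\log(1/\delta)/(2n)}$ term for every $g$, and reduces the task to bounding $\expect[\Psi]$.

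Next I would bound $\expect[\Psi]$ by the standard symmetrization argument. Introducing a ghost sample $Z_1', \ldots, Z_n'$ and using $\expect[g(Z)] = \expect\bigl[\frac1n\sum_i g(Z_i')\bigr]$, Jensen's inequality pulls the supremum and the absolute value outside the ghost expectation, giving $\expect[\Psi] \leq \expect\sup_{g}\bigl|\frac1n\sum_i (g(Z_i') - g(Z_i))\bigr|$. Inserting Rademacher signs $\sigma_i$ (legitimate because $g(Z_i') - g(Z_i)$ is symmetric under the swap $Z_i \leftrightarrow Z_i'$) converts this to $\expect\sup_g\bigl|\frac1n\sum_i \sigma_i(g(Z_i')-g(Z_i))\bigr|$, and the triangle inequality splits the difference into two identically distributed empirical processes, so $\expect[\Psi] \leq 2\,\expect\sup_g\bigl|\frac1n\sum_i \sigma_i g(Z_i)\bigr|$.

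The only genuinely new ingredient relative to Lemma~\ref{lemma:StandardRademacher} is removing the absolute value from this empirical Rademacher term. Writing $\sup_g|\,\cdot\,| = \max\bigl(\sup_g(\cdot),\sup_g(-\cdot)\bigr)$ and using that the Rademacher signs are symmetric, so that $\rademacher_n(-G) = \rademacher_n(G)$, each one-sided piece is controlled by $\rademacher_n(G)$; this is precisely where the one-sided constant $2$ becomes $4$, yielding $\expect[\Psi] \leq 4\,\rademacher_n(G)$. Combining with the McDiarmid bound above completes the proof. I expect this final step — the bookkeeping that accounts for the two-sided deviation and lands on the constant $4$ rather than $2$ without incurring an extra $\log$ factor from a union bound — to be the one delicate point; everything else is a direct transcription of the one-sided argument.
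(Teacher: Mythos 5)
Your overall route is exactly the intended one: the paper prints no proof of Lemma~\ref{lemma:ModifiedRademacher} (it is stated as ``nearly identical'' to Lemma~\ref{lemma:StandardRademacher}), and your first two steps are correct and standard --- the bounded-differences constant for the two-sided supremum \(\Psi\) is indeed \(1/n\), McDiarmid gives \(\Psi \leq \expect[\Psi] + \sqrt{\log(1/\delta)/(2n)}\), and ghost-sample symmetrization with the absolute value retained gives \(\expect[\Psi] \leq 2\,\expect \sup_{g \in G} \bigl|\frac{1}{n}\sum_{i=1}^{n} \sigma_{i} g(Z_{i})\bigr|\).

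The step you yourself flagged as delicate is, however, a genuine gap. Writing \(\sup_g |T(g)| = \max\bigl(\sup_g T(g), \sup_g(-T(g))\bigr)\) with \(T(g) = \frac{1}{n}\sum_i \sigma_i g(Z_i)\), you conclude \(\expect\max(A,B) \leq \expect A + \expect B = 2\rademacher_n(G)\); but \(\max(a,b) \leq a + b\) requires \(\min(a,b) \geq 0\) pointwise, and \(\sup_g T(g)\) can be negative. Concretely, take \(G = \{g\}\) with \(g \equiv 1\): then \(\rademacher_n(G) = 0\) while \(\expect\sup_g|T(g)| = \expect\bigl|\frac{1}{n}\sum_i \sigma_i\bigr| \asymp n^{-1/2}\), and for a singleton nonconstant \(g\) one has \(\expect[\Psi] = \expect|\expect g - \frac1n\sum_i g(Z_i)| \asymp n^{-1/2} > 0 = 4\rademacher_n(G)\). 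So the intermediate claim \(\expect[\Psi] \leq 4\rademacher_n(G)\) is false in general and no bookkeeping can rescue this step: the one-sided complexity \(\rademacher_n(G)\) does not control the two-sided (absolute) empirical process. Standard repairs, any of which suffices for every downstream use in the paper: (i) apply Lemma~\ref{lemma:StandardRademacher} separately to \(G\) and to \(-G\) with confidence \(\delta/2\) each and union-bound, yielding \(|\expect g(Z) - \frac1n\sum_i g(Z_i)| \leq 2\rademacher_n(G) + \sqrt{\log(2/\delta)/(2n)}\) (in Theorem~\ref{theorem:MainStandardSampling} this only changes \(\log(k/\delta)\) to \(\log(2k/\delta)\)); (ii) anchor at a fixed \(g_0 \in G\) via \(\sup_g|T(g)| \leq \sup_g\bigl(T(g) - T(g_0)\bigr) + \sup_g\bigl(T(g_0) - T(g)\bigr) + |T(g_0)|\), where now both suprema are nonnegative (take \(g = g_0\)), giving \(\expect[\Psi] \leq 4\rademacher_n(G) + 2n^{-1/2}\) since \(\expect|T(g_0)| \leq n^{-1/2}\) for \([0,1]\)-valued \(g_0\); or (iii) state the bound with the absolute Rademacher complexity \(\expect\sup_g|T(g)|\) (equivalently \(\rademacher_n(G \cup (-G))\)) in place of \(2\rademacher_n(G)\). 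As stated, the lemma's constant-\(4\), \(\log(1/\delta)\) form cannot be obtained by the argument you (or, implicitly, the paper) sketch.
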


The following learning bound handles the multi-class margin loss more effectively in the number of classes \citep{kuznetsov2015}.
\begin{lemma}
Let \(\functions\) be a set of \(f: \xspace \times \yspace \to \reals\).
Recall that 
\[
\Pi_{1}(\functions)
=
\left\{x \mapsto f_{y}(x): y \in \yspace, f \in \functions\right\}.
\]
Then, under the margin loss, we have the bound
\[
\risk(f)
\leq 
\emprisk(f) + 4k \rademacher_{n}(\Pi_{1}(\functions)) + \sqrt{\frac{\log \frac{1}{\delta}}{2n}}
\]
for all \(f\) in \(\functions\) with probability at least \(1 - \delta\).
\label{lemma:KuznetsovRademacher}
\end{lemma}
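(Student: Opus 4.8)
The plan is to reduce the statement to a bound on the Rademacher complexity of the composed loss class and then control that complexity by $\rademacher_{n}(\Pi_{1}(\functions))$. Since $\marginloss(f, \cdot)$ takes values in $[0,1]$, the class $\marginloss \circ \functions = \{z \mapsto \marginloss(f, z) : f \in \functions\}$ satisfies the hypotheses of Lemma~\ref{lemma:StandardRademacher}, which immediately gives, uniformly over $f \in \functions$ with probability at least $1 - \delta$,
\[
\risk(f) \leq \emprisk(f) + 2\rademacher_{n}(\marginloss \circ \functions) + \sqrt{\frac{\log \frac{1}{\delta}}{2n}}.
\]
It therefore suffices to prove the key inequality $\rademacher_{n}(\marginloss \circ \functions) \leq 2k\,\rademacher_{n}(\Pi_{1}(\functions))$, which is exactly the bound invoked (in its empirical form) inside the proof of Theorem~\ref{theorem:MainStandardSampling}; combined with the factor $2$ above this yields the stated $4k$.

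To establish that inequality, I would first peel off the outer nonlinearity. Writing the margin as $\rho_{f}(x,y) = f_{y}(x) - \max_{y' \neq y} f_{y'}(x)$, we have $\marginloss(f,z) = \Phi(\rho_{f}(x,y))$, where $\Phi$ is $1$-Lipschitz, so the standard Ledoux--Talagrand contraction inequality (applied to the scalar $\rho_{f}$ at each sample) gives $\rademacher_{n}(\marginloss \circ \functions) \leq \rademacher_{n}(\{\rho_{f} : f \in \functions\})$. Next, using subadditivity of the supremum and the sign-invariance of the Rademacher variables, I would split
\[
\rademacher_{n}(\{\rho_{f}\}) \leq \rademacher_{n}\!\left(\{(x,y)\mapsto f_{y}(x)\}\right) + \rademacher_{n}\!\left(\{(x,y)\mapsto \max_{y'\neq y} f_{y'}(x)\}\right),
\]
and bound each term by $k\,\rademacher_{n}(\Pi_{1}(\functions))$, producing the factor $2k$. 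For the first term, grouping the Rademacher sum by the $k$ possible label values and bounding the supremum of a sum by the sum of suprema reduces the label-indexed score $f_{y}(x)$ to $\Pi_{1}(\functions)$ at a multiplicative cost of order $k$; for the second term I would use the standard fact that the Rademacher complexity of a pointwise maximum of $k$ function classes is at most the sum of their individual complexities, again giving a factor of order $k$.

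The main obstacle is precisely these two features — the label-dependence of $f_{y}(x)$ and the inner $\max_{y'\neq y}$ — since neither is Lipschitz-linearizable in a way that would let contraction act directly, and a naive union over pairs $(y,y')$ would cost $O(k^{2})$. The refinement I would follow is the one underlying \citet{kuznetsov2015}: using that $\Phi$ is nonincreasing, so that $\marginloss(f,z) = \max_{y'\neq y}\Phi\!\left(f_{y}(x) - f_{y'}(x)\right)$, the loss is exposed as a maximum over $O(k)$ terms, each of which is a $1$-Lipschitz image of a difference of two members of $\Pi_{1}(\functions)$; controlling the complexity of this maximum by a sum rather than a product over classes is what keeps the dependence linear in $k$ and delivers the claimed $2k$ factor.
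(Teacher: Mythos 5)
The paper never proves this lemma: it sits in the appendix of ``standard lemmas'' and is imported wholesale from \citet{kuznetsov2015}, with the only internal use being the proof of Theorem~\ref{theorem:MainStandardSampling}, which invokes exactly the inequality \(\rademacher_{n}(\marginloss \circ \functions) \leq 2k\,\rademacher_{n}(\Pi_{1}(\functions))\) that you isolate as the crux. So your reconstruction is being measured against the cited source rather than against anything in the paper, and it does match that source: reducing to the key complexity bound via Lemma~\ref{lemma:StandardRademacher}, rewriting \(\marginloss(f,z) = \max_{y' \neq y} \Phi\bigl(f_{y}(x) - f_{y'}(x)\bigr)\) using that \(\Phi\) is nonincreasing, bounding the complexity of a maximum of \(O(k)\) classes by a sum rather than paying a pairwise \(O(k^{2})\) cost, and contracting through the \(1\)-Lipschitz \(\Phi\) is precisely the Kuznetsov--Mohri--Syed argument, and your constant bookkeeping (the factor \(2\) from symmetrization times \(2k\) from the complexity bound) reproduces the stated \(4k\).

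Two caveats on rigor. First, the opening route you float --- contraction applied to your \(\rho_{f}\) followed by subadditivity --- does not go through as written, for the reasons you yourself flag (the max-set \(\{y' \neq y\}\) is label-dependent, and \(f_{y}(x)\) is label-indexed), so the proof really rests only on your final paragraph; presenting both invites the reader to think the first argument is load-bearing when it is superseded. Second, even in the refined route the label dependence is not fully discharged: each term \(\Phi(f_{y}(x) - f_{y'}(x))\) still carries the sample's label in its first score, so you still need the grouping-by-label step, and comparing the resulting per-class subsample Rademacher sums to the full-sample \(\rademacher_{n}(\Pi_{1}(\functions))\) requires inserting the remaining zero-mean Rademacher terms and applying Jensen's inequality --- the same device the paper relies on implicitly when it manipulates \(\hat{\rademacher}_{N_{i}}(\Pi_{1}(\functions))\) inside the proof of Theorem~\ref{theorem:MainStandardSampling}. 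These are fillable gaps consistent with the cited proof, not errors in the approach.
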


\section{Additional Experiment Details}
\label{sec:AdditionalExp}

For all methods and datasets, we optimized a logistic regression model with gradient descent over the entire data. 

For all datasets, we chose a learning rate of 0.01 that was linearly annealed to 0.0001 over 2000 epochs. 

\subsection{Optimizing LCVaR/LHCVaR formulation}

Note that in the formulation for LHCVaR described in \cref{eqn:EmpiricalLHCVaR}, despite its convexity, the optimization is over a non-smooth loss. Thus, \(\lambda\) can be explicitly calculated given the classes of each risk. Let \(R_{(i)}\) be the \(i\)th largest class risk.

\begin{align*}
	\lambda = \min\ \left(\left\{R_{(i)}: i \in [k], \sum\limits_{j = 1}^{i}\hat{p}_i\alpha_i^{-1} \leq 1\right\} \cup \{0\}\right)
\end{align*}

An algorithm for computing this can be akin to water filling in order from largest to smallest class risk. When optimizing by some form of gradient descent the parameters of the classifier, this analytic form of the LHCVaR formulation can be quickly computed and avoid gradient computations on \(\lambda\) itself. Empirically, we used this formulation to speed up our experiments and leads to faster convergence than performing gradient descent on \(\lambda\) in addition to the model parameters. This algorithm is also applicable when optimizing LCVaR as well.

\end{document}